\def\boxit#1{\vbox{\hrule\hbox{\vrule\kern6pt \vbox{\kern6pt#1\kern5pt}
\kern6pt\vrule}\hrule}}
\newcommand{\by}{{\boldsymbol y}}
\newcommand{\bx}{{\boldsymbol x}}
\newcommand{\bb}{{\boldsymbol b}}
\newcommand{\bw}{{\boldsymbol w}}
\newcommand{\bu}{{\boldsymbol u}}
\newcommand{\mR}{\mathbb{R}}
\newcommand{\mP}{\mathbb{P}}
\newcommand{\mG}{{\cal G}}
\newcommand{\mX}{{\cal X}}
\newcommand{\bbeta}{{\boldsymbol \beta}}
\newcommand{\bgamma}{{\boldsymbol \gamma}}
\newcommand{\btheta}{{\boldsymbol \theta}}
\newtheorem{theorem}{Theorem}[section]
\newtheorem{lemma}{Lemma}[section]
\newtheorem{remark}{Remark}[section]
\begin{document}

\title{Consistent Sparse Deep Learning: Theory and Computation} 

\author{Yan Sun$^{\dag}$, Qifan Song$^{\dag}$, and Faming Liang \thanks{To whom correspondence should be addressed: Faming Liang.  F. Liang is Professor (email: fmliang@purdue.edu), Y. Sun is Graduate Student (email: sun748@purdue.edu), and Q. Song is Assistant Professor (email: qfsong@purdue.edu), Department of Statistics, Purdue University, West Lafayette, IN 47907. $^{\dag}$Sun and Song are co-first authors and they contribute equally to this paper. } }

\maketitle

\begin{abstract}
Deep learning has been the engine powering many successes of data science. However, the deep neural network (DNN), as the basic model of deep learning, is often excessively over-parameterized, causing many difficulties in training, prediction and interpretation. We propose a frequentist-like method for learning sparse DNNs and justify its consistency under the Bayesian framework: the proposed method could learn a sparse DNN with at most $O(n/\log(n))$ connections and nice theoretical guarantees such as posterior consistency, variable selection consistency and asymptotically optimal generalization bounds. In particular, we establish posterior consistency for the sparse DNN with a mixture Gaussian prior, show that the structure of the sparse DNN can be consistently determined using a Laplace approximation-based marginal posterior inclusion probability approach, and use Bayesian evidence to elicit sparse DNNs learned by an optimization method such as stochastic gradient descent in multiple runs with different initializations. The proposed method  is computationally more efficient than standard Bayesian methods for large-scale sparse DNNs.  The numerical results indicate that the proposed method can perform very well for large-scale network compression and high-dimensional nonlinear variable selection, both advancing interpretable machine learning.

{\bf Keywords:} Bayesian Evidence; Laplace Approximation; Network Compression;  Nonlinear Feature Selection; Posterior Consistency.
\end{abstract}


\section{Introduction} 
                
During the past decade, the deep neural network (DNN) has achieved great successes in solving
many complex machine learning tasks such as pattern recognition and natural 
language processing. A key factor to the successes is its superior approximation power over the shallow one \citep{montufar2014number, telgarsky2017neural, yarotsky2017error, mhaskar2017and}. 
The DNNs used in practice may 
consist of hundreds of layers and millions of parameters, 
see e.g. \cite{DNNRes2016} on image classification. 
Training and operation of DNNs of this scale entail formidable 
computational challenges. Moreover, the DNN models with massive parameters are more easily 
overfitted when the training samples are insufficient. 
DNNs are known to have many redundant parameters \citep{glorot2011deep,YoonHwang2017, scardapane2017group, 
Denil2013, ChengChang2015, MocanuLiotta2018}. For example, \cite{Denil2013} showed 
that in some networks, only 5\% of the parameters are enough to achieve acceptable models; and 
\cite{glorot2011deep} showed that sparsity (via employing a ReLU activation function) can generally improve the training and prediction performance of the DNN. 
Over-parameterization often makes the DNN model less interpretable and miscalibrated \citep{CalibrationDNN2017}, which can 
cause serious issues in human-machine trust and thus hinder applications of artificial intelligence (AI) in human life.

The desire to reduce the complexity of DNNs naturally leads to two questions: (i) 
Is a sparsely connected DNN, also known as sparse DNN, able to approximate the 
target mapping with a desired accuracy? and (ii) how to train and determine the structure of a sparse DNN?  
This paper answers these two questions in a coherent way. The proposed method is essentially a regularization method, but justified 
under the Bayesian framework. 

 The approximation power of sparse DNNs has been studied 
 in the literature from both frequentist and Bayesian perspectives. 
 From the frequentist perspective, \cite{Bolcskei2019} quantifies 
 the minimum network connectivity that guarantees uniform approximation rates 
 for a class of affine functions; and
 \cite{Schmidt-Hieber2017Nonparametric} and \cite{Bauler2019On} characterize the approximation error of a sparsely connected neural network for H{\"o}lder smooth functions.
 From the Bayesian perspective, \cite{liang2018bayesian} established 
 posterior consistency for Bayesian shallow neural networks  under mild conditions;
 and \cite{PolsonR2018} established posterior consistency for Bayesian DNNs but 
 under some restrictive conditions such as 
  a spike-and-slab prior is used for connection weights, the activation function is ReLU, and the number of input variables keeps at an order of $O(1)$ while the sample size grows to infinity.   

 The existing methods for learning sparse DNNs are usually developed separately from 
 the approximation theory. For example, \cite{alvarez2016learning}, 
 \cite{scardapane2017group} and \cite{MaZhang2019} 
 developed some regularization methods for learning sparse DNNs;
 \cite{wager2013dropout} showed that 
 dropout training is approximately equivalent to 
 an $L_2$-regularization; 
 \cite{han2015deep} introduced a deep compression 
 pipeline, where 
pruning, trained quantization and Huffman coding  work together to reduce the storage 
requirement of DNNs; \cite{liu2015sparse} proposed a sparse decomposition method to sparsify 
convolutional neural networks (CNNs); 
\cite{frankle2018lottery} considered a lottery ticket hypothesis for selecting a sparse subnetwork; and 
 \cite{ghosh2017model} proposed to learn Bayesian sparse  neural networks via node selection with a horseshoe prior under the framework of variational inference. 
 For these methods, it is generally unclear if the resulting sparse DNN is able to provide a desired approximation accuracy 
 to the true mapping and how close in structure the sparse DNN is to the underlying true DNN. 
 
 On the other hand, there are some work which developed the approximation theory for sparse DNNs but not the associated learning algorithms, see e.g., \cite{Bolcskei2019} and  \cite{PolsonR2018}. 
 An exception is \cite{liang2018bayesian}, where the population stochastic approximation Monte Carlo  (pop-SAMC) algorithm \citep{SongWL2014} was employed to learn sparse neural networks. 
 However, since the pop-SAMC algorithm belongs to the class of traditional MCMC algorithms, where the full data likelihood needs to be evaluated at each iteration,
 it is not scalable for big data problems. Moreover, it needs to run for a large number of iterations for ensuring convergence.  As 
 an alternative to overcome the convergence 
 issue of MCMC simulations, 
 the variational Bayesian method 
 \citep{JordanVB1999} has been widely used in the machine learning community.  
 Recently, it has been 
  applied to learn Bayesian neural networks (BNNs), see e.g., 
  \cite{Mnih2014} and  \cite{Blundell2015}.  However, theoretical properties of the variational posterior of the BNN are still not well understood due to its approximation nature.
 
  This paper provides a frequentist-like method for learning sparse DNNs, 
  which, with theoretical guarantee, converges to the underlying true DNN model in probability. 
  The proposed method is to first train a dense DNN using an optimization method such as stochastic gradient descent (SGD) by maximizing 
  its posterior distribution with a mixture Gaussian prior, and then sparsify 
   its structure according to the Laplace approximation of the marginal posterior inclusion 
    probabilities. Finally, Bayesian evidence is used as 
   the criterion for eliciting sparse DNNs learned by the optimization 
   method in multiple runs with different initializations. 
  To justify consistency of the sparsified  
  DNN, we first establish posterior consistency for Bayesian DNNs 
  with mixture Gaussian priors and consistency of structure selection for Bayesian DNNs 
  based on the marginal posterior inclusion probabilities, 
  and then establish consistency of 
  the sparsified DNN via Laplace approximation to the marginal posterior inclusion probabilities. In addition, we show that the Bayesian sparse DNN has asymptotically an optimal generalization bound.

 The proposed method works with various activation functions such as 
 sigmoid, tanh and ReLU, and our theory allows the number of input variables to increase 
 with the training sample size in an exponential rate. Under regularity conditions, the proposed method learns a sparse DNN of size $O(n/\log(n))$ with nice theoretical guarantees such as posterior consistency, variable selection consistency, and asymptotically optimal generalization bound. Since, for 
 the proposed method, the DNN only needs to be trained using an optimization method,
 it is computationally much more efficient than standard Bayesian methods.  
 As a by-product, this work also provides an effective method for high-dimensional nonlinear variable selection. 
  Our numerical results indicate that the proposed method can work very well for large-scale DNN compression and  high-dimensional nonlinear variable selection. For some benchmark DNN compression examples, the proposed method produced the state-of-the-art prediction accuracy using about the same amounts of parameters as the existing methods.
In summary, this paper provides a complete treatment for sparse DNNs in both theory and computation. 

 The remaining part of the paper is organized as follows. Section 2 studies the 
 consistency theory of Bayesian sparse DNNs. 
 Section 3 proposes a computational method for training sparse DNNs. 
 Section 4 presents some numerical examples. 
 Section 5 concludes the paper with a brief discussion.

\section{Consistent Sparse DNNs: Theory}

\subsection{Bayesian Sparse DNNs with mixture Gaussian Prior} \label{priorsect}

Let $D_{n}=(\boldsymbol{x}^{(i)},y^{(i)})_{i=1,...,n}$ denote a training dataset of $n$
$i.i.d$ observations, where $\boldsymbol{x}^{(i)}\in R^{p_n}$, 
$y^{(i)}\in R$, and $p_n$ denotes the dimension of input variables and is 
assumed to grow with the training sample size $n$.  
We first study the posterior approximation theory of Bayesian sparse DNNs under the 
framework 
of generalized linear models, for which the distribution of $y$ given $\bx$ is  given by 
\[
f(y|\mu^*(\bx))=\exp\{A(\mu^*(\bx))y+B(\mu^*(\bx))+C(y)\},
\]
where $\mu^*(\bx)$ denotes a nonlinear function of $\bx$, and $A(\cdot)$,
 $B(\cdot)$ and $C(\cdot)$ are appropriately defined functions.  The theoretical results presented in this work mainly focus on logistic regression models and normal linear regression models.
For logistic regression, we have  $A(\mu^*)=\mu^*$, $B(\mu^*)=-\log(1+e^{\mu^*})$,
and $C(y)=1$.
For normal regression, by introducing an extra dispersion parameter $\sigma^2$, 
we have $A(\mu^*)=\mu^*/\sigma^2$, $B(\mu^*)=-{\mu^*}^2/2\sigma^2$ 
and $C(y)=-y^2/2\sigma^2-\log(2\pi \sigma^2)/2$.
For simplicity, $\sigma^2=1$ is assumed to be known in this paper. How to extend our results to the case that $\sigma^2$ is unknown will be discussed in Remark \ref{rem3}.

We approximate $\mu^*(\bx)$ using a DNN. Consider a DNN with $H_{n}-1$ hidden layers and $L_{h}$
hidden units at layer $h$, where $L_{H_n}=1$ for the output
layer and $L_{0}=p_{n}$ for the input layer.  
Let  $\boldsymbol{w}^{h}\in \mathbb{R}^{L_{h}\times L_{h-1}}$ and $\boldsymbol{b}^{h}\in \mathbb{R}^{L_{h}\times1}$,
$h\in\{1,2,...,H_n\}$ denote the weights and bias of  layer $h$, and let 
 $\psi^{h}:R^{L_{h}\times1}\to \mathbb{R}^{L_{h}\times1}$ denote a coordinate-wise
and piecewise differentiable activation function of layer $h$.
The DNN forms a nonlinear mapping
\begin{equation} \label{appeq}
\mu(\bbeta,\boldsymbol{x})=\boldsymbol{w}^{H_{n}}\psi^{H_{n}-1}\left[\cdots\psi^{1}\left[\boldsymbol{w}^{1}\boldsymbol{x}+\boldsymbol{b}^{1}\right]\cdots\right]+\boldsymbol{b}^{H_{n}},
\end{equation}
where $\bbeta=(\bw,\bb)=\left\{ {w}_{ij}^{h},{b}_{k}^{h}: h\in\{1,2,...,H_{n}\}, i,k\in\{ 1,...,L_{h}\}, j\in\{ 1,...,L_{h-1}\} \right\}$
denotes the collection of all weights and biases,
consisting of $K_{n}=\sum_{h=1}^{H_{n}}\left(L_{h-1}\times L_{h}+L_{h}\right)$
elements in total.
To facilitate representation of the sparse DNN, we introduce an indicator variable 
for each weight and bias of the DNN, which indicates the existence of the connection in the network.   
Let $\bgamma^{\boldsymbol{w}^{h}}$ and $\bgamma^{\boldsymbol{b}^{h}}$ 
denote the matrix and vector of the indicator variables associated with 
$\boldsymbol{w}^{h}$ and  $\boldsymbol{b}^{h}$, respectively. 
Further, we let $\boldsymbol{\gamma}=\{ \bgamma_{ij}^{\boldsymbol{w}^{h}},\bgamma_{k}^{\boldsymbol{b}^{h}}:
 h\in\{1,2,...,H_{n}\}$, $i,k\in\left\{ 1,...,L_{h}\right\} ,j\in\left\{ 1,...,L_{h-1}\right\}\} $
and $\boldsymbol{\beta}_{\boldsymbol{\gamma}}=\{ {w}_{ij}^{h},{b}_{k}^{h}:\bgamma_{ij}^{\boldsymbol{w}^{h}}=1,\bgamma_{k}^{\boldsymbol{b}^{h}}=1$ ,$h\in\{1,2,...,H_{n}\},i,k\in\left\{ 1,...,L_{h}\right\}$,
$j\in\left\{ 1,...,L_{h-1}\right\}\}$,
which specify, respectively, the structure and associated parameters for a sparse DNN.

To conduct Bayesian analysis for the sparse DNN, we consider a mixture Gaussian prior specified as follows:
\begin{equation} \label{mixprioreq1}
\bgamma_{ij}^{\boldsymbol{w}^{h}}\sim Bernoulli(\lambda_{n}), \quad \bgamma_{k}^{\boldsymbol{b}^{h}}\sim Bernoulli(\lambda_{n}),
\end{equation}
\begin{equation} \label{mixprioreq2}
{\bw}_{ij}^{h}|\bgamma_{ij}^{\boldsymbol{w}^{h}}\sim\bgamma_{ij}^{\boldsymbol{w}^{h}}N(0,\sigma_{1,n}^{2})+(1-\bgamma_{ij}^{\boldsymbol{w}^{h}})N(0,\sigma_{0,n}^{2}),\ \ 
{\bb}_{k}^{h}|\bgamma_{k}^{\boldsymbol{b}^{h}}\sim\bgamma_{k}^{\boldsymbol{b}^{h}}N(0,\sigma_{1,n}^{2})+(1-\bgamma_{k}^{\boldsymbol{b}^{h}})N(0,\sigma_{0,n}^{2}),
\end{equation}
where $h\in\{1,2,...,H_{N}\}, i\in\left\{ 1,...,L_{h-1}\right\}, j,k\in\left\{ 1,...,L_{h}\right\}$, and 
$\sigma_{0,n}^2<\sigma_{1,n}^2$ are prespecified constants. 
Marginally, we have
\begin{equation} \label{marprior}
{w}_{ij}^{h}  \sim\lambda_{n}N(0,\sigma_{1,n}^{2})+(1-\lambda_{n})N(0,\sigma_{0,n}^{2}), \ \ 
{b}_{k}^{h}  \sim\lambda_{n}N(0,\sigma_{1,n}^{2})+(1-\lambda_{n})N(0,\sigma_{0,n}^{2}). \\
\end{equation}
Typically, we set $\sigma_{0,n}^{2}$ to be a very
small value while $\sigma_{1,n}^{2}$ to be relatively large. 
When $\sigma_{0,n}^{2} \rightarrow 0$, the prior is reduced
to the spike-and-slab prior \citep{ishwaran2005spike}. Therefore, this prior can be
viewed as a continuous relaxation of the spike-and-slab prior. Such a prior has been used by 
many authors in Bayesian variable selection, see e.g., \cite{george1993variable} and \cite{SongLiang2017}. 

\subsection{Posterior Consistency} 

Posterior consistency plays a major role in validating Bayesian methods especially for 
high-dimensional models, see e.g. \cite{Jiang2007} and \cite{LiangSY2013}.  
For DNNs, since the total number of parameters $K_n$ is often much 
larger than the sample size $n$, posterior consistency 
provides a general guideline in prior setting or choosing prior hyperparameters for a class of prior 
distributions. 
Otherwise, the prior information may dominate 
data information, rendering a biased inference for the underlying true model. In what follows, we prove the posterior consistency of the DNN model with the mixture Gaussian prior (\ref{marprior}).

With slight abuse of notation, we rewrite $\mu(\bbeta,\bx)$ in (\ref{appeq})  as $\mu(\bbeta,\bgamma,\bx)$ for a sparse network by including its network structure information.
We assume $\mu^*(\bx)$ can be well approximated 
by a {\it sparse DNN} with relevant variables, and call this sparse DNN 
as the {\it true DNN} in this paper. More precisely, we define the {\it true DNN} as 
\begin{equation} \label{trueDNNeq}
(\bbeta^*,\bgamma^*)=\underset{(\bbeta,\bgamma)\in \mG_n,\, \|\mu(\bbeta,\bgamma, \bx)-\mu^*(\bx)\|_{L^2(\Omega)} \leq \varpi_n}{\operatorname{arg\,min}}|\bgamma|, 
\end{equation}
where $\mG_n:=\mG(C_0,C_1,\varepsilon,p_n,H_n,L_1,L_2,\ldots,L_{H_n})$ denotes the space of valid sparse networks satisfying condition A.2 (given below) for the given values of $H_n$, $p_n$, and $L_h$'s,  and $\varpi_n$ is some sequence converging to 0 as $n \to \infty$. 
For any given DNN $(\bbeta,\bgamma)$, 
the error $\mu(\bbeta,\bgamma,\bx)-\mu^*(\bx)$ can be generally decomposed as the network approximation error $\mu(\bbeta^*,\bgamma^*,\bx)-\mu^*(\bx)$ and the
network estimation error $\mu(\bbeta,\bgamma,\bx)-\mu(\bbeta^*,\bgamma^*,\bx)$. The $L_2$ norm of the former one is bounded by $\varpi_n$, and the order of the latter will be given in Theorem \ref{2normal}. In what follows, we will treat $\varpi_n$ as the network approximation error. 
In addition, we make the following assumptions:
 
\begin{enumerate}
 \item[A.1] The input $\bx$ is bounded by 1 entry-wisely, i.e. $\bx\in \Omega=[-1,1]^{p_n}$, and the density of $\bx$ is bounded in its support $\Omega$ uniformly with respect to $n$.
 \item[A.2] The true sparse DNN model satisfies the following conditions: 
 \begin{itemize}
 \item[A.2.1] The network structure satisfies:  
 $r_nH_n\log n$ $+r_n\log\overline L+s_n\log p_n\leq C_0n^{1-\varepsilon}$, where $0<\varepsilon<1$ is a small constant,
 $r_n=|\bgamma^*|$ denotes the connectivity of $\bgamma^*$, $\overline{L}=\max_{1\leq j\leq H_n-1}L_j$ denotes the maximum hidden layer width, 
 $s_n$ denotes the input dimension of $\bgamma^*$.
 
 \item[A.2.2] The network weights are polynomially bounded:  $\|\bbeta^*\|_\infty\leq E_n$, where 
  $E_n=n^{C_1}$ for some constant $C_1>0$.
 \end{itemize}
 \item[A.3] The activation function $\psi$ is Lipschitz continuous with a Lipschitz constant of 1.
 \end{enumerate}
 
 Assumption A.1 is a typical assumption for posterior consistency, see e.g.,  \cite{PolsonR2018} and \cite{Jiang2007}. In practice, all bounded data can be normalized to satisfy this assumption, e.g. image data are bounded and usually normalized before training.
 Assumption A.3 is satisfied by many conventional 
activation functions such as sigmoid, tanh and ReLU.

Assumption A.2 specifies the class of DNN models that we are considering in this paper. They are sparse, while still being able to approximate many types of functions arbitrarily well as the training sample size becomes large, i.e., $\lim_{n\to \infty} \varpi_n=0$. 
The approximation power of sparse DNNs has been studied in several existing work. 
For example,  for the functions that can be  represented by an affine system, \cite{Bolcskei2019} proved that if the network parameters are bounded in absolute value by some polynomial $g(r_n)$, i.e. $||\bbeta^*||_{\infty} \leq g(r_n)$, then the approximation error $\varpi_n = O(r_n^{-\alpha^*})$ for some constant $\alpha^*$. To fit this this result into our framework, we can let $r_n \asymp n^{(1-\epsilon)/2}$ for some $0<\epsilon <1$, $p_n = d$ for some constant $d$, $H_n < r_n + d$ and $\bar{L} < r_n$ (i.e. the setting given in Proposition 3.6 of \cite{Bolcskei2019}). 
Suppose that the degree of $g(\cdot)$ is $c_2$, i.e. $g(r_n) \prec r_n^{c_2}$, then $\|\bbeta^*\|_{\infty} \prec n^{c_2(1-\epsilon)/2} \prec n^{C_1} = E_n$ for some constant $C_1 > c_2(1-\epsilon)/2$. Therefore, Assumption A.2 is satisfied with the approximation error $\varpi_n = O(r_n^{-\alpha^*}) = O(n^{-\alpha^*(1-\epsilon)/2}) \stackrel{\Delta}{=} O(n^{-\varsigma})$ (by defining $\varsigma=\alpha^*(1-\epsilon)/2$), which goes to 0 as $n\to \infty$. 
In summary, the minimax rate in
$\sup_{\mu^*(\bx) \in \mathcal{C}}\, \inf_{(\bbeta, \bgamma) \in \mathcal{G}}  \|\mu(\bbeta, \bgamma, \bx) - \mu^*(\bx) \|_{L^2(\Omega)} \in 
\mathcal{O}(n^{-\varsigma})$
can be achieved by sparse DNNs under our assumptions, where 
$\mathcal{C}$ denotes the class of functions represented by an affine system.

 Other than affine functions, our setup for the sparse DNN also matches the approximation theory for many other types of functions. For example, Corollary 3.7 of \cite{petersen2018optimal} showed that for a wide class of piecewise smooth functions with a fixed input dimension, a fixed depth ReLU network can achieve an $\varpi_n$-approximation with $\log(r_n)=O(-\log\varpi_n)$ and $\log E_n =O(-\log\varpi_n)$. This result satisfies condition A.2 by setting $\varpi_n=O(n^{-\varsigma})$ for some constant $\varsigma>0$. As another example, Theorem 3 of \cite{Schmidt-Hieber2017Nonparametric} (see also lemma 5.1 of \cite{PolsonR2018}) proved  that any bounded $\alpha$-H{\"o}lder smooth function $\mu^*(\bx)$ can be approximated by a sparse ReLU DNN with the network approximation error
 $\varpi_n=O(\log(n)^{\alpha/p_n}n ^{-\alpha/(2\alpha+p_n)})$ 
 for some $H_n\asymp \log n\log p_n$, $L_j\asymp p_n n ^{p_n/(2\alpha+p_n)}/\log n$, $r_n=O(p_n^2\alpha^{2p_n} n ^{p_n/(2\alpha+p_n)}\log p_n$), and $E_n=C$ for some fixed constant $C>0$. This result also satisfies condition A.2.2 as long as $p_n^2\ll \log n$.

  It is important to note that there is a fundamental difference between the existing neural network approximation theory and ours. 
 In the existing neural network approximation theory, no data is involved and a small network 
 can potentially achieve an arbitrarily small 
 approximation error by allowing the connection weights to take values in an unbounded space.  
  In contrast, in our theory, the network approximation error, the network size, and the bound of connection weights are all linked to the training sample size. 
 A small network  approximation error is required only when the training sample size is large; 
 otherwise, over-fitting might be a concern from the point of view of statistical modeling. In the practice of modern neural networks, the depth and width have been increased without much scruple. These increases reduce the training error, improve the  generalization performance under certain regimes \citep{Doubledescent2020}, but negatively affect model calibration \citep{CalibrationDNN2017}. We expect that our theory can tame the powerful neural networks into the framework of statistical modeling; that is, by selecting an appropriate network size according to the training sample size, the proposed method can generally improve the generalization and calibration of the DNN model while controlling the training error to a reasonable level. The calibration of the sparse DNN will be explored elsewhere.

Let $P^{*}$ and
$E^{*}$ denote the respective probability measure and expectation for data $D_{n}$.  
Let $d(p_{1},p_{2})=\left(\int\left[p_{1}^{\frac{1}{2}}(\boldsymbol{x},y)-p_{2}^{\frac{1}{2}}(\boldsymbol{x},y)\right]^{2}dyd\boldsymbol{x}\right)^{\frac{1}{2}}$ 
 denote the Hellinger distance between two density functions 
 $p_{1}(\boldsymbol{x},y)$ and $p_{2}(\boldsymbol{x},y)$.
Let $\pi(A\mid D_{n})$ be the posterior probability of an event $A$. The following theorem establishes posterior consistency for sparse DNNs 
under the mixture Gaussian prior (\ref{marprior}).

\begin{theorem}\label{2normal}
 Suppose Assumptions A.1-A.3 hold. If the mixture Gaussian prior (\ref{marprior}) satisfies the conditions:
 $\lambda_n = O( 1/\{K_n[n^{H_n}(\overline Lp_n)]^{\tau}\})$ for some constant $\tau>0$,
  $E_n/\{H_n\log n+\log \overline L\}^{1/2}  \lesssim \sigma_{1,n} \lesssim n^{\alpha}$ for some 
  constant $\alpha>0$, and 
$\sigma_{0,n}  \lesssim \min\big\{ 1/\{\sqrt{n} K_n (n^{3/2} \sigma_{1,0}/H_n)^{H_n}\}$, 
 $1/\{\sqrt{n} K_n (n E_n/H_n)^{H_n}\} \big\}$, 
 then there exists an error sequence 
 $\epsilon_n^2 =O(\varpi_n^2)+O(\zeta_n^2)$ 
 such that $\lim_{n\to \infty} \epsilon_n= 0$ and  
 $\lim_{n\to \infty} n\epsilon_n^2= \infty$, and 
 the posterior distribution satisfies
 \begin{equation}\label{postcon}
\begin{split}
 & P^*\left\{ \pi[d(p_{\bbeta},p_{\mu^*}) > 4 \epsilon_n |D_n] \geq 2 e^{-cn \epsilon_n^2} \right\} 
  \leq 2 e^{-cn \epsilon_n^2},\\
 &  E_{D_n}^* \pi[d(p_{\bbeta},p_{\mu^*}) > 4 \epsilon_n | D_n] \leq 4 e^{-2cn \epsilon_n^2},
 \end{split}
\end{equation}
 for sufficiently large $n$, where $c$ denotes a constant,  
$\zeta_n^2=[r_nH_n\log n+r_n\log \overline L+s_n\log p_n]/n$, 
$p_{\mu^*}$ denotes the underlying true data distribution, and
$p_\bbeta$ denotes the data distribution reconstructed by the Bayesian DNN based on its posterior samples. 

\end{theorem}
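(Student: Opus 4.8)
\emph{Proof proposal.} I would establish the two displays in (\ref{postcon}) by verifying the three classical sufficient conditions for posterior consistency in Hellinger distance --- a Kullback--Leibler prior-mass condition, a sieve/entropy condition, and the existence of exponentially consistent tests --- along the lines of the general theory for high-dimensional Bayesian models used, e.g., by \cite{Jiang2007}. Set $\epsilon_n^2 = M\{\varpi_n^2+\zeta_n^2\}$ for a sufficiently large constant $M$; by A.2.1 one has $\zeta_n^2\le C_0 n^{-\varepsilon}$, so $\zeta_n\to0$ and (with $\varpi_n\to0$) $\epsilon_n\to0$, while $n\epsilon_n^2\ge Mn\zeta_n^2\ge M\log n\to\infty$. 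Because $y\mid\bx$ follows a one-dimensional GLM, and because under A.1 the true mean $\mu^*$ is bounded on $\Omega$ and we shall restrict attention to networks with bounded output, the squared Hellinger distance, the KL divergence and the KL variation between $p_\bbeta$ and $p_{\mu^*}$ are all comparable, up to constants depending only on the relevant output bound, to $\|\mu(\bbeta,\bgamma,\cdot)-\mu^*\|_{L^2(\Omega)}^2$; so it suffices to work with this $L^2$ distance throughout.

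\emph{Step 1: prior mass.} Split $\mu(\bbeta,\bgamma,\cdot)-\mu^*$ into the approximation error $\mu(\bbeta^*,\bgamma^*,\cdot)-\mu^*$, whose $L^2$-norm is $\le\varpi_n$ by (\ref{trueDNNeq}), and the estimation error, so it suffices to lower-bound the prior mass of a neighborhood on which the estimation error is $\lesssim\zeta_n$. I would take the event $\{\bgamma=\bgamma^*\}\cap\{\|\bbeta_{\bgamma^*}-\bbeta^*_{\bgamma^*}\|_\infty\le\delta_n\}$ intersected with the event that every inactive weight is bounded by a small threshold $\delta_n'$. A layer-by-layer Lipschitz estimate for the map $\bbeta\mapsto\mu(\bbeta,\bgamma,\bx)$, using A.1 (bounded inputs), A.3 (unit-Lipschitz activation) and $\|\bbeta^*\|_\infty\le E_n$, shows that such a perturbation changes $\mu$ in sup-norm by at most the perturbation size times an amplification factor of order $(c\,\overline L\,E_n)^{H_n}$; choosing $\delta_n,\delta_n'$ smaller than $\zeta_n$ divided by that factor keeps the estimation error below $\zeta_n$. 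The prior mass of this event is bounded below by
\[
\lambda_n^{r_n}(1-\lambda_n)^{K_n-r_n}\Big(\tfrac{\delta_n}{\sigma_{1,n}}\exp\big(-\tfrac{E_n^2}{2\sigma_{1,n}^2}\big)\Big)^{r_n},
\]
so $-\log$ of it is $O\big(r_n\log(1/\lambda_n)+\lambda_n K_n+r_n\log(\sigma_{1,n}/\delta_n)+r_nE_n^2/\sigma_{1,n}^2\big)$, and each term is $O(n\zeta_n^2)=O(n\epsilon_n^2)$ precisely under the stated rates: $\lambda_n=O(1/\{K_n[n^{H_n}(\overline Lp_n)]^\tau\})$ controls the first two, the lower bound $E_n/\sqrt{H_n\log n+\log\overline L}\lesssim\sigma_{1,n}$ controls the fourth and (via the $H_n$-fold amplification inside $\delta_n$) the third, while $K_n\lambda_n\to0$ makes $(1-\lambda_n)^{K_n-r_n}\to1$.

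\emph{Step 2: sieve, entropy, complement mass, and tests.} Define the sieve $\mathcal{F}_n=\{(\bbeta,\bgamma):|\bgamma|\le\overline r_n,\ \|\bbeta_\bgamma\|_\infty\le E_n,\ \text{inactive weights bounded by a small threshold}\}$, with $\overline r_n$ a suitable multiple of $n\epsilon_n^2/(H_n\log n)$. There are at most $K_n^{\overline r_n}$ admissible structures, and for each fixed structure the Lipschitz bound of Step 1 yields an $\epsilon_n$-net (in the $L^2$-on-$\mu$ metric) of a size-$E_n$ box in $\mathbb{R}^{|\bgamma|}$ of log-cardinality $O(\overline r_n\log(E_n(\overline LE_n)^{H_n}/\epsilon_n))$; hence $\log N(\epsilon_n,\mathcal{F}_n,d)\lesssim\overline r_n(H_n\log n+\log\overline L+\log K_n)\lesssim n\epsilon_n^2$, using A.2.1 and $E_n=n^{C_1}$. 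For the complement, $\pi(|\bgamma|>\overline r_n)$ is a $\mathrm{Binomial}(K_n,\lambda_n)$ upper tail, which is $\le e^{-cn\epsilon_n^2}$ since $K_n\lambda_n$ is polynomially small; $\pi(\text{some active weight}>E_n)$ is a Gaussian tail with variance $\sigma_{1,n}^2\lesssim n^{2\alpha}$, again $\le e^{-cn\epsilon_n^2}$ once $C_1$ is large; and the inactive weights exceed their threshold with negligible probability by the computation in Step 1. Thus $\pi(\mathcal{F}_n^c)\le e^{-cn\epsilon_n^2}$, and the entropy bound gives, by the Le Cam--Birg\'e construction, tests $\phi_n$ with $E^*\phi_n\le e^{-cn\epsilon_n^2}$ and $\sup\{E_\bbeta(1-\phi_n):(\bbeta,\bgamma)\in\mathcal{F}_n,\ d(p_\bbeta,p_{\mu^*})>4\epsilon_n\}\le e^{-cn\epsilon_n^2}$. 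Writing the posterior as a ratio, lower-bounding the evidence on a high-probability event via Step 1, and splitting the numerator over $\mathcal{F}_n$ (using $\phi_n$) and $\mathcal{F}_n^c$ (using $\pi(\mathcal{F}_n^c)$) then gives both inequalities in (\ref{postcon}) --- the in-probability one from a Markov bound on the evidence, the in-expectation one by integrating the resulting tail bound.

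\emph{Main obstacle.} The delicate step is the prior-mass bound of Step 1, because the prior is a \emph{continuous} mixture-Gaussian relaxation of spike-and-slab: every one of the $K_n\gg n$ connections carries a nonzero weight under the prior, so one must verify that the aggregate perturbation of $\mu$ induced by the $K_n-r_n$ ``spike'' weights is $o(\epsilon_n)$, which forces a careful propagation of small perturbations through all $H_n$ layers of the composition (\ref{appeq}), where errors compound multiplicatively. The resulting amplification factor --- of order $(\overline LE_n)^{H_n}$, equivalently $(n^{3/2}\sigma_{1,n}/H_n)^{H_n}$ in the relevant regime --- is exactly what the hypotheses on $\sigma_{0,n}$ (and the lower bound on $\sigma_{1,n}$) are calibrated to absorb, so the real work is the bookkeeping that turns these bounds into $-\log(\text{prior mass})=O(n\epsilon_n^2)$ with no slack; by comparison the GLM metric comparison, the covering numbers of bounded sparse-DNN classes, and the Le Cam tests are essentially routine.
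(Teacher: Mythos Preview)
Your overall strategy---verifying the three Jiang-style conditions (prior KL mass, sieve entropy, sieve-complement prior mass) and then invoking the general posterior-consistency machinery---is exactly the route the paper takes, and your Step 1 is essentially correct: the paper likewise centers the prior-mass argument on a neighborhood of $(\bbeta^*,\bgamma^*)$ and uses a layer-by-layer Lipschitz bound (their Lemma~\ref{dnn2}) to control the output perturbation from both the $r_n$ active and the $K_n-r_n$ inactive coordinates. Your identification of the amplification factor and its role in calibrating $\sigma_{0,n}$ is on target.

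There is, however, a genuine gap in Step 2. Your sieve $\mathcal F_n$ constrains only the \emph{total} number of large connections, and you bound the number of admissible structures by $K_n^{\overline r_n}$, giving $\log N\lesssim \overline r_n\log K_n$. But $\log K_n\ge \log p_n$, and the theorem allows $\log p_n$ to be as large as $n^{1-\varepsilon}/s_n$ (A.2.1 only controls $s_n\log p_n$, not $\log p_n$ itself). In that regime $\overline r_n\log p_n$ can be much larger than $n\epsilon_n^2\asymp r_nH_n\log n+r_n\log\overline L+s_n\log p_n$: take for instance $s_n$ bounded, $\log p_n\asymp n^{0.9}$, $H_n$ bounded, $r_n\asymp n^{0.4}$; then $n\epsilon_n^2\asymp n^{0.9}$ but $\overline r_n\log p_n\gtrsim r_n\log p_n\asymp n^{1.3}$. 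The fix the paper uses is to include in the sieve a \emph{separate} constraint on the input dimension, $|\bgamma_\bbeta|_{in}\le k_n's_n$, so that the structure count factors as (choice of at most $k_n's_n$ inputs)$\times$(connections among $\overline L$-wide hidden layers), contributing $k_n's_n\log p_n+k_nr_n(H_n\log n+\log\overline L)$ to the log-covering number; both $k_n$ and $k_n'$ are then tuned so each piece is $\asymp n\epsilon_n^2$. Correspondingly, the complement bound requires a separate tail estimate for $Pr(|\bgamma_\bbeta|_{in}>k_n's_n)$, which is where the $\log p_n$ inside the condition on $\lambda_n$ is actually used. Without this refinement the entropy condition fails and the rate $\zeta_n^2$ would degrade from $s_n\log p_n/n$ to $r_n\log p_n/n$.

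Two smaller points: (i) the weight bound in the sieve should be $M_n=\max(\sqrt{2n}\,\sigma_{1,n},E_n)$ rather than $E_n$, since under the slab $N(0,\sigma_{1,n}^2)$ one only has $E_n/\sigma_{1,n}\lesssim\sqrt{H_n\log n+\log\overline L}$, which is not enough to make $K_n\cdot P(|\beta|>E_n)\le e^{-cn\epsilon_n^2}$; (ii) the paper checks the prior-mass condition for $d_t$ with $t=1$ (not just $d_0$), which is what their version of the general lemma requires---your ``KL variation'' remark should be made precise in that direction. Finally, your assessment that the covering-number step is ``essentially routine'' understates it: the input-dimension bookkeeping is arguably as delicate as the perturbation propagation you flag as the main obstacle.
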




The proof of Theorem \ref{2normal} can be found in the supplementary material. Regarding this theorem, we have a few remarks:
 
 \begin{remark} \label{rem1}  
  Theorem \ref{2normal} provides a posterior contraction rate $\epsilon_n$ for the sparse BNN. The contraction rate contains two components, $\varpi_n$ and   $\zeta_n$, where $\varpi_n$, as defined previously, represents the network approximation error, and $\zeta_n$ represents the network estimation error measured in Hellinger distance.  Since the estimation error $\zeta_n$ grows with the network connectivity $r_n$,  there is a trade-off between the network approximation error and the network estimation error. A larger network has a lower approximation error and a higher estimation error, and vice versa. 
 \end{remark}
 
  \begin{remark} \label{rem2}
 Theorem \ref{2normal} implies that given a training sample size $n$, the proposed method can learn a sparse neural network with at most $O(n/\log(n))$ connections. Compared to the fully connected DNN, the sparsity of the proposed BNN enables  some theoretical guarantees for its performance. The sparse BNN has nice theoretical properties, such as posterior consistency, variable selection consistency, and asymptotically optimal generalization bounds, which are beyond the ability of general neural networks. The latter two properties will be established in Section \ref{selectsection} and 
 Section \ref{predictsection}, respectively.  
 \end{remark}

\begin{remark} \label{rem3} 
 Although Theorem \ref{2normal} is proved by assuming $\sigma^2$ is known, it can be easily extended to the case that $\sigma^2$ is unknown by assuming an inverse gamma prior $\sigma^2\sim \mbox{IG}(a_0,b_0)$ for some constants $a_0, b_0>0$.
 If a relatively uninformative prior is desired, one can choose $a_0\in(0,1)$ such that the inverse gamma prior is very diffuse with a non-existing mean value.
 However, if $a_0=b_0=0$, i.e., the Jeffreys prior $\pi(\sigma^2) \propto 1/\sigma^2$, the posterior consistency theory established Theorem \ref{2normal} might not hold any more.
In general, to achieve posterior consistency, the prior is required, at least in the framework adopted by the paper, to satisfy two conditions \citep{ghosal2000convergence,Jiang2007}: (i) a not too little prior probability is placed over the neighborhood of the true density, and (ii) a very little prior probability is placed outside of a region that is not too complex.   
 Obviously, the Jeffreys prior and thus the joint prior of $\sigma^2$ and the regression coefficients do not satisfy neither of the two conditions. We note that the inverse gamma prior 
 $\sigma^2\sim IG(a_0,b_0)$ has long been used 
 in Bayesian inference for many different statistical models, such as linear regression \citep{george1997approaches}, nonparametric regression \citep{kohn2001nonparametric}, and Gaussian graphical models \citep{dobra2004sparse}. 
 \end{remark}

\subsection{Consistency of DNN Structure Selection} \label{selectsection}

This section establishes consistency of DNN structure selection under posterior consistency. 
It is known that the DNN model is generally nonidentifiable due to the symmetry of the network structure. For example, the approximation $\mu(\bbeta,\bgamma, \bx)$ can be invariant if one permutes the orders of certain hidden nodes, simultaneously changes the signs of certain weights and biases if $tanh$ is used as the activation function, or re-scales certain weights and bias if
Relu is used as the activation function. However, by introducing appropriate constraints, see e.g., \cite{pourzanjani2017improving} and \cite{liang2018bayesian}, we can define a set of neural networks such that any possible neural networks can be represented by one and only one neural network in the set via nodes permutation, sign changes, weight rescaling, etc.
Let $\Theta$ denote such set of DNNs, where each element in $\Theta$ can be viewed as an equivalent class of DNN models. Let $\nu(\bgamma,\bbeta) \in \Theta$ be an operator 
that maps any 
neural network to $\Theta$ via appropriate transformations such as nodes permutation, sign changes, weight rescaling, etc. 
 To serve the purpose of structure selection in the space $\Theta$, 
 we consider the marginal posterior inclusion
 probability approach proposed in \cite{LiangSY2013} for high-dimensional variable selection.
 
 For a better description of this approach, we reparameterize $\bbeta$ and $\bgamma$ as  $\bbeta=(\bbeta_1,\bbeta_2,\ldots,\bbeta_{K_n})$ and $\bgamma=(\bgamma_1,\bgamma_2,\ldots,\bgamma_{K_n})$, respectively,   
 according to their elements. Without possible confusions, we will often 
 use the indicator vector $\bgamma$ and the active set $\{i: \bgamma_i=1, i=1,2,\ldots, K_n\}$ exchangeably; that is,  
 $i \in \bgamma$ and $\bgamma_i=1$ are equivalent.
 In addition, we will treat the connection weights $\bw$ and the hidden 
 unit biases $\bb$ equally; that is, they will not be distinguished in $\bbeta$ and $\bgamma$. 
 For convenience, we will call each element of $\bbeta$ and $\bgamma$ a `connection' 
 in what follows.

 \subsubsection{Marginal Posterior Inclusion Probability Approach}

 For each connection $c_i$, we define its marginal posterior inclusion probability by 
 \begin{equation} \label{marceq}
  q_i=\int \sum_{\bgamma} e_{i|\nu(\bgamma,\bbeta)} \pi(\bgamma|\bbeta,D_n) 
   \pi(\bbeta|D_n) d\bbeta, \quad i=1,2,\ldots, K_n,
 \end{equation}
 where $e_{i|\nu(\bgamma,\bbeta)}$ is the indicator for the existence of 
 connection $c_i$ in the network $\nu(\bgamma,\bbeta)$. Similarly, we define
 $e_{i|\nu(\bgamma^*,\bbeta^*)}$ as the indicator for the existence of connection $c_i$ in the true model $\nu(\bgamma^*,\bbeta^*)$. 
 The proposed approach is to choose the connections whose marginal posterior inclusion
 probabilities are greater than a threshold value $\hat{q}$; that is, setting
 $\hat{\bgamma}_{\hat{q}}=\{i: q_i > \hat{q}, i=1,2,\ldots, K_n\}$ as
 an estimator of $\bgamma_*=\{i: e_{i|\nu(\bgamma^*,\bbeta^*)}=1, i=1,\ldots, K_n\}$, 
 where $\bgamma_*$ can be viewed as the uniquenized true model. 
To establish the consistency of $\hat{\bgamma}_{\hat{q}}$, an identifiability
 condition for the true model is needed.
 Let $A(\epsilon_n)=\{\bbeta: d(p_\bbeta,p_{\mu^*})\geq \epsilon_n\}$.
Define
\[
\rho(\epsilon_n) = \max_{1\leq i\leq K_n} \int_{A(\epsilon_n)^c}   \sum_{\bgamma} |e_{i|\nu(\bgamma,\bbeta)}-e_{i|\nu(\bgamma^*,\bbeta^*)}|\pi(\bgamma|\bbeta, D_n) 
 \pi(\bbeta|D_n)d\bbeta, 
\]
 which measures the structure difference between the true model and the sampled models 
 on the set $A({\epsilon_n})^c$. Then the identifiability condition
 can be stated as follows:
  \begin{enumerate}
     \item[B.1] $\rho(\epsilon_n) \to 0$, as $n\to \infty$ and $\epsilon_n\to 0$.
 \end{enumerate}
That is, when $n$ is sufficiently large, if a DNN has approximately the same probability distribution 
 as the true DNN, then the structure of the DNN, after mapping into the parameter space $\Theta$, must coincide with that of the true DNN.  Note that this identifiability is different from the one mentioned at the beginning of the section. The earlier one is only with respect to structure and parameter rearrangement of the DNN. 
 Theorem \ref{Selectlem} concerns consistency of $\hat{\bgamma}_{\hat{q}}$ and its sure screening property,
 whose proof is given in the supplementary material.  

 \begin{theorem} \label{Selectlem} Assume that the conditions of Theorem \ref{2normal} and the identifiability
  condition B.1 hold. Then 
 \begin{itemize}
 \item[(i)]  
$\max_{1\leq i\leq K_n}\{|q_i-e_{i|\nu(\bgamma^*,\bbeta^*)}|\}\stackrel{p}{\to} 0$, 
where $\stackrel{p}{\to}$ denotes convergence  in probability;
  
  \item[(ii)] (sure screening) 
   $P(\bgamma_* \subset \hat{\bgamma}_{\hat{q}}) \stackrel{p}{\to} 1$ for any 
    pre-specified $\hat{q} \in (0,1)$. 
 \item[(iii)] (Consistency) 
   $P(\bgamma_* =\hat{\bgamma}_{0.5}) \stackrel{p}{\to} 1$.
  \end{itemize}
 \end{theorem}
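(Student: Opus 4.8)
The plan is to derive all three parts from the posterior consistency in Theorem \ref{2normal} together with the identifiability condition B.1, by first controlling $\max_i |q_i - e_{i|\nu(\bgamma^*,\bbeta^*)}|$ and then reading off screening and selection as corollaries. The starting point for part (i) is to split the defining integral (\ref{marceq}) for $q_i$ over the region $A(\epsilon_n)^c = \{\bbeta: d(p_\bbeta,p_{\mu^*}) < \epsilon_n\}$ and its complement $A(\epsilon_n)$. On $A(\epsilon_n)$, I would bound the contribution crudely by $\pi(A(\epsilon_n)\mid D_n)$, which by the second inequality of (\ref{postcon}) has expectation at most $4e^{-2cn\epsilon_n^2}\to 0$; combined with a Markov/Borel--Cantelli argument this term is $o_p(1)$ uniformly in $i$. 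On $A(\epsilon_n)^c$, I would write
\[
\Big| \int_{A(\epsilon_n)^c}\!\! \sum_{\bgamma} e_{i|\nu(\bgamma,\bbeta)} \pi(\bgamma|\bbeta,D_n)\pi(\bbeta|D_n)d\bbeta - e_{i|\nu(\bgamma^*,\bbeta^*)}\,\pi(A(\epsilon_n)^c\mid D_n)\Big| \le \rho(\epsilon_n),
\]
so that, using $\pi(A(\epsilon_n)^c\mid D_n)\to 1$ in probability, $q_i$ is within $\rho(\epsilon_n) + o_p(1)$ of $e_{i|\nu(\bgamma^*,\bbeta^*)}$. Taking the max over $i$ and invoking B.1 ($\rho(\epsilon_n)\to 0$) gives $\max_i |q_i - e_{i|\nu(\bgamma^*,\bbeta^*)}| \stackrel{p}{\to} 0$, which is part (i).

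For part (ii), the sure screening property, fix any threshold $\hat q\in(0,1)$. For each true connection $i\in\bgamma_*$ we have $e_{i|\nu(\bgamma^*,\bbeta^*)}=1$, so by part (i), with probability tending to $1$, $q_i > 1 - \min\{\hat q, 1-\hat q\}/2 > \hat q$ simultaneously for all such $i$ (the max bound makes this uniform over the at-most-$r_n$ true connections, but actually over all $K_n$ coordinates). Hence $\bgamma_* \subset \hat{\bgamma}_{\hat q}$ with probability going to $1$. Part (iii) follows by taking $\hat q = 0.5$: for $i\in\bgamma_*$ part (i) forces $q_i$ close to $1 > 0.5$, and for $i\notin\bgamma_*$ it forces $q_i$ close to $0 < 0.5$, so simultaneously $\hat{\bgamma}_{0.5} = \{i: q_i > 0.5\} = \bgamma_*$ with probability tending to $1$; the uniformity in part (i) is exactly what lets us control all $K_n$ coordinates at once despite $K_n$ growing with $n$.

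\textbf{Main obstacle.} The delicate point is not the high-level logic but making the ``$o_p(1)$ uniformly in $i$'' claims rigorous given that $K_n \to \infty$ and that the event bounds in (\ref{postcon}) are themselves probabilistic (they hold on a set of $P^*$-probability $\ge 1 - 2e^{-cn\epsilon_n^2}$). I would handle this by working on the good event $\mathcal{E}_n = \{\pi[d(p_\bbeta,p_{\mu^*})>4\epsilon_n\mid D_n] < 2e^{-cn\epsilon_n^2}\}$, whose complement has vanishing probability by the first line of (\ref{postcon}); on $\mathcal{E}_n$ the bound $\pi(A(\epsilon_n)\mid D_n)$ (after adjusting the constant $4$ into the definition of $\epsilon_n$, which is harmless) is deterministically small, so the split above is uniform in $i$ with no union bound over coordinates needed — the only $i$-dependent quantity, $\rho(\epsilon_n)$, is already a maximum over $i$ by definition. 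A secondary subtlety is the interplay between the Hellinger ball radius $4\epsilon_n$ appearing in Theorem \ref{2normal} and the radius $\epsilon_n$ used in the definitions of $A(\epsilon_n)$ and $\rho(\epsilon_n)$; I would reconcile these by simply using the same sequence throughout (replacing $\epsilon_n$ by $4\epsilon_n$ in B.1 and in $\rho$, which changes nothing since B.1 is a statement as $\epsilon_n\to 0$), so that no genuine gap arises.
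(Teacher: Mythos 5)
Your proposal is correct and takes essentially the same route as the paper's proof: bound $|q_i-e_{i|\nu(\bgamma^*,\bbeta^*)}|$ by splitting the defining integral over $A(4\epsilon_n)$ and its complement, control the first piece by the posterior mass $\hat\pi(4\epsilon_n)$ (vanishing by Theorem \ref{2normal}) and the second by $\rho(4\epsilon_n)$ (vanishing by B.1), then read off (ii) and (iii) as immediate consequences of the uniform bound in (i). Your remarks on uniformity in $i$ and on reconciling the radii $\epsilon_n$ versus $4\epsilon_n$ are exactly the (implicit) bookkeeping in the paper's one-display argument.
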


 For a network $\bgamma$, it is easy to identify the relevant variables.  
Recall that $\bgamma^{\bw^{h}} \in \mathbb{R}^{L_{h}\times L_{h-1}}$ denotes the connection indicator matrix of layer $h$. Let 
\begin{equation} \label{newreplyeq2}
\bgamma^{\bx} = \bgamma^{\bw^{H_n}}  \bgamma^{\bw^{H_n-1}}\cdots  \bgamma^{\bw^{1}}\in \mathbb{R}^{1\times p_n},
\end{equation}
and let $\bgamma^{\bx}_i$ denote the $i$-th element of $\bgamma^{\bx}$. It is easy to see that if $\bgamma^{\bx}_i>0$ then the variable $\bx_i$ is effective in the network $\bgamma$, and $\bgamma^{\bx}_i=0$ otherwise. 
 Let $e_{\bx_i|\nu(\bgamma^*,\bbeta^*)}$ be the indicator for the effectiveness of variable $\bx_i$ in the network $\nu(\bgamma^*,\bbeta^*)$,
and let $\bgamma_*^{\bx}=\{i: e_{\bx_i|\nu(\bgamma^*,\bbeta^*)}=1, i=1,\ldots, 
p_n \}$ denote the set of true variables. 
 Similar to (\ref{marceq}), we can define the marginal inclusion probability for each variable:
 \begin{equation} \label{marxeq}
  q_i^{\bx}=\int \sum_{\bgamma} e_{\bx_i|\nu(\bgamma,\bbeta)} \pi(\bgamma|\bbeta,D_n) 
   \pi(\bbeta|D_n) d\bbeta, \quad i=1,2,\ldots, p_n,
 \end{equation}
 Then we can select the variables whose marginal posterior inclusion probabilities greater than a threshold $\hat{q}^{\bx}$, e.g., setting $\hat{q}^{\bx}=0.5$.
As implied by (\ref{newreplyeq2}), the consistency of structure selection implies consistency of variable selection.

 It is worth noting that the above variable selection consistency result is with respect to the relevant  variables defined by the true network
$\bgamma^*$. 
To achieve the variable selection consistency with respect to the relevant variables of $\mu^*(\bx)$, some extra assumptions are needed in defining $(\bbeta^*, \bgamma^*)$. How to specify these assumptions is an open problem and we would leave it to readers.  However, as shown by our simulation example, 
 the sparse model $(\bbeta^*,\bgamma^*)$ 
 defined in (\ref{trueDNNeq}) works well, which correctly identifies all the 
 relevant variables of the underlying nonlinear system.


\subsubsection{Laplace Approximation of Marginal Posterior Inclusion Probabilities}

Theorem \ref{Selectlem} establishes the consistency of DNN structure selection based on the marginal posterior inclusion probabilities. To obtain Bayesian estimates of the marginal posterior inclusion probabilities, 
intensive  Markov Chain Monte Carlo (MCMC) simulations are usually required.
Instead of performing MCMC simulations, we propose to approximate the marginal posterior inclusion probabilities using the Laplace method based on the DNN model 
trained by an optimization 
method such as SGD. Traditionally, 
such approximation is required to be performed at the 
{\it maximum a posteriori} (MAP) estimate of the DNN. However, finding the MAP 
for a large DNN is not computationally guaranteed, as there can be many local minima 
on its energy landscape. To tackle this issue, we proposed a 
Bayesian evidence method, see Section \ref{Sectcomp} for the detail,
for eliciting sparse DNN models learned by an optimization method in multiple runs 
with different initializations. 
Since conventional optimization methods such as SGD can be used to 
train the DNN here, the proposed method is computationally much more efficient than 
the standard Bayesian method. 
More importantly, as explained in Section \ref{Sectcomp}, 
consistent estimates of the marginal posterior inclusion probabilities 
might be obtained at a local 
maximizer of the log-posterior instead of the MAP estimate. In what follows, we justify the validity of Laplace approximation for marginal posterior inclusion probabilities.

Based on the marginal posterior distribution $\pi(\bbeta|D_n)$, 
the marginal posterior inclusion probability $q_i$ of connection $c_i$ can be re-expressed as
\[
q_i= \int \pi(\bgamma_i=1|\bbeta) \pi(\bbeta|D_n) d \bbeta, \quad i=1,2,\ldots, K_n. 
\]
Under the mixture Gaussian prior, it is easy to derive that 
\begin{equation} \label{Lapprob}
\pi(\bgamma_i=1|\bbeta)= \tilde{b}_i/(\tilde{a}_i+\tilde{b}_i),
\end{equation}
where 
\[
\tilde{a}_i=\frac{1-\lambda_n}{\sigma_{0,n}} \exp\{-\frac{\bbeta_i^2}{2 \sigma_{0,n}^2}\}, \quad \tilde{b}_i=\frac{\lambda_n}{\sigma_{1,n}} \exp\{-\frac{\bbeta_i^2}{2 \sigma_{1,n}^2}\}.
\]
Let's define
\begin{equation} \label{log-posterioreq}
h_{n}(\bbeta)=\frac{1}{n}\sum_{i=1}^{n}\log(p(y_{i},\boldsymbol{x}_{i}|\bbeta))+\frac{1}{n}\log(\pi(\bbeta)), 
\end{equation}
where $p(y_i,\boldsymbol{x}_{i}|\bbeta)$ denotes the likelihood function of the observation $(y_i,\bx_i)$ and $\pi(\bbeta)$ denotes the prior as specified in (\ref{marprior}). Then $\pi(\bbeta|D_n) = \frac{ e^{nh_{n}(\bbeta)}}{\int e^{nh_{n}(\bbeta)}d\bbeta}$ 
and, for a function $b(\bbeta)$, the posterior expectation is given
by $\frac{\int b(\bbeta)e^{nh_{n}(\bbeta)}d\bbeta}{\int e^{nh_{n}(\bbeta)}d\bbeta}$.
Let $\hat{\bbeta}$ denote a strict local maximum of $\pi(\bbeta|D_n)$. Then 
$\hat{\bbeta}$ is also a local maximum of $h_n(\bbeta)$. 
Let $B_{\delta}(\bbeta)$ denote an Euclidean ball of radius $\delta$
centered at $\bbeta$. Let $h_{i_{1},i_{2},\dots,i_{d}}(\bbeta)$
 denote the $d$-th order partial derivative $\frac{\partial^{d}h(\bbeta)}{\partial\bbeta_{i_{1}}\partial\bbeta_{i_{2}}\cdots\partial\bbeta_{i_{d}}}$, 
 let $H_{n}(\bbeta)$ denote the Hessian matrix of $h_{n}(\bbeta)$, let
 $h_{ij}$ denote the $(i,j)$-th component of the Hessian matrix, and  
 let $h^{ij}$ denote the $(i,j)$-component of the inverse of the Hessian matrix. 
 Recall that $\bgamma^{*}$  denotes the set of indicators for the 
 connections of the true sparse DNN,  
 $r_n$ denotes the size of the true sparse DNN, 
 and $K_n$ denotes the size of the fully connected DNN. 
 The following theorem justifies the Laplace approximation of 
 the posterior mean for a bounded function $b(\bbeta)$.

\begin{theorem} \label{MAPthem1}
Assume that there exist positive numbers $\epsilon$, $M$, $\eta$, and $n_0$ such that for any $n>n_0$,  the function $h_n(\bbeta)$ in (\ref{log-posterioreq}) 
satisfies the following conditions:
\begin{enumerate}
    \item[C.1]  
$|h_{i_1,\dots,i_d}(\hat{\bbeta})|<M$ 
hold for any $\bbeta\in B_{\epsilon}(\hat{\bbeta})$ and any $1\leq i_{1},\dots,i_{d}\leq K_{n}$, 
where $3 \leq d \leq 4$. 


\item[C.2] $|h^{ij}(\hat{\bbeta})|<M$ if
$\bgamma_{i}^*=\bgamma_{j}^*=1$
 and $|h^{ij}(\hat{\bbeta})| = O(\frac{1}{K_{n}^{2}})$ otherwise.

\item[C.3] $\det(-\frac{n}{2\pi}H_{n}(\hat{\bbeta}))^{\frac{1}{2}}\int_{\mR^{K_{n}}\setminus B_{\delta}(\hat{\bbeta})}
e^{n(h_{n}(\bbeta)-h_{n}(\hat{\bbeta}))}d\bbeta=O(\frac{r_{n}^{4}}{n})=o(1)$ for any 
$0<\delta<\epsilon$.
\end{enumerate}
For any bounded function $b(\bbeta)$, if $|b_{i_1,\dots,i_d}(\bbeta)|=|\frac{\partial^{d} b(\bbeta)}{\partial\bbeta_{i_{1}}\partial\bbeta_{i_{2}}\cdots\partial\bbeta_{i_{d}}}| <M$
holds for any $1 \leq d \leq 2$ and  any $1\leq i_{1},\dots,i_{d}\leq K_{n}$, then for the posterior 
mean of $b(\bbeta)$, we have
\[
\frac{\int b(\bbeta)e^{nh_{n}(\bbeta)}d\bbeta}{\int e^{nh_{n}(\bbeta)}d\bbeta}=b(\hat{\bbeta})+O\left(\frac{r_{n}^{4}}{n}\right).
\]
\end{theorem}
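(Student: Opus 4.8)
```latex
\textbf{Proof proposal.} The plan is to carry out a standard Laplace-approximation expansion of the ratio $\int b(\bbeta)e^{nh_n(\bbeta)}d\bbeta / \int e^{nh_n(\bbeta)}d\bbeta$ around the local maximizer $\hat\bbeta$, but tracking all constants in terms of $K_n$ and $r_n$ so that the final error is $O(r_n^4/n)$ rather than merely $O(K_n^?/n)$. First I would use condition C.3 to discard the contribution of the integration region outside $B_\delta(\hat\bbeta)$: both numerator and denominator, when restricted to $\mR^{K_n}\setminus B_\delta(\hat\bbeta)$ and normalized by $\det(-\tfrac{n}{2\pi}H_n(\hat\bbeta))^{1/2}e^{-nh_n(\hat\bbeta)}$, are $O(r_n^4/n)$ (the numerator because $b$ is bounded), so it suffices to analyze the integrals over $B_\delta(\hat\bbeta)$.

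On $B_\delta(\hat\bbeta)$ I would Taylor-expand $h_n(\bbeta)$ to fourth order about $\hat\bbeta$. Since $\hat\bbeta$ is a strict local maximum, the gradient vanishes and $h_n(\bbeta) = h_n(\hat\bbeta) + \tfrac12 (\bbeta-\hat\bbeta)^\top H_n(\hat\bbeta)(\bbeta-\hat\bbeta) + R_3(\bbeta) + R_4(\bbeta)$, where $R_3,R_4$ collect the third- and fourth-order terms and, by C.1, each partial derivative appearing there is bounded by $M$. Substituting $\bbeta = \hat\bbeta + \bu/\sqrt n$ and writing $e^{nh_n(\bbeta)} = e^{nh_n(\hat\bbeta)} e^{-\frac12 \bu^\top(-H_n(\hat\bbeta))\bu}\exp\{ n R_3 + n R_4\}$, the correction exponent $nR_3(\hat\bbeta+\bu/\sqrt n)$ is of order $\|\bu\|^3/\sqrt n$ and $nR_4$ of order $\|\bu\|^4/n$; I would expand $\exp\{nR_3 + nR_4\} = 1 + nR_3 + \tfrac12 n^2 R_3^2 + nR_4 + \cdots$ and integrate term by term against the Gaussian density with covariance $(-H_n(\hat\bbeta))^{-1}/n$. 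The same expansion applies to $b(\bbeta) = b(\hat\bbeta) + \nabla b(\hat\bbeta)^\top \bu/\sqrt n + O(\|\bu\|^2/n)$ using the stated bounds on the first two derivatives of $b$. Taking the ratio, the leading Gaussian normalizing factors cancel, the linear-in-$\bu$ terms integrate to zero by symmetry, and one is left with $b(\hat\bbeta)$ plus cross terms of the form (Gaussian moments) $\times$ (derivatives of $h_n$ or $b$).

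The key point — and the main obstacle — is bounding these cross terms by $O(r_n^4/n)$ rather than a power of $K_n$. A naive bound gives, e.g., a term $\tfrac{1}{n}\sum_{i,j,k,l} h_{ijk}(\hat\bbeta) h_{l}(\cdot)\,\mathrm{something}\times h^{ij}h^{kl}$, and the sums over $K_n^4$ indices would be catastrophic. This is exactly where condition C.2 is used: $h^{ij}(\hat\bbeta)$ is $O(1)$ only when both $i,j$ index connections of the true sparse DNN (there are $r_n$ of those, so $O(r_n^2)$ such pairs), and is $O(1/K_n^2)$ otherwise, so any sum $\sum_{i,j} |h^{ij}| \cdot (\text{bounded})$ is $O(r_n^2) + K_n^2\cdot O(1/K_n^2) = O(r_n^2)$. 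Carrying this through the relevant quadratic-in-Hessian-inverse terms (which come with two factors $h^{\cdot\cdot}$, hence a bound $O(r_n^2)\cdot O(r_n^2)=O(r_n^4)$) and the overall $1/n$ prefactor yields the claimed $O(r_n^4/n)$. I would also need to check that the Gaussian moments $E\|\bu\|^{2m}$ under the covariance $(-H_n(\hat\bbeta))^{-1}$ do not reintroduce powers of $K_n$; because the trace of the covariance is $\sum_i h^{ii}$ which by C.2 is again $O(r_n^2)$, these moments are controlled. Finally I would collect the error terms — the tail correction $O(r_n^4/n)$ from C.3, the third-order term $O(r_n^{?}/n)$, the $R_3^2$ term $O(r_n^4/n)$, and the fourth-order term $O(r_n^4/n)$ — and conclude that the posterior mean equals $b(\hat\bbeta) + O(r_n^4/n)$, as stated.
```
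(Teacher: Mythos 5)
Your proposal follows essentially the same route as the paper's proof: change of variables $\bu=\sqrt{n}(\bbeta-\hat{\bbeta})$, a Taylor expansion of $h_n$ and $b$ about $\hat{\bbeta}$ against the Gaussian reference density with covariance $-H_n(\hat{\bbeta})^{-1}$, condition C.3 to dispose of the region outside $B_{\delta}(\hat{\bbeta})$, and --- the decisive point, which you identify correctly --- condition C.2 to collapse sums over $K_n$ indices of products of entries $h^{ij}$ down to $O(r_n^2)$ per inverse-Hessian factor, yielding the $O(r_n^4/n)$ error. The only cosmetic differences are that the paper uses exact Lagrange-remainder forms of the expansions (avoiding your truncated series $1+nR_3+\tfrac12 n^2R_3^2+\cdots$ for the exponential) and bounds the mixed Gaussian moments $E|U_{i_1}\cdots U_{i_6}|$ via an explicit moment inequality together with a count of how many indices lie in $\bgamma^*$, rather than via your trace/double-sum argument, but both devices deliver the same bounds.
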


 Conditions C.1 and C.3 are typical conditions for Laplace approximation, see e.g., \cite{geisser1990validity}. Condition C.2 requires the inverse Hessian to have very small values for the elements corresponding to the false connections. 
To justify condition C.2, we note that for a multivariate normal distribution, the inverse Hessian is its covariance matrix. 
Thus, we expect that for the weights with small variance, their corresponding elements in the inverse Hessian matrix would be small as well. The following lemma quantifies the variance of the weights for the false connections.

\begin{lemma}\label{priorjust}
Assume that $\sup_n \int {|\bbeta_i|}^{2+\delta}\pi(\beta_i|D_n) d\bbeta_i \leq C < \infty \  a.s.$ for some constants $\delta > 0$ and $C > 0$ and $\rho(\epsilon_n) \asymp \pi(d(p_{\bbeta},p_{\mu^{*}})\geq \epsilon_n|D_n)$, where $\rho(\epsilon_n)$ is defined in
 Condition B.1. Then with an appropriate choice of prior hyperparameters and $\epsilon_n$, 
 $P^{*}\{E(\bbeta_i^2|D_n) \prec  \frac{1}{K_n^{2H_n-1}}\}  \geq 1-2e^{-n\epsilon_n^2/4}$ holds
 for any false connection $c_i$ in $\bgamma^*$ (i.e., $\bgamma_{i}^{*}=0$).
\end{lemma}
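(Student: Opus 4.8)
The plan is to split the posterior second moment of the false connection $c_i$ according to the latent spike/slab label and to control the two pieces by completely different arguments. Using $\pi(\bgamma_i=1\mid\bbeta_i)+\pi(\bgamma_i=0\mid\bbeta_i)=1$, write $E(\bbeta_i^2\mid D_n)=T_1+T_2$ with $T_1=\int\bbeta_i^2\,\pi(\bgamma_i=0\mid\bbeta_i)\,\pi(\bbeta_i\mid D_n)\,d\bbeta_i$ and $T_2=\int\bbeta_i^2\,\pi(\bgamma_i=1\mid\bbeta_i)\,\pi(\bbeta_i\mid D_n)\,d\bbeta_i$. The term $T_1$ is a deterministic ``prior-shrinkage'' quantity: from the explicit form (\ref{Lapprob}), $\pi(\bgamma_i=0\mid\bbeta_i)=\tilde a_i/(\tilde a_i+\tilde b_i)\le\min\{1,\tilde a_i/\tilde b_i\}$ with $\tilde a_i/\tilde b_i=\frac{(1-\lambda_n)\sigma_{1,n}}{\lambda_n\sigma_{0,n}}\exp\{-\tfrac12\bbeta_i^2(\sigma_{0,n}^{-2}-\sigma_{1,n}^{-2})\}$; bounding $\bbeta_i^2\pi(\bgamma_i=0\mid\bbeta_i)$ by $\bbeta_i^2$ when $\bbeta_i^2$ lies below the crossover value $\bbeta_*^2:=(\sigma_{0,n}^{-2}-\sigma_{1,n}^{-2})^{-1}\log\frac{(1-\lambda_n)\sigma_{1,n}}{\lambda_n\sigma_{0,n}}$, and by the (there monotonically decreasing) tail $\bbeta_i^2\,\tilde a_i/\tilde b_i$ above it, gives $\bbeta_i^2\pi(\bgamma_i=0\mid\bbeta_i)\le\bbeta_*^2\lesssim\sigma_{0,n}^2\log\!\big(\sigma_{1,n}/(\lambda_n\sigma_{0,n})\big)$ for every $\bbeta_i$, hence $T_1\le\bbeta_*^2$ because $\pi(\cdot\mid D_n)$ integrates to one. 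Since Theorem \ref{2normal} constrains $\sigma_{0,n}$ only through upper bounds, I would pick $\sigma_{0,n}$ small enough within the admissible range that $\bbeta_*^2\prec K_n^{-(2H_n-1)}$ deterministically; this is part of the ``appropriate choice of prior hyperparameters''.

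For $T_2$ I would apply Hölder's inequality with exponents $\tfrac{2+\delta}{2}$ and $\tfrac{2+\delta}{\delta}$ to $\bbeta_i^2\cdot\pi(\bgamma_i=1\mid\bbeta_i)$ integrated against $\pi(\bbeta_i\mid D_n)$: the assumed uniform $(2+\delta)$-th moment bounds the first factor by $C^{2/(2+\delta)}$, and $0\le\pi(\bgamma_i=1\mid\bbeta_i)\le1$ gives $\pi(\bgamma_i=1\mid\bbeta_i)^{(2+\delta)/\delta}\le\pi(\bgamma_i=1\mid\bbeta_i)$, so $T_2\le C^{2/(2+\delta)}q_i^{\delta/(2+\delta)}$, where $q_i$ is the marginal posterior inclusion probability of $c_i$. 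To control $q_i$ for a false connection, I would write $q_i=\int\sum_{\bgamma}e_{i\mid\nu(\bgamma,\bbeta)}\pi(\bgamma\mid\bbeta,D_n)\pi(\bbeta\mid D_n)d\bbeta$, use $e_{i\mid\nu(\bgamma^*,\bbeta^*)}=0$, and split the integral over $A(\epsilon_n)^c$ and $A(\epsilon_n)$ as in the proof of Theorem \ref{Selectlem}, which yields $q_i\le\rho(\epsilon_n)+\pi(A(\epsilon_n)\mid D_n)$; the hypothesis $\rho(\epsilon_n)\asymp\pi(d(p_{\bbeta},p_{\mu^*})\ge\epsilon_n\mid D_n)$ then gives $q_i\lesssim\pi(A(\epsilon_n)\mid D_n)$. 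By Theorem \ref{2normal}, on an event of $P^*$-probability at least $1-2e^{-n\epsilon_n^2/4}$ (matching the constant, if necessary, by a slight rescaling of $\epsilon_n$) one has $\pi(A(\epsilon_n)\mid D_n)\le 2e^{-cn\epsilon_n^2}$, so on that event $T_2\lesssim e^{-c\delta n\epsilon_n^2/(2+\delta)}$, and choosing $\epsilon_n$ (admissibly, but) large enough that $n\epsilon_n^2\gg H_n\log K_n$ — possible because $H_n\log K_n=o(n)$ under A.2, using $K_n\lesssim H_n\overline L^2 p_n$ and the sparsity budget A.2.1 — forces $T_2\prec K_n^{-(2H_n-1)}$ there. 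Adding the $T_1$ and $T_2$ bounds on that event proves the lemma.

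The step I expect to be the main obstacle is not any single inequality but the simultaneous bookkeeping: one has to exhibit one admissible tuple $(\lambda_n,\sigma_{0,n},\sigma_{1,n},\epsilon_n)$ for which (i) all conditions of Theorem \ref{2normal} hold, (ii) $\bbeta_*^2=O\!\big(\sigma_{0,n}^2\log(\sigma_{1,n}/(\lambda_n\sigma_{0,n}))\big)$ is $\prec K_n^{-(2H_n-1)}$, and (iii) $n\epsilon_n^2$ dominates $H_n\log K_n$ while $\epsilon_n\to0$. Items (i)--(ii) are comparatively easy because $\sigma_{0,n}$ can be shrunk freely, but (iii) is the delicate point and is precisely where A.2.1 (and, if $\varepsilon$ is small, possibly a mild extra restriction relating $r_n$ and $s_n$ to $H_n$) is needed; one also has to take a little care reconciling the two standard expressions for $q_i$ (through $e_{i\mid\nu(\bgamma,\bbeta)}$ versus $\pi(\bgamma_i=1\mid\bbeta)$) and aligning the Hellinger radius in Theorem \ref{2normal} (which controls $\{d>4\epsilon_n\}$) with the radius used to define $\rho(\epsilon_n)$.
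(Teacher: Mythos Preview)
Your proposal is correct, and it reaches the same conclusion as the paper through a closely related but differently organized argument. The paper fixes the specific hyperparameters $\lambda_n=K_n^{-(1+\tau')H_n}$, $\sigma_{0,n}=K_n^{-H_n}$, $\sigma_{1,n}=1$, defines the threshold $M_n=\frac{(4+2\tau')H_n\log K_n+2\log(1-\lambda_n)}{K_n^{2H_n}-1}$ (which is exactly your crossover point $\bbeta_*^2$), observes that $\{\bbeta_i^2>M_n\}=\{\pi(\bgamma_i=1\mid\bbeta_i)>1/2\}$ so that Markov gives $P(\bbeta_i^2>M_n\mid D_n)\le 2q_i$, and then splits $E(\bbeta_i^2\mid D_n)$ into the three magnitude regions $\{\bbeta_i^2\le M_n\}$, $\{M_n<\bbeta_i^2\le M_n^{-2/\delta}\}$, $\{\bbeta_i^2>M_n^{-2/\delta}\}$, bounding the middle piece by $M_n^{-2/\delta}\cdot 2q_i$ and the last by $CM_n$ via the $(2+\delta)$-moment assumption.

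Your spike/slab decomposition $E(\bbeta_i^2\mid D_n)=T_1+T_2$ is a different slicing of the same quantity: your deterministic bound $T_1\le\bbeta_*^2=M_n$ replaces the paper's first and third regions at once, and your H\"older step $T_2\le C^{2/(2+\delta)}q_i^{\delta/(2+\delta)}$ is a cleaner substitute for the paper's middle-region argument, avoiding the auxiliary cutoff $M_n^{-2/\delta}$. Both routes impose the same order requirement $n\epsilon_n^2\gg H_n\log K_n$ (the paper takes $\epsilon_n^2\ge(16/\delta+16)r_nH_n\log K_n/n$), and both lean on $q_i\le\hat\pi(4\epsilon_n)+\rho(4\epsilon_n)$ together with Theorem~\ref{2normal}. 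The caveat you flag about reconciling the two expressions for $q_i$ (via $e_{i\mid\nu(\bgamma,\bbeta)}$ versus $\pi(\bgamma_i=1\mid\bbeta_i)$) is real and is glossed over in the paper as well; your bookkeeping remarks on admissible $(\lambda_n,\sigma_{0,n},\sigma_{1,n},\epsilon_n)$ match what the paper does implicitly.
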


In addition, with an appropriate choice of prior hyperparameters, we can also show that $\pi(\bgamma_i=1|\bbeta)$ satisfies all 
the requirements of $b(\bbeta)$ in Theorem \ref{MAPthem1} with a probability tending to 1 as 
$n\to \infty$ (refer to Section 2.4 of the supplementary material for the detail). 
Then, by Theorem \ref{MAPthem1}, $q_k$ and $\pi(\bgamma_i=1|\hat{\bbeta})$ are approximately the same as $n\to \infty$, where $\pi(\bgamma_i=1|\hat{\bbeta})$ is as defined in (\ref{Lapprob}) but with $\bbeta$ replaced by $\hat{\bbeta}$. Combining with Theorem \ref{Selectlem}, we have that  $\pi(\bgamma_i=1|\hat{\bbeta})$ is a consistent estimator 
of $e_{i|\nu(\bgamma^*,\bbeta^*)}$.

\subsection{Asymptotically Optimal Generalization Bound} \label{predictsection}

This section shows the sparse BNN has asymptotically an optimal generalization bound. 
First, we introduce a PAC Bayesian bound due to
\cite{McAllester99b,McAllester99a}, where the
 acronym PAC stands for Probably Approximately Correct.
It states that with an arbitrarily high probability, the performance (as provided by a loss function) of a learning
algorithm is upper-bounded by a term decaying to an optimal value as more data is collected (hence ``approximately correct''). 
PAC-Bayes has proven over the past two decades 
to be a powerful tool to derive theoretical guarantees for many machine learning algorithms.
 
\begin{lemma}[PAC Bayesian bound]\label{pac}
Let $P$ be any data independent distribution on the machine parameters $\bbeta$, and $Q$ be any distribution that is potentially  data-dependent and absolutely continuous with respective to $P$. If the loss function $l(\bbeta,\bx,y)\in[0,1]$, then the following inequality holds with probability $1-\delta$,
\[
\int E_{\bx,y}l(\bbeta,\bx,y)dQ\leq \int \frac{1}{n}\sum_{i=1}^nl(\bbeta,\bx^{(i)},y^{(i)})dQ+\sqrt{\frac{d_0(Q,P)+\log\frac{2\sqrt n}{\delta}}{2n}},
\]
where $d_0(Q,P)$ denotes the Kullback-Leibler divergence between $Q$ and $P$, and $(\bx^{(i)},y^{(i)})$ denotes the $i$-th observation of the dataset.
\end{lemma}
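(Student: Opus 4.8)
The plan is to follow the now-standard three-ingredient proof of the PAC-Bayes inequality of \cite{McAllester99a,McAllester99b} (in the slightly sharper form later isolated by Maurer): an exponential-moment bound at a fixed parameter, a change-of-measure (duality) step to pass to the distributions $P$ and $Q$, and a final convexity/Pinsker step converting a divergence bound into the stated square-root bound. Throughout, for a fixed $\bbeta$ I write $L(\bbeta)=E_{\bx,y}l(\bbeta,\bx,y)$ for the population risk and $\hat L_n(\bbeta)=\frac1n\sum_{i=1}^n l(\bbeta,\bx^{(i)},y^{(i)})$ for the empirical risk, and for $q,p\in[0,1]$ I let $\mathrm{kl}(q\|p)=q\log\frac{q}{p}+(1-q)\log\frac{1-q}{1-p}$ denote the binary Kullback--Leibler divergence.

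First I would establish that for each fixed $\bbeta$,
\[
E_{D_n}\!\left[e^{\,n\,\mathrm{kl}(\hat L_n(\bbeta)\,\|\,L(\bbeta))}\right]\ \le\ 2\sqrt n ,
\]
which holds because $\hat L_n(\bbeta)$ is the average of $n$ i.i.d.\ $[0,1]$-valued variables with mean $L(\bbeta)$: a Chernoff-type estimate on the associated binomial tail together with a Stirling bound yields the $\sqrt n$ factor. Next, by the Donsker--Varadhan variational representation of KL divergence, for every realization of $D_n$ and any $Q\ll P$,
\[
n\!\int \mathrm{kl}(\hat L_n(\bbeta)\,\|\,L(\bbeta))\,dQ\ \le\ d_0(Q,P)+\log\!\int e^{\,n\,\mathrm{kl}(\hat L_n(\bbeta)\,\|\,L(\bbeta))}\,dP .
\]
Since $P$ is data-independent, Fubini's theorem and the moment bound give $E_{D_n}\!\int e^{\,n\,\mathrm{kl}(\hat L_n\|L)}\,dP\le 2\sqrt n$, so by Markov's inequality the event $\{\int e^{\,n\,\mathrm{kl}(\hat L_n\|L)}\,dP\le 2\sqrt n/\delta\}$ has probability at least $1-\delta$; on that event the previous display yields $\int \mathrm{kl}(\hat L_n\|L)\,dQ\le \big(d_0(Q,P)+\log(2\sqrt n/\delta)\big)/n$.

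Finally, joint convexity of $(q,p)\mapsto\mathrm{kl}(q\|p)$ and Jensen's inequality give $\mathrm{kl}\big(\int\hat L_n\,dQ\,\big\|\,\int L\,dQ\big)\le\int\mathrm{kl}(\hat L_n\|L)\,dQ$, and Pinsker's inequality $\mathrm{kl}(q\|p)\ge 2(q-p)^2$ then turns the bound of the previous paragraph into $2\big(\int L\,dQ-\int\hat L_n\,dQ\big)^2\le\big(d_0(Q,P)+\log(2\sqrt n/\delta)\big)/n$; taking square roots, and recalling that $\int\hat L_n(\bbeta)\,dQ=\int\frac1n\sum_{i=1}^n l(\bbeta,\bx^{(i)},y^{(i)})\,dQ$ and $\int L(\bbeta)\,dQ=\int E_{\bx,y}l(\bbeta,\bx,y)\,dQ$, gives exactly the asserted inequality. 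The only genuinely quantitative ingredient --- and the step I expect to be the main obstacle --- is the exponential-moment bound; the rest is soft (duality, Markov, Jensen, Pinsker). Its subtlety is that one cannot replace $\mathrm{kl}$ by its Pinsker lower bound $2(\hat L_n-L)^2$ \emph{before} taking the expectation, since $E_{D_n}[e^{2n(\hat L_n(\bbeta)-L(\bbeta))^2}]$ is actually unbounded ($2n$ being exactly the critical exponent); the sharper binary-KL concentration is therefore essential, and it is precisely what produces the $\log(2\sqrt n/\delta)$ term (rather than $\log(1/\delta)$) in the statement.
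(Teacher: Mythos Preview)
Your proof is correct and follows the now-standard Maurer refinement of McAllester's argument: the exponential-moment bound $E_{D_n}[e^{n\,\mathrm{kl}(\hat L_n\|L)}]\le 2\sqrt n$, Donsker--Varadhan duality, Markov, and then Jensen plus Pinsker to extract the square-root form. The paper, however, does not supply a proof of this lemma at all; it simply attributes the bound to \cite{McAllester99a,McAllester99b} and uses it as a black box in the proof of Theorem~\ref{classbge}. So there is nothing to compare against --- you have filled in a proof the paper omits, and the approach you chose is the standard one that yields exactly the stated constant $\log(2\sqrt n/\delta)$.
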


For the binary classification problem, the DNN model fits a predictive distribution as $\hat p_{1}(\bx;\bbeta):=\widehat Pr(y=1|\bx)=\mbox{logit}^{-1}(\mu(\bbeta,\bx))$ and $\hat p_{0}(\bx;\bbeta):=\widehat Pr(y=0|\bx)=1-\mbox{logit}^{-1}(\mu(\bbeta,\bx))$.
Given an observation $(\bx, y)$, we define the loss with margin $\nu>0$ as
\[
l_{\nu}(\bbeta,\bx,y) =1( \hat p_y(\bx;\bbeta) - \hat  p_{1-y}(\bx;\bbeta)< \nu ).
\]
Therefore, the empirical loss for the whole data set $\{\bx^{(i)},y^{(i)}\}_{i=1}^n$ is defined as 
$L_{emp,\nu}(\bbeta)=\sum l_\nu(\bbeta,\bx^{(i)},y^{(i)}) /n$, and the population loss is defined as 
$L_{\nu}(\bbeta)=E_{\bx,\by} l_\nu(\bbeta,\bx,y)$.

\begin{theorem}[Bayesian Generalization error for classification]\label{classbge} 
Suppose the conditions of Theorem \ref{2normal} hold.
For any $\nu>0$, when $n$ is sufficiently large, the following inequality holds with  probability greater than $1-\exp\{c_0n\epsilon_n^2\}$,
\[
\int L_0(\bbeta)d\pi(\bbeta|D_n)\leq \frac{1}{1-2\exp\{-c_1n\epsilon_n^2\}}\int L_{emp,\nu}(\bbeta)d\pi(\bbeta|D_n)+O(\epsilon_n+\sqrt{\log n/n}+\exp\{-c_1n\epsilon_n^2\}),
\]
for some $c_0$, $c_1>0$, where $\epsilon_n$ is as defined in Theorem \ref{2normal}.
\end{theorem}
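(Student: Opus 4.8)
The plan is to combine the PAC-Bayesian bound of Lemma \ref{pac} with the posterior consistency result of Theorem \ref{2normal}. Apply Lemma \ref{pac} with $P$ taken to be the mixture Gaussian prior $\pi(\bbeta)$ in (\ref{marprior}), $Q=\pi(\bbeta|D_n)$ the posterior, and the bounded loss $l=l_\nu(\bbeta,\bx,y)\in\{0,1\}$. This gives, with probability $1-\delta$,
\[
\int L_\nu(\bbeta)d\pi(\bbeta|D_n)\leq \int L_{emp,\nu}(\bbeta)d\pi(\bbeta|D_n)+\sqrt{\frac{d_0(\pi(\cdot|D_n),\pi)+\log(2\sqrt n/\delta)}{2n}}.
\]
The first task is to control the KL term $d_0(\pi(\cdot|D_n),\pi)$. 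I would bound it using the standard variational identity $d_0(\pi(\cdot|D_n),\pi)=\int \log\frac{\pi(\bbeta|D_n)}{\pi(\bbeta)}d\pi(\bbeta|D_n) \leq \log(1/\text{prior mass of a good set}) + n\epsilon_n^2$-type estimates that already appear implicitly in the proof of Theorem \ref{2normal}: the prior puts mass at least $e^{-Cn\epsilon_n^2}$ on a Kullback--Leibler neighborhood of $p_{\mu^*}$ of the true DNN, and on that event the log-likelihood ratio is $O(n\epsilon_n^2)$, so $d_0=O(n\epsilon_n^2)$ on a set of $D_n$ of probability at least $1-e^{-c n\epsilon_n^2}$. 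Substituting, the square-root term becomes $O(\epsilon_n+\sqrt{\log n/n})$, absorbing $\log(2\sqrt n/\delta)$ with $\delta=\exp\{-c_0 n\epsilon_n^2\}$ (legitimate since $n\epsilon_n^2\to\infty$).

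The second task is to pass from the margin loss $L_\nu$ on the left to the $0$-margin loss $L_0$, and to relate the posterior-averaged empirical margin loss to the empirical $0$-loss plus a vanishing term. For the left side, note $L_0(\bbeta)\leq L_\nu(\bbeta)$ trivially for $\nu>0$, so the PAC bound on $\int L_\nu\,d\pi(\cdot|D_n)$ immediately upper-bounds $\int L_0\,d\pi(\cdot|D_n)$. For the right side, I would split the posterior mass over $A(\epsilon_n)=\{\bbeta: d(p_\bbeta,p_{\mu^*})\geq \epsilon_n\}$ and its complement. On $A(\epsilon_n)^c$, a small Hellinger distance to the truth forces $|\hat p_y(\bx;\bbeta)-p_{\mu^*,y}(\bx)|$ to be small in an appropriate averaged sense, so the fraction of points whose margin falls in $(0,\nu)$ but which are correctly classified at margin $0$ is $O(\epsilon_n/\nu)$ — more precisely one uses that the Hellinger distance dominates (up to constants) the total-variation distance, and that the event $\{l_\nu=1, l_0=0\}$ requires the predictive probabilities of the two classes to be within $\nu$, an event of $p_{\mu^*}$-probability $O(\epsilon_n)$ via the posterior consistency. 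On $A(\epsilon_n)$, Theorem \ref{2normal} gives $\pi(A(\epsilon_n)|D_n)=O(e^{-cn\epsilon_n^2})$ with high probability, contributing at most that exponentially small amount to any $[0,1]$-valued loss. Collecting the pieces yields $\int L_{emp,\nu}(\bbeta)d\pi(\bbeta|D_n)\leq L_{emp,0}$-type control, but since the theorem as stated keeps $L_{emp,\nu}$ on the right, it suffices to carry the $O(e^{-c_1 n\epsilon_n^2})$ posterior-tail correction and the $\frac{1}{1-2e^{-c_1 n\epsilon_n^2}}$ renormalization factor that accounts for restricting to $A(\epsilon_n)^c$.

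Putting the two tasks together: on the intersection of the good event for the PAC bound and the good event for posterior consistency — which has probability at least $1-\exp\{c_0 n\epsilon_n^2\}$ after adjusting constants — one obtains
\[
\int L_0(\bbeta)d\pi(\bbeta|D_n)\leq \frac{1}{1-2\exp\{-c_1 n\epsilon_n^2\}}\int L_{emp,\nu}(\bbeta)d\pi(\bbeta|D_n)+O\!\left(\epsilon_n+\sqrt{\log n/n}+\exp\{-c_1 n\epsilon_n^2\}\right),
\]
as claimed. The main obstacle I anticipate is the KL-divergence bound: the PAC-Bayes term $d_0(\pi(\cdot|D_n),\pi)$ must be shown to be $O(n\epsilon_n^2)$ with high probability, which requires re-running the prior-mass and likelihood-ratio estimates underlying Theorem \ref{2normal} rather than merely quoting the contraction statement — in particular verifying that the mixture Gaussian prior, under the hyperparameter conditions of Theorem \ref{2normal}, charges a KL-ball around the true sparse DNN with log-probability $\gtrsim -n\epsilon_n^2$. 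A secondary technical point is making the margin-to-zero-loss comparison on $A(\epsilon_n)^c$ quantitative; this needs a uniform (in $\bbeta\in A(\epsilon_n)^c$) translation of Hellinger closeness into closeness of the logistic predictive probabilities, for which Assumption A.1 (bounded density of $\bx$) and the boundedness of the loss are the key inputs.
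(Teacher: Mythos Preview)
Your approach differs from the paper's in a way that creates a real gap. You apply Lemma~\ref{pac} with $P=\pi$ (the mixture Gaussian prior) and $Q=\pi(\cdot\mid D_n)$, which forces you to bound $d_0(\pi(\cdot\mid D_n),\pi)$. Your sketch for this --- ``the prior puts mass at least $e^{-Cn\epsilon_n^2}$ on a KL-ball \ldots and on that event the log-likelihood ratio is $O(n\epsilon_n^2)$, so $d_0=O(n\epsilon_n^2)$'' --- conflates two different objects. Writing $L_n(\bbeta)$ for the likelihood and $Z_n=\int L_n\pi$, one has $d_0(\pi(\cdot\mid D_n),\pi)=E_{\pi(\cdot\mid D_n)}[\log L_n(\bbeta)]-\log Z_n$. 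The prior-mass condition only controls $-\log Z_n\le -\log L_n^*+Cn\epsilon_n^2$ with high probability; it says nothing about the first term $E_{\pi(\cdot\mid D_n)}[\log L_n(\bbeta)]-\log L_n^*$, which is the posterior-averaged log-likelihood ratio and is \emph{not} controlled by Hellinger contraction alone. Posterior concentration (Theorem~\ref{2normal}) gives closeness in Hellinger distance, but Hellinger closeness does not uniformly bound the empirical log-likelihood ratio by $O(n\epsilon_n^2)$ on the data you actually observed. So the claimed $d_0(\pi(\cdot\mid D_n),\pi)=O(n\epsilon_n^2)$ is unjustified, and this is precisely the ``main obstacle'' you flag --- but your proposed fix (re-running the prior-mass estimates) does not close it.

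The paper circumvents this entirely by \emph{discretization}. It takes the sieve $B_n$ from the proof of posterior consistency, covers it by a finite net $\tilde B_n=\{\bbeta^{(l)}\}$ of cardinality $L\le \exp\{cn\epsilon_n^2\}$ (available from the covering-number bound, condition~(a) of Lemma~\ref{lem0}), and pushes the truncated posterior $\tilde\pi$ forward to a discrete distribution $\check\pi$ on $\tilde B_n$. Lemma~\ref{pac} is then applied with $P$ the \emph{uniform} prior on $\tilde B_n$, so that $d_0(\check\pi,P)\le \log L\le cn\epsilon_n^2$ trivially --- no likelihood-ratio control is needed. The margin $\nu$ is not used merely via $L_0\le L_\nu$; it is what allows the passage between $\tilde\pi$ and $\check\pi$ on both sides, through $l_0(\bbeta,\cdot)\le l_{c'\epsilon_n/2}(\bbeta^{(l)},\cdot)\le l_{c'\epsilon_n}(\bbeta,\cdot)$ when $\|\mu(\bbeta,\cdot)-\mu(\bbeta^{(l)},\cdot)\|_\infty\le c'\epsilon_n$. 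The factor $1/(1-2e^{-c_1n\epsilon_n^2})$ then arises naturally when passing from the truncated posterior $\tilde\pi$ back to $\pi(\cdot\mid D_n)$, using $\pi(B_n^c\mid D_n)\le 2e^{-c_1n\epsilon_n^2}$. In your route that factor has no natural origin, which is a symptom of the mismatch.
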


Theorem \ref{classbge} characterizes the relationship between Bayesian population risk $\int L_0(\bbeta)d\pi(\bbeta|D_n)$ and Bayesian empirical risk $\int L_{emp,\nu}(\bbeta)d\pi(\bbeta|D_n)$, and implies that the difference between them is $O(\epsilon_n)$. Furthermore, this generalization performance extends to any point estimator $\hat\bbeta$, as long as $\hat\bbeta$ belongs to the dominating posterior mode.

\begin{theorem}\label{generalization1C} 
Suppose that the conditions of Theorem \ref{2normal} hold and estimation $\hat\bbeta$ belongs to the dominating posterior mode under Theorem \ref{2normal}, then
for any $\nu>0$, the following inequality holds with probability greater than $1-\exp\{c_0n\epsilon_n^2\}$,
\[
 L_0(\hat\bbeta)\leq L_{emp,\nu}(\hat\bbeta)+O(\epsilon_n),
\]
for some $c_0>0$.
\end{theorem}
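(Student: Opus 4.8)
The plan is to adapt the proof of Theorem~\ref{classbge}: in place of the full posterior $\pi(\bbeta\mid D_n)$ I would run the PAC-Bayes step with the posterior restricted to the dominating mode, and then read the resulting bound off at $\hat\bbeta$ by a margin-robustness argument. Let $M$ be a ball centered at $\hat\bbeta$ small enough that (i) $\pi(M\mid D_n)$ is bounded away from $0$ --- this is what ``$\hat\bbeta$ belongs to the dominating posterior mode'' provides, since by Theorem~\ref{2normal} the posterior concentrates on $\{d(p_\bbeta,p_{\mu^*})\le 4\epsilon_n\}$ --- and (ii) $\sup_{\bbeta\in M}\sup_{\bx\in\Omega}|\mu(\bbeta,\bx)-\mu(\hat\bbeta,\bx)|\le \nu/4$, which can be arranged by shrinking $M$ because Assumptions~A.1 and~A.3 make $\bbeta\mapsto\mu(\bbeta,\cdot)$ Lipschitz uniformly on $\Omega$ (with constant polynomial in $n$). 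Let $Q_0$ be $\pi(\,\cdot\mid D_n)$ restricted to $M$ and renormalized, i.e.\ $Q_0(d\bbeta)=\pi(\bbeta\mid D_n)\,d\bbeta/\pi(M\mid D_n)$ on $M$.

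First I would transfer losses between $\hat\bbeta$ and $M$. Since the logistic link is $\tfrac14$-Lipschitz, for every $\bbeta\in M$ and every $(\bx,y)$ the classification margins of $\bbeta$ and $\hat\bbeta$ differ by at most $\tfrac12|\mu(\bbeta,\bx)-\mu(\hat\bbeta,\bx)|\le\nu/8$; hence $l_0(\hat\bbeta,\bx,y)\le l_{\nu/2}(\bbeta,\bx,y)$ and $l_{\nu/2}(\bbeta,\bx,y)\le l_\nu(\hat\bbeta,\bx,y)$ pointwise, and therefore $L_0(\hat\bbeta)\le\int L_{\nu/2}(\bbeta)\,dQ_0$ and $\int L_{emp,\nu/2}(\bbeta)\,dQ_0\le L_{emp,\nu}(\hat\bbeta)$. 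Then I would apply Lemma~\ref{pac} with the bounded loss $l_{\nu/2}$, the data-dependent distribution $Q_0$, the prior $P$ of~\eqref{marprior}, and $\delta=e^{-c_0 n\epsilon_n^2}$, which gives, with probability at least $1-e^{-c_0 n\epsilon_n^2}$,
\[
\int L_{\nu/2}(\bbeta)\,dQ_0\le\int L_{emp,\nu/2}(\bbeta)\,dQ_0+\sqrt{\frac{d_0(Q_0,P)+c_0 n\epsilon_n^2+\log(2\sqrt n)}{2n}}.
\]
Chaining this with the two inequalities above yields $L_0(\hat\bbeta)\le L_{emp,\nu}(\hat\bbeta)+\sqrt{(d_0(Q_0,P)+c_0 n\epsilon_n^2+\log(2\sqrt n))/(2n)}$.

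It remains to control $d_0(Q_0,P)$. Writing $L_n(\bbeta)=\prod_{i=1}^n p(y_i,\bx_i\mid\bbeta)$ and $m_n=\int L_n(\bbeta)\pi(\bbeta)\,d\bbeta$, one has $d_0(Q_0,P)=E_{Q_0}[\log L_n(\bbeta)]-\log m_n-\log\pi(M\mid D_n)$. The third term is $O(1)$ since $\pi(M\mid D_n)$ is bounded below; $\log m_n\ge \sum_i\log p_{\mu^*}(y_i,\bx_i)-O(n\epsilon_n^2)$ with $P^*$-probability $\to1$ by the prior-mass (Kullback--Leibler neighborhood) bound from the proof of Theorem~\ref{2normal}; and $E_{Q_0}[\log L_n(\bbeta)]\le \sum_i\log p_{\mu^*}(y_i,\bx_i)+O(n\epsilon_n^2)$ with $P^*$-probability $\to1$, by the control of the likelihood ratio $L_n(\bbeta)/\prod_i p_{\mu^*}(y_i,\bx_i)$ over the Hellinger-small region $\{d(p_\bbeta,p_{\mu^*})\lesssim\epsilon_n\}\supseteq M$ that underlies both Theorem~\ref{2normal} and Theorem~\ref{classbge}. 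Hence $d_0(Q_0,P)=O(n\epsilon_n^2)$, and since $\epsilon_n^2\gtrsim\zeta_n^2\gtrsim(\log n)/n$ for the $\epsilon_n$ constructed in Theorem~\ref{2normal}, the square-root term is $O(\epsilon_n)$. A union bound over the PAC-Bayes event and the events of Theorem~\ref{2normal} then gives $L_0(\hat\bbeta)\le L_{emp,\nu}(\hat\bbeta)+O(\epsilon_n)$ with probability at least $1-e^{-c_0 n\epsilon_n^2}$, which is the claim.

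The hard part will be the estimate $d_0(Q_0,P)=O(n\epsilon_n^2)$: truncating the posterior to the dominating mode must not inflate its Kullback--Leibler divergence from the prior beyond order $n\epsilon_n^2$, even though the ambient dimension $K_n$ may greatly exceed $n$. The mechanism is that the large inactive-coordinate contributions to the two terms $E_{Q_0}[\log q_0(\bbeta)]$ and $-E_{Q_0}[\log\pi(\bbeta)]$ of $d_0(Q_0,P)$ --- each of magnitude $\approx K_n|\log\sigma_{0,n}|$ --- cancel, so the effective dimension entering $d_0(Q_0,P)$ is $r_n$ rather than $K_n$; this cancellation, together with the uniform log-likelihood control over the Hellinger-small region, is exactly what the posterior-consistency machinery behind Theorem~\ref{2normal} and its use in Theorem~\ref{classbge} delivers, so I would quote it rather than redo it. Everything else --- the margin transfer and the use of Lemma~\ref{pac} --- is routine.
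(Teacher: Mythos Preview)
Your approach differs from the paper's and has a genuine gap. The paper does \emph{not} run PAC--Bayes again for the point estimate; instead it reuses the finite $\epsilon_n$-covering $\tilde B_n=\{\bbeta^{(1)},\dots,\bbeta^{(L)}\}$ of $B_n$ (with $\log L\le cn\epsilon_n^2$) built in the proof of Theorem~\ref{classbge}, applies Hoeffding plus a union bound over these $L$ points (Lemma~\ref{pac-2}), and then, since ``$\hat\bbeta$ belongs to the dominating mode'' means precisely $\hat\bbeta\in B_n$, picks the nearest covering point $\bbeta^{(l)}$ and uses the margin transfer~(\ref{ge0}) once at that single point:
\[
L_0(\hat\bbeta)\le L_{c'\epsilon_n/2}(\bbeta^{(l)})\le L_{emp,c'\epsilon_n/2}(\bbeta^{(l)})+\sqrt{\tfrac{\log L+\log(1/\delta)}{2n}}\le L_{emp,c'\epsilon_n}(\hat\bbeta)+O(\epsilon_n).
\]
No posterior mass, no KL computation, no integration is needed.

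The gap in your argument is step~(i). In this paper ``$\hat\bbeta$ belongs to the dominating posterior mode'' is explicitly defined (just after Theorem~\ref{generalization1C}) to mean $\hat\bbeta\in B_n$, the large set in~(\ref{bset}) whose posterior mass exceeds $1-e^{-cn\epsilon_n^2}$. It does \emph{not} say the posterior concentrates on any small ball around $\hat\bbeta$. Your ball $M$ must have radius at most $\nu/4$ divided by a Lipschitz constant that is polynomial (in fact roughly $(nE_n)^{H_n}$) in $n$, so $M$ is exponentially small; nothing in Theorem~\ref{2normal} guarantees $\pi(M\mid D_n)$ is bounded away from $0$, and in general it will not be. Once $-\log\pi(M\mid D_n)$ is uncontrolled, so is $d_0(Q_0,P)$, and your PAC--Bayes bound collapses. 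The ``cancellation'' you describe for the inactive coordinates would, at best, reduce $d_0(Q_0,P)$ to the contribution of the active block plus $-\log\pi(M\mid D_n)$; it does not address this term. Your upper bound on $E_{Q_0}[\log L_n(\bbeta)]$ is also not available from Hellinger closeness alone. The covering/finite-class route avoids all of these issues.
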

It is worth to clarify that the statement ``$\hat\bbeta$ belongs to the dominating posterior mode'' means $\hat\bbeta\in B_n$ where 
$B_n$ is defined in the proof of Theorem \ref{2normal} and its posterior is greater than  $1-\exp\{-cn\epsilon_n^2\}$ for some $c>0$. Therefore, if $\hat\bbeta\sim \pi(\bbeta|D_n)$, i.e., $\hat\bbeta$ is one valid posterior sample, then with high probability, it belongs to the dominating posterior mode.
The proof of the above two theorems can be found in the supplementary material.

Now we consider the generalization error for regression models.
Assume the following additional assumptions:
\begin{itemize}
    \item[D.1] The activation function $\psi\in[-1,1]$.
    \item[D.2] The last layer weights and bias in $\bbeta^*$ are restricted to the interval $[-F_n, F_n]$ for some $F_n\leq E_n$, while $F_n \to \infty$ is still allowed as $n \to \infty$.
    \item[D.3] $\max_{\bx\in\Omega}|\mu^*(\bx)|\leq F$ for some constant $F$.
\end{itemize}
Correspondingly, the priors of the last layer weights and bias are truncated on $[-F_n, F_n]$, i.e., the two normal mixture prior (\ref{marprior}) truncated on $[-F_n, F_n]$.
By the same argument of Theorem S1 (in the supplementary material), Theorem \ref{2normal} still holds.

Note that the Hellinger distance for regression problem is defined as
\[
d^2(p_\bbeta,p_{\mu^*})=\mathbb{E}_{\bx}\left(1-\exp\left\{-\frac{[\mu(\bbeta,\bx)-\mu^*(\bx)]^2}{8\sigma^2}\right\}\right).
\]
By our assumption, for any $\bbeta$ on the prior support, $|\mu(\bbeta,\bx)-\mu^*(\bx)|^2\leq (F+\overline L F_n)^2:=\overline F^2$, thus,
\begin{equation}\label{ge4}
d^2(p_\bbeta,p_{\mu^*})\geq {C_{\overline F}}E_{\bx}|\mu(\bbeta,\bx)-\mu^*(\bx)|^2,
\end{equation}
where $C_F=[1-\exp(-4\overline F^2/8\sigma^2)]/4\overline F^2$. 
Furthermore, (\ref{postcon}) implies that with probability at least $1-2\exp\{-cn\epsilon_n^2\}$, 
\begin{equation}\label{ge5}
    \int d^2(p_{\bbeta},p_{\mu^*}) d\pi(\bbeta|D_n)\leq 16\epsilon_n^2+2 e^{-cn \epsilon_n^2}.
\end{equation}

By Combining (\ref{ge4}) and (\ref{ge5}), we obtain the following Bayesian generalization error result:

\begin{theorem} (Bayesian generalization error for regression) \label{generalization2T}
Suppose the conditions of Theorem \ref{2normal} hold. When $n$ is sufficiently large, the following inequality holds with probability at least $1-2\exp\{-cn\epsilon_n^2\}$,
\begin{equation}
\int E_{\bx}|\mu(\bbeta,\bx)-\mu^*(\bx)|^2d\pi(\bbeta|D_n) \leq [16\epsilon_n^2+2 e^{-cn \epsilon_n^2}]/C_F\asymp [\epsilon_n^2+e^{-cn \epsilon_n^2}]\overline L^2 F_n^2.
\end{equation}
\end{theorem}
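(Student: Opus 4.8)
The plan is to read off the stated bound by combining the posterior contraction rate of Theorem~\ref{2normal} with an elementary pointwise comparison between the regression Hellinger distance and the squared $L^2$ prediction error, and then to track the constants. As a preliminary I would verify the claim (made just before the statement) that Theorem~\ref{2normal} survives truncation of the last-layer prior to $[-F_n,F_n]$: rerunning the proof of Theorem~S1, the lower bound on the prior mass of a Hellinger neighborhood of $\bbeta^*$ is unaffected because D.2 forces the last-layer coordinates of $\bbeta^*$ into $[-F_n,F_n]$, so the truncated prior still charges that neighborhood, while the sieve and metric-entropy bound can only improve when the parameter space shrinks. Hence (\ref{postcon}) holds for the truncated prior, and so does (\ref{ge5}): on the $P^*$-event of probability at least $1-2e^{-cn\epsilon_n^2}$ on which $\pi[d(p_{\bbeta},p_{\mu^*})>4\epsilon_n|D_n]\le 2e^{-cn\epsilon_n^2}$, split
\[
\int d^2(p_{\bbeta},p_{\mu^*})\,d\pi(\bbeta|D_n)=\int_{d\le 4\epsilon_n}d^2\,d\pi(\bbeta|D_n)+\int_{d>4\epsilon_n}d^2\,d\pi(\bbeta|D_n),
\]
bound the first integrand by $16\epsilon_n^2$ and the second by $1$ (Hellinger distances lie in $[0,1]$), and apply the posterior-mass bound to the second region.

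Next I would establish the pointwise inequality (\ref{ge4}). Assumptions D.1--D.3 give a uniform bound $|\mu(\bbeta,\bx)-\mu^*(\bx)|\le 2\overline F$ for all $\bx\in\Omega$ and all $\bbeta$ in the truncated prior support: D.1 bounds the last hidden layer's output entrywise by $1$, D.2 bounds the $\overline L$ last-layer weights and the last-layer bias by $F_n$ so that $|\mu(\bbeta,\bx)|\le(\overline L+1)F_n$, and D.3 bounds $|\mu^*|\le F$; note this argument needs D.1 (activation valued in $[-1,1]$, not merely Lipschitz) precisely because the interior weights are not truncated. Feeding $t=[\mu(\bbeta,\bx)-\mu^*(\bx)]^2/(8\sigma^2)$, which lies in $[0,a]$ with $a:=4\overline F^2/(8\sigma^2)$, into the concavity inequality $1-e^{-t}\ge(1-e^{-a})\,t/a$ (valid for $0\le t\le a$), and then integrating over $\bx$, yields (\ref{ge4}) with $C_{\overline F}=[1-e^{-4\overline F^2/(8\sigma^2)}]/(4\overline F^2)$.

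Finally I would assemble the pieces: integrate (\ref{ge4}) against $\pi(\bbeta|D_n)$, divide by $C_{\overline F}$, and substitute (\ref{ge5}) to obtain $\int E_{\bx}|\mu(\bbeta,\bx)-\mu^*(\bx)|^2\,d\pi(\bbeta|D_n)\le[16\epsilon_n^2+2e^{-cn\epsilon_n^2}]/C_{\overline F}$ on the same high-probability event. For the order statement, D.2 gives $F_n\to\infty$, hence $\overline F=F+\overline L F_n\to\infty$ and $1-e^{-4\overline F^2/(8\sigma^2)}\to1$, so $1/C_{\overline F}\asymp\overline F^2\asymp\overline L^2F_n^2$ (since $F$ is a fixed constant and $\overline L\ge1$); multiplying through gives the asserted $\asymp[\epsilon_n^2+e^{-cn\epsilon_n^2}]\overline L^2F_n^2$.

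I expect the only nontrivial step to be the preliminary one — verifying carefully that the posterior contraction of Theorem~\ref{2normal}/Theorem~S1 genuinely transfers to the truncated prior (the prior-mass estimate near $\bbeta^*$ and the sieve construction both need rechecking, although both go through for the stated reasons) — together with the uniform boundedness of $\mu(\bbeta,\bx)-\mu^*(\bx)$ over the prior support, which is exactly what makes $\overline F=F+\overline L F_n$ the right scale. Everything else is manipulation of elementary inequalities and bookkeeping of orders.
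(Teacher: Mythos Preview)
Your proposal is correct and follows exactly the approach the paper takes: the paper derives (\ref{ge4}) and (\ref{ge5}) in the main text immediately before stating the theorem and then says the result follows by combining them (the supplementary material simply records ``The proofs are straightforward and thus omitted''). Your write-up is in fact more careful than the paper's, in that you spell out the concavity step for (\ref{ge4}), the split for (\ref{ge5}), and the check that the truncated last-layer prior does not disturb Theorem~\ref{2normal}.
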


Similarly, if an estimator $\hat\bbeta$ belongs to the dominating posterior mode (refer to the discussion of Theorem \ref{generalization1C} for more details), then $\hat\bbeta\in \{\bbeta: d(p_{\bbeta},p_{\mu^*})\leq 4\epsilon_n \}$ and the following result hold:
\begin{theorem} \label{generalization2C} Suppose the conditions of Theorem \ref{2normal} hold, then
\begin{equation}
 E_{\bx}|\mu(\hat\bbeta,\bx)-\mu^*(\bx)|^2 \leq [16\epsilon_n^2]/C_F\asymp \epsilon_n^2\overline L^2 F_n^2.
\end{equation}
\end{theorem}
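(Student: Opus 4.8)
The plan is to derive this as an essentially deterministic corollary of the concentration bound already recorded in (\ref{postcon}) together with the pointwise comparison (\ref{ge4}); the only work is to assemble the pieces. First I would invoke the characterization of the ``dominating posterior mode'' given in the discussion of Theorem \ref{generalization1C}: by construction the set $B_n$ appearing in the proof of Theorem \ref{2normal} carries posterior mass at least $1-\exp\{-cn\epsilon_n^2\}$ and is contained in $\{\bbeta: d(p_\bbeta,p_{\mu^*})\leq 4\epsilon_n\}$, since $\pi(d(p_\bbeta,p_{\mu^*})>4\epsilon_n\mid D_n)\leq 2e^{-cn\epsilon_n^2}$ by (\ref{postcon}). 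Hence if $\hat\bbeta$ belongs to the dominating posterior mode, then $d(p_{\hat\bbeta},p_{\mu^*})\leq 4\epsilon_n$, so that $d^2(p_{\hat\bbeta},p_{\mu^*})\leq 16\epsilon_n^2$.

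Second, I would apply the lower bound (\ref{ge4}) at $\bbeta=\hat\bbeta$. To do this legitimately I must check that $\hat\bbeta$ lies in the prior support truncated on $[-F_n,F_n]$ in the last layer, which is where the estimate $|\mu(\bbeta,\bx)-\mu^*(\bx)|^2\leq (F+\overline L F_n)^2=\overline F^2$ used to obtain (\ref{ge4}) holds. This is immediate: the posterior $\pi(\bbeta\mid D_n)$ is absolutely continuous with respect to that truncated prior, so its support, and a fortiori the dominating mode $B_n$, is contained in it; then Assumptions D.1 and D.3 give $|\mu(\hat\bbeta,\bx)|\leq \overline L F_n$ and $|\mu^*(\bx)|\leq F$ uniformly over $\bx\in\Omega$. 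Therefore (\ref{ge4}) yields $E_{\bx}|\mu(\hat\bbeta,\bx)-\mu^*(\bx)|^2\leq d^2(p_{\hat\bbeta},p_{\mu^*})/C_F\leq 16\epsilon_n^2/C_F$, which is the first inequality in the theorem.

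Third, for the asymptotic equivalence I would analyze $C_F=[1-\exp(-4\overline F^2/8\sigma^2)]/(4\overline F^2)$ as $n\to\infty$. Under D.2 we have $F_n\to\infty$, so $\overline F=F+\overline L F_n\to\infty$ and the numerator tends to $1$; consequently $C_F\asymp 1/\overline F^2\asymp 1/(\overline L F_n)^2$, since $F$ is a fixed constant while $\overline L F_n\to\infty$. Substituting into $16\epsilon_n^2/C_F$ gives $16\epsilon_n^2/C_F\asymp \epsilon_n^2\overline L^2 F_n^2$, as claimed.

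The only genuinely delicate point is the first step: one must be certain that ``belonging to the dominating posterior mode'' has been defined so that it really does force $d(p_{\hat\bbeta},p_{\mu^*})\leq 4\epsilon_n$, i.e. that the set $B_n$ from the proof of Theorem \ref{2normal} sits inside the Hellinger ball of radius $4\epsilon_n$ about $p_{\mu^*}$. If $B_n$ were only known to have large posterior mass without this nesting, one would instead argue that $\hat\bbeta$ may be taken in $B_n\cap\{\bbeta: d(p_\bbeta,p_{\mu^*})\leq 4\epsilon_n\}$, whose posterior mass is still at least $1-3e^{-cn\epsilon_n^2}$ by (\ref{postcon}) and a union bound; either way the remaining estimates are routine. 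Everything else---the boundedness check and the asymptotics of $C_F$---is elementary bookkeeping reusing ingredients already in place for Theorem \ref{generalization2T}.
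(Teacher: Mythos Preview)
Your proposal is correct and matches the paper's intended argument exactly. The paper omits the proof as ``straightforward,'' and the text immediately preceding the theorem already records $\hat\bbeta\in\{\bbeta:d(p_\bbeta,p_{\mu^*})\leq 4\epsilon_n\}$ as part of the setup; your write-up simply spells out the combination of that membership with the comparison inequality (\ref{ge4}) and the asymptotics of $C_F$, which is precisely what the paper leaves implicit. Your caution about whether $B_n$ is literally nested in the Hellinger ball (and the union-bound fix if not) is more careful than the paper itself.
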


\section{Consistent Sparse DNN: Computation}
\label{Sectcomp}

The theoretical results established in previous sections
show that the Bayesian sparse DNN can be learned with a mixture Gaussian prior and, more importantly, the posterior inference is not necessarily directly drawn based on posterior samples, which avoids the convergence issue of the MCMC implementation for large complex models.
As shown in Theorems \ref{MAPthem1}, \ref{generalization1C} and \ref{generalization2C}, for the sparse BNN, a good local maximizer of the log-posterior distribution also guarantees  consistency of the network structure selection and asymptotic optimality of the network generalization performance. 
This local maximizer, in the spirit of condition C.3 and the conditions of Theorems \ref{generalization1C} and \ref{generalization2C}, is not necessarily a MAP estimate,  
as the factor 
$\det(-\frac{n}{2\pi}H_{n}(\hat{\bbeta}))^{\frac{1}{2}}$ can play an important role. 
In other words, an estimate of $\bbeta$ lies in a wide valley of the energy landscape is generally 
preferred. This is consistent with the view of many other authors, see e.g., \cite{widevalley1} and
\cite{widevalley2}, where different techniques have been developed to enhance convergence 
of SGD to a wide valley of the energy landscape.

Condition C.3 can be re-expressed as 
$\int_{\mR^{K_{n}}\setminus B_{\delta}(\hat{\bbeta})}
e^{nh_{n}(\bbeta))}d\bbeta=o(\det(-\frac{n}{2\pi}H_{n}(\hat{\bbeta}))^{-\frac{1}{2}}e^{nh_{n}(\hat{\bbeta})})$, which requires that $\hat\bbeta$ is a dominating mode of the posterior. 
 Based on this observation, we suggest to use the Bayesian  evidence \citep{liang2005evidence, mackay1992evidence} as the criterion  for eliciting estimates of $\bbeta$ produced by an optimization method in multiple runs with different initializations. 
The Bayesian evidence is calculated as  
$\det(-\frac{n}{2\pi}H_{n}(\hat{\bbeta}))^{-\frac{1}{2}}e^{nh_{n}(\hat{\bbeta})}$. 
Since Theorem \ref{Selectlem} ensures only consistency of structure selection but not consistency of parameter estimation, 
we suggest to refine its nonzero weights by a short 
optimization process after structure selection. The complete algorithm is summarized in Algorithm \ref{evidence}.

\begin{algorithm}[tb]
\caption{Sparse DNN Elicitation with Bayesian Evidence}
\label{evidence}
\begin{algorithmic}
\STATE Input: $T$---the number of independent tries in training the DNN, and the prior hyperparameters $\sigma_{0,n}$, $\sigma_{1,n}$, and $\lambda_n$.

  \FOR{$t=1,2, ..., T$}
   \STATE (i) {\it Initialization}: Randomly initialize the weights and biases, set $\bgamma_{i}$=1 for $i=1,2,\ldots,K_n$.
   \STATE (ii) {\it Optimization}: Run SGD to maximize $h_n(\bbeta)$ as defined in (\ref{log-posterioreq}). Denote the estimate of $\bbeta$ by $\hat{\bbeta}$.
   \STATE (iii) {\it Connection sparsification}: For each $i \in \{1,2,\ldots,K_n\}$, set $\bgamma_{i}=1$ if 
   $|\hat{\bbeta}_i|>\frac{\sqrt{2} \sigma_{0,n}\sigma_{1,n}}{\sqrt{\sigma_{1,n}^2-\sigma_{0,n}^2}} \sqrt{\log\left( \frac{1-\lambda_n}{\lambda_n}  \frac{\sigma_{1,n}}{\sigma_{0,n}} \right)}$ and 0 otherwise. 
  Denote the yielded sparse DNN structure by $\bgamma^t$,
   and set $\hat{\bbeta}_{\bgamma^t}=\hat{\bbeta} \circ \bgamma^t$, where $\circ$ denotes element-wise production.
  
   \STATE (iv) {\it Nonzero-weights refining}: Refine the nonzero weights of the sparsified DNN by maximizing 
    \begin{equation} \label{object2}
     h_{n}(\bbeta_{\bgamma^t})=\frac{1}{n}\sum_{i=1}^{n}\log(p(y_{i},\boldsymbol{x}_{i}|\bbeta_{\bgamma^t}))+\frac{1}{n}\log(\pi(\bbeta_{\bgamma^t})), 
     \end{equation}
     which can be accomplished by running SGD for a few epochs with the initial value $\hat{\bbeta}_{\bgamma^t}$. 
    Denote the resulting DNN model by $\tilde{\bbeta}_{\bgamma^t}$.
   \STATE (v) {\it Model evaluation}: Calculate the Bayesian Evidence: ${Evidence}^t = \det(-\frac{n}{2\pi}H_{n}(\tilde{\bbeta}_{\bgamma^t}))^{-\frac{1}{2}}e^{nh_{n}
   (\tilde{\bbeta}_{\bgamma^t})}$,   where $H_n(\bbeta_{\bgamma})=\frac{\partial^2 h_n(\bbeta_{\bgamma})}{\partial\bbeta_{\bgamma}
   \partial^T\bbeta_{\bgamma}}$ is the Hessian matrix.
   
\ENDFOR
\STATE Output $\tilde{\bbeta}_{\bgamma^t}$ with the largest Bayesian evidence.
\end{algorithmic}
\end{algorithm}

For a large-scale neural network, even if it is sparse, the number of nonzero elements 
 can easily exceed a few thousands or millions, see e.g. the networks 
considered in Section \ref{RealSection}. In this case, evaluation of the determinant of 
the Hessian matrix can be very time consuming. 
For this reason, we suggest to approximate 
the log(Bayesian evidence) by 
$nh_n(\hat{\bbeta_{\bgamma}}) -\frac{1}{2}|\bgamma|\log(n)$ with the detailed 
arguments given in Section 2.5 of the supplementary material. As explained there, if  the prior information imposed on the sparse DNNs is further ignored, then the sparse DNNs can be elicited by BIC.
  
 
 The main parameters for Algorithm \ref{evidence} are the prior hyperparameters 
 $\sigma_{0,n}$, $\sigma_{1,n}$, and $\lambda_n$.
Theorem \ref{2normal} provides theoretical suggestions for the choice of the prior-hyperparameters, see also the proof of Lemma \ref{priorjust} for a specific  setting for them. 
  Our theory allows $\sigma_{1,n}$ to grow with $n$ from the perspective of data fitting, but in our experience, the magnitude of weights tend to adversely affect the generalization ability of the network. For this reason, we usually set $\sigma_{1,n}$ to a relatively small number such as 0.01 or 0.02, and then
 tune the values of $\sigma_{0,n}$ and 
$\lambda_n$ for the network sparsity as well as the network approximation error. As a trade-off, the resulting 
network might be a little denser than the ideal one. 
If it is too dense to satisfy the sparse constraint given in Assumption A.2.2, one might increase the value 
of $\sigma_{0,n}$ and/or decrease the value of $\lambda_n$,
and rerun the algorithm to get a sparser structure. This process can be repeated until the constraint is satisfied.

 Algorithm \ref{evidence} employs SGD 
to optimize the log-posterior of the BNN. Since SGD generally converges to a local optimal solution, the multiple initialization method is used in order to find a local optimum close to the global one. It is interesting to note that SGD has some nice properties in non-convex optimization: It works on the convolved
(thus smoothed) version of the loss function 
\citep{Kleinberg2018} and tends to converge to flat local minimizers which are with very high probability also global minimizers \citep{zhang2018theory}. 
In this paper, we set the number of initializations to $T=10$ as default unless otherwise stated. 
We note that Algorithm \ref{evidence} is not very sensitive to the value of $T$, although a large value of $T$ can generally improve its performance.
 

For network weight initialization, we adopted the standard method, see \cite{glorot2010understanding} for tanh activation and \cite{he2015delving} for ReLU activation, which ensures that the variance of the gradient of each layer is of the same order at the beginning of the training process.

\section{Numerical Examples}

This section demonstrates the performance of the proposed algorithm on synthetic datasets and real datasets.\footnote{The code to reproduce the results of the experiments can be found at \url{https://github.com/sylydya/Consistent-Sparse-Deep-Learning-Theory-and-Computation}}

\subsection{Simulated Examples}
For all simulated examples, the covariates 
$x_1,x_2,\ldots,x_p$ were generated in the following procedure: (i) simulate $e,z_{1},\dots,z_{p_n}$ independently from the truncated standard normal distribution on the interval $[-10, 10]$; and 
(ii) set $x_{i}=\frac{e+z_{i}}{\sqrt{2}}$ for $i=1,2,\ldots,p_n$. In this way, all the covariates fall into a compact set and are mutually correlated with a
correlation coefficient of about 0.5. 
 We generated 10 datasets for each example. Each dataset consisted of $n=10,000$ training
samples, 1000 validation samples and 1000 test samples. 
For comparison, the sparse input neural network (Spinn) \citep{feng2017sparse},
dropout \citep{srivastava2014dropout} , and 
dynamic pruning with feedback (DPF) \citep{lin2020dynamic}  methods were also applied to these examples. 
Note that the validation samples were only used by Spinn, 
but not by the other methods. The performances of these methods in 
variable selection or connection selection were measured  
using the false selection rate (FSR) and the negative selection rate (NSR)\cite{ye2018variable}:
\[
FSR=\frac{\sum_{i=1}^{10}|\hat{S}_{i}\backslash S|}{\sum_{i=1}^{10}|\hat{S}_{i}|}, \quad \quad 
NSR=\frac{\sum_{i=1}^{10}|S\backslash\hat{S}_{i}|}{\sum_{i=1}^{10}|S|},
\]
where $S$ is the set of true variables/connections, $\hat{S}_{i}$ is the set
of selected variables/connections for dataset $i$, and $|\hat{S}_{i}|$ is the
size of $\hat{S}_{i}$. For the regression examples, the prediction and fitting performances 
of each method were measured by mean square prediction error (MSPE) and mean square fitting
error (MSFE), respectively; and for the classification examples, they were measured by prediction accuracy (PA) and fitting accuracy (FA), respectively.  To make each method to achieve the best or nearly best performance, we intentionally set the training and nonzero-weight refining process  excessively long.
In general, this is unnecessary. For example, for the 
read data example reported in Section \ref{RealSection}, the nonzero-weight refining process consisted of only one epoch.

\subsubsection{Network Structure Selection}
We generated 10 datasets from the following neural network model: 
\[
y = \tanh(2\tanh(2x_1 - x_2)) + 2\tanh(\tanh(x_3 - 2x_4) - \tanh(2x_5)) + +0x_{6}+\cdots+0x_{1000}+ \varepsilon,
\]
where $\varepsilon\sim N(0,1)$ and is independent of $x_i$'s. 
We fit the data using a neural network 
with structure 1000-5-3-1 and $\tanh$ as the activation function. For each dataset, we ran SGD for 80,000 iterations to train the neural network with a learning rate of $\epsilon_{t}={0.01}$. 
The subsample size was set to 500.
For the mixture Gaussian prior, we set $\sigma_{1,n}=0.01$, $\sigma_{0,n}= 0.0005$, and  $\lambda_{n}=0.00001$. The number of independent tries was set to $T=10$.  
After structure selection, the DNN was retrained using SGD for 40,000 iterations. Over the ten datasets, we got MSFE$=1.030$ (0.004) and MSPE$=1.041$ (0.014), where the numbers in the 
parentheses denote the standard deviation of the estimates. This indicates a good 
approximation of the neural network to the underlying true function. 
In terms of variable selection, we got perfect results with $FSR = 0$ and $NSR = 0$. 
In terms of structure selection, under the above setting, our method selected a little 
more connections with $FSR = 0.377$ and $NSR = 0$. The left panel of Figure \ref{Structure_Selection} shows a network structure 
selected for one dataset, which includes a few more 
connections than the true network. This ``redundant'' connection selection 
phenomenon is due to that $\sigma_{1,n}=0.01$ was too
small, which enforces more connections to be included in the network in order to compensate the effect of the shrunk true connection weights. 
However, in practice, such an under-biased setting of $\sigma_{1,n}$ is usually preferred 
from the perspective of neural network training and prediction, 
which effectively prevents the neural network to include large connection weights. 
It is known that including large connection weights in the 
neural network is likely to cause vanishing gradients in training as well as 
large error in prediction especially when the future observations are beyond the 
range of training samples. We also note that such a ``redundant'' connection selection 
phenomenon can be alleviated by performing another round of structure selection 
after retraining. In this case, the number of false connections included in the 
network and their effect 
on the objective function (\ref{object2}) are small, the true connections can 
be easily identified even with an under-biased value of $\sigma_{1,n}$.  The right panel of 
Figure \ref{Structure_Selection} shows the network structure selected after retraining, 
which indicates that the true network structure can be identified almost exactly. 
Summarizing over the networks selected for the 10 datasets, we had 
$FSR = 0.152$ and $NSR = 0$ after retraining; that is, on average, there were only 
about 1.5 more connections selected than the true network for each dataset. 
This result is remarkable!


For comparison, we have applied the the sparse input neural network (Spinn) method \citep{feng2017sparse} to this example, where Spinn was run with a LASSO penalty and a regularization parameter of $\lambda = 0.05$. 
We have tried $\lambda\in \{0.01,0.02,\dots,0.1 \}$ and found that $\lambda=0.05$ generally led to a better structure selection result. 
In terms of structure selection, Spinn got $FSR = 0.221$ and $NSR = 0.26$. 
Even with another round of structure selection after retraining, Spinn only got $FSR = 0.149$ and $NSR = 0.26$. It indicates that Spinn missed some true connections, which is inferior to the proposed method.



\begin{figure}
\centering
\includegraphics[scale=0.495]{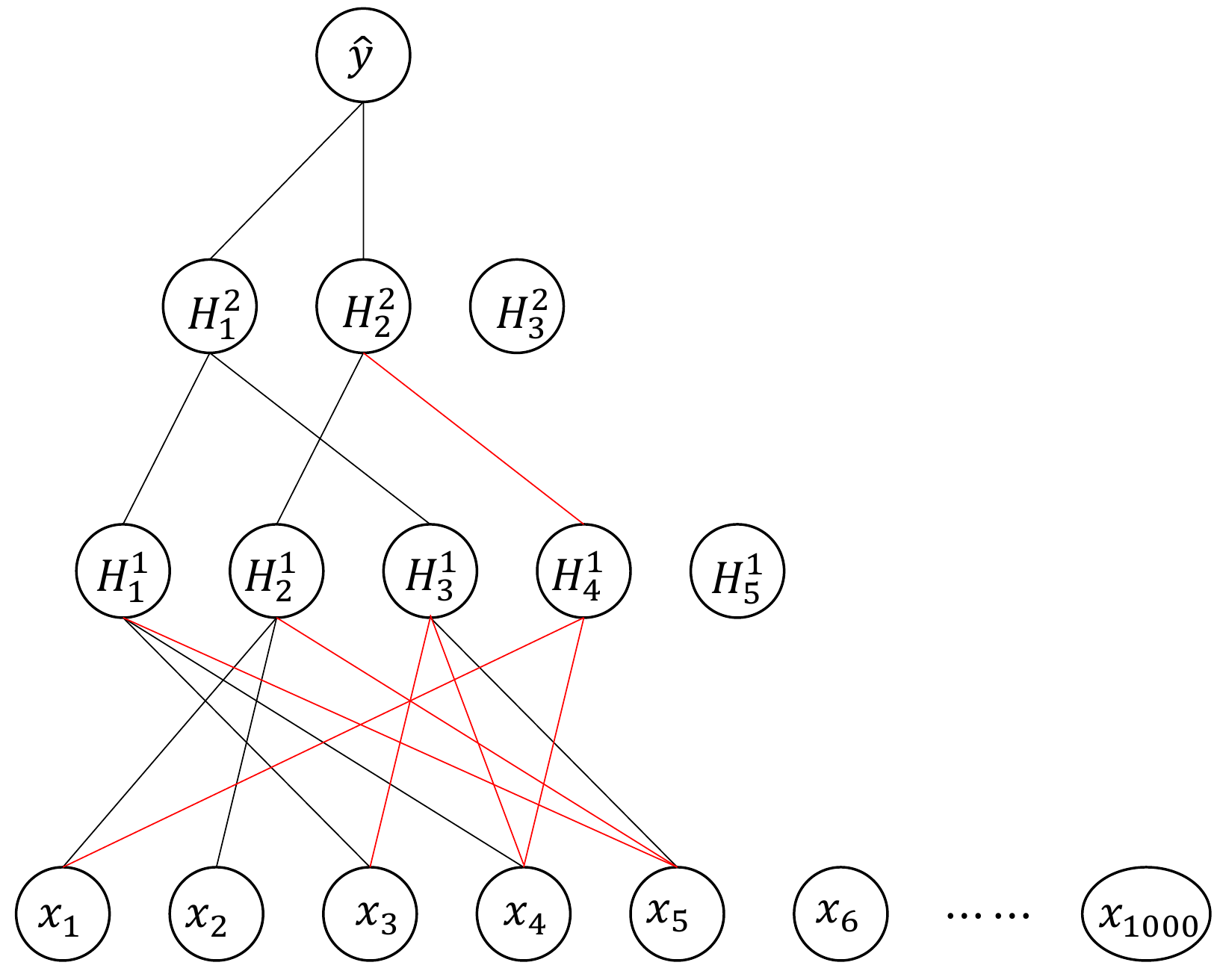}
\includegraphics[scale=0.495]{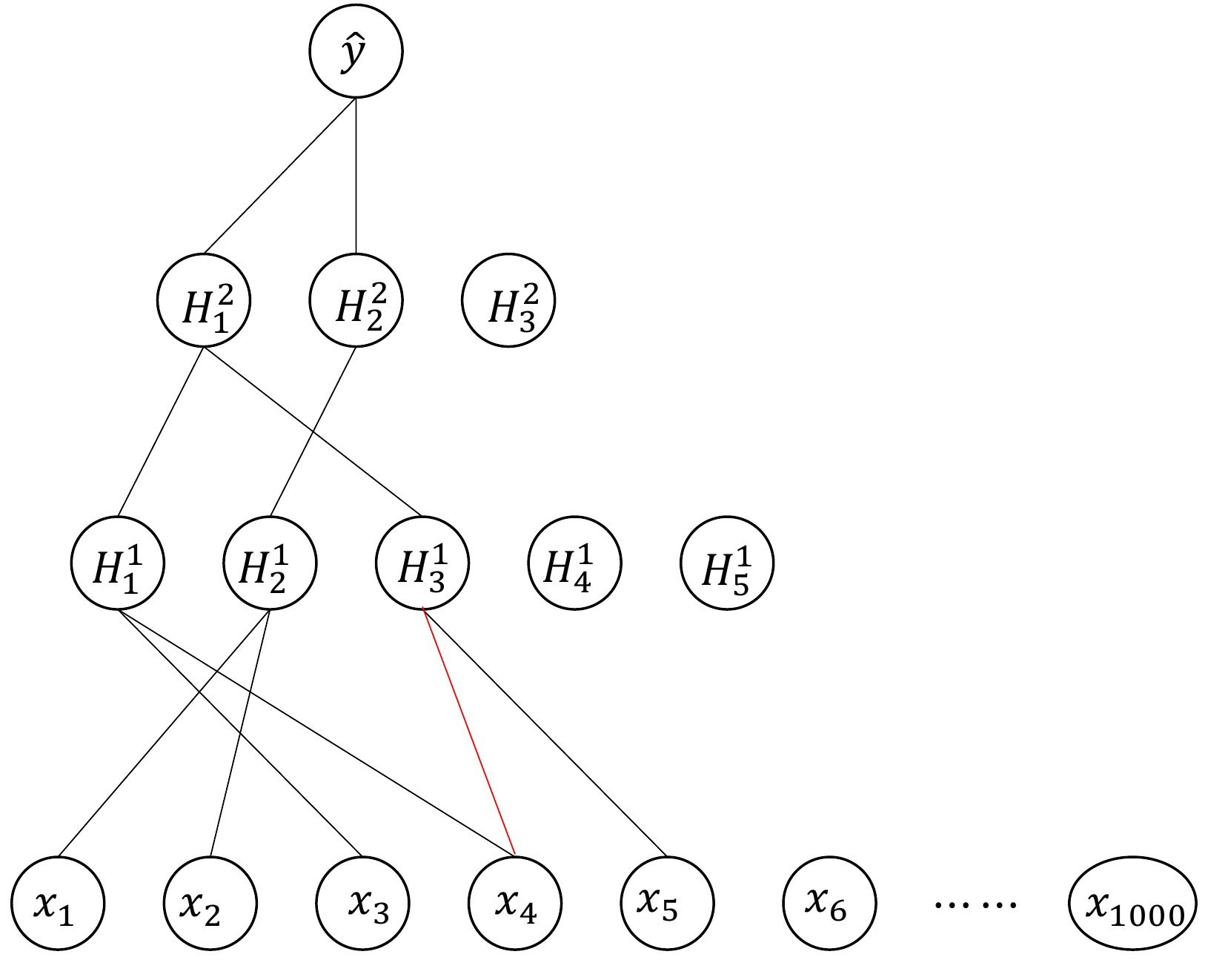}
\caption{The left and right panels show the network structures selected for one dataset 
after the stages of training and retraining, respectively, 
where the black lines show the connections that are selected and exist in the true model, 
and the red lines show the connections that are selected but do not exist 
in the true model. }
\label{Structure_Selection}
\end{figure}

\subsubsection{Nonlinear Regression} \label{regsect}

We generated 10 datasets from the following model:
\begin{equation} \label{nonlinear}
y=\frac{5x_{2}}{1+x_{1}^{2}}+5\sin(x_{3}x_{4})+2x_{5}+0x_{6}+\cdots+0x_{2000}+\varepsilon,
\end{equation}
where $\varepsilon\sim N(0,1)$ and is independent of $x_i$'s. 
We modeled the data by a 3-hidden layer neural network, which has 6, 4, and 3 
hidden units on the first, second and third hidden layers, respectively.  The tanh was used 
as the activation function. 
For each dataset, we ran SGD for 80,000 iterations to train the neural network with a learning rate  of $\epsilon_{t}={0.005}$. The subsample size was set to 500.
For the mixture Gaussian prior, we set $\sigma_{1,n}=0.01$, $\sigma_{0,n}= 0.0001$, and  $\lambda_{n}=0.00001$. The number of independent tries was set to $T=10$.  
After structure selection, the DNN was retrained using SGD for 40,000 iterations. 

For comparison, the Spinn,  dropout and DPF methods   
were also applied to this example with the same DNN structure and the same activation function. 
For Spinn, the regularization parameter for
the weights in the first layer was tuned from the set $\{0.01,0.02,\dots,0.1\}$.
For a  fair comparison, it
was also retrained for 40,000 iterations after structure selection as for the proposed method. 
For dropout, we set the dropout rate to be 0.2 for the first layer and 0.5 for the other layers. 
For DPF, we set the target pruning ratio to the ideal value 0.688\%, which is the ratio of the number of connections related to true variables and the total number of connections, i.e. $(12053 - 1995\times 6)/12053$ with 12053 being the total number of connections including the biases. The results were summarized in Table \ref{Simulation}.   

\begin{figure}
\centering
\includegraphics[scale=0.5]{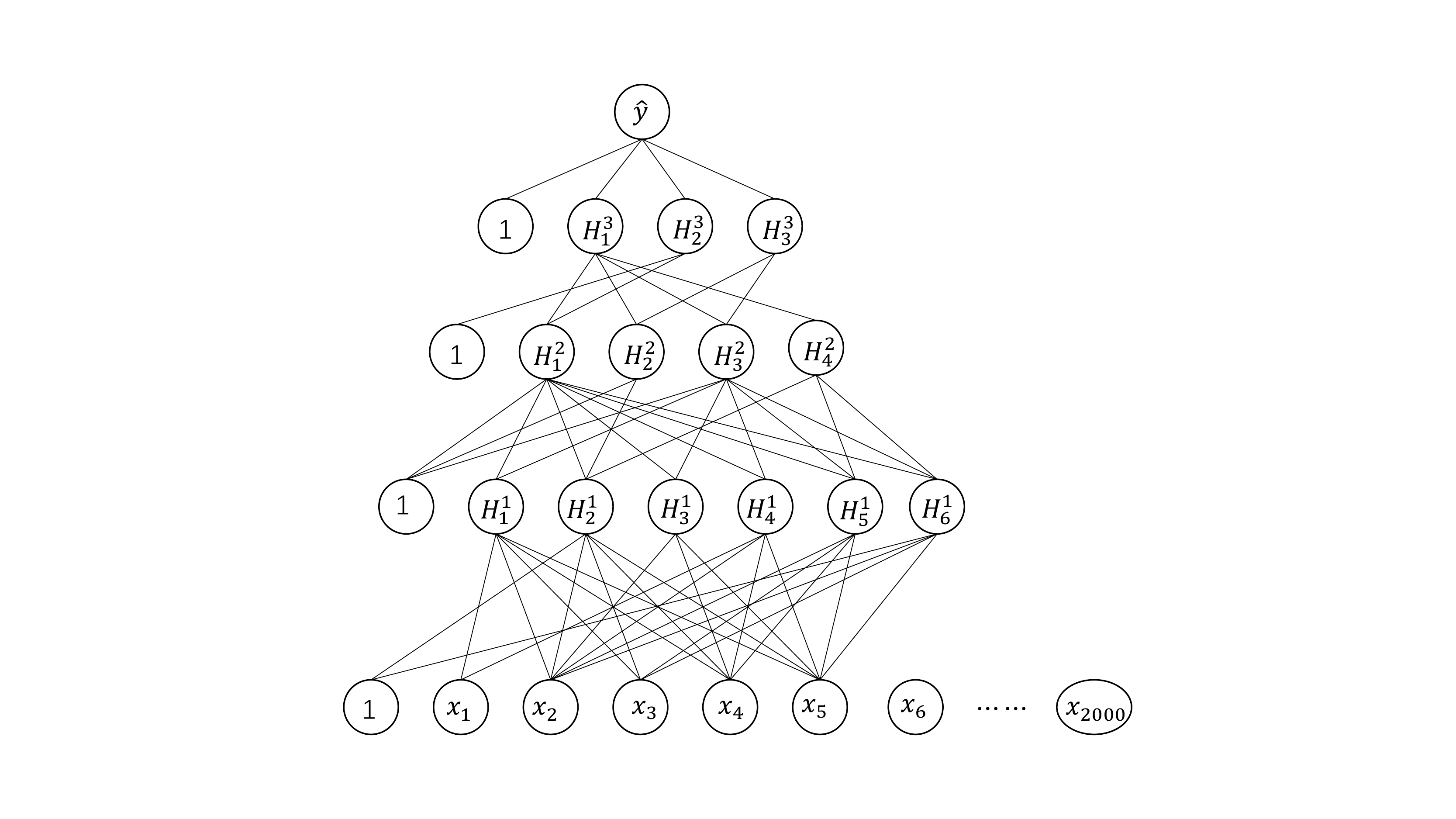}
\caption{A network structure selected by the Bayesian evidence method for a simulated 
 dataset from model (\ref{nonlinear}), where the node `1' denotes the bias term.}
\label{Regression_Structure}
\end{figure}

\begin{table}
\caption{Comparison of different methods for the simulated nonlinear regression example, where MSFE and MSPE were calculated by averaging over 10 datasets with the standard deviation given in the parentheses. }
\label{Simulation}
\begin{center}
\begin{tabular}{cccccccc}
 \toprule
Activation & Method & $|\hat S|$ & FSR & NSR & MSFE & MSPE\\
  \midrule
& BNN & {\bf 5}(0) & {\bf 0} & {\bf 0} & {\bf 2.372}(0.093) & {\bf 2.439}(0.132)\\
& Spinn & 36.1(15.816) & 0.861 & 0 & 3.090(0.194) & 3.250(0.196)\\
\raisebox{1.5ex}{Tanh} & DPF & 55.6(1.002) & 0.910 & 0 & 2.934(0.132) & 3.225(0.524) \\ 
& dropout & --- & --- & --- & 10.491(0.078) & 13.565(0.214) \\
\midrule
& BNN & {\bf 5}(0) & {\bf 0} & {\bf 0} & {\bf 2.659}(0.098) & {\bf 2.778}(0.111)\\
& Spinn & 136.3(46.102) & 0.963 & 0 & 3.858(0.243) & 4.352(0.171)\\
\raisebox{1.5ex}{ReLU} & DPF &    
67.8(1.606) & 0.934 & 0  &  5.893(0.619) & 6.252(0.480) \\
& dropout & --- & --- & --- & 17.279(0.571) & 18.630(0.559) \\ \bottomrule
\end{tabular}
\end{center}
\end{table}

Table \ref{Simulation} indicates that the proposed BNN method significantly outperforms the Spinn and DPF methods in both prediction and 
 variable selection. For this example, BNN
can correctly identify the 5 true variables of the nonlinear regression (\ref{nonlinear}),
while Spinn and DPF identified too many false variables. 
Figure \ref{Regression_Structure} shows the structure of a selected neural network by the BNN method. In terms of prediction, BNN, Spinn and DPF all significantly outperform the dropout method. In our experience, 
when irrelevant features are present in the data, learning a sparse DNN is always rewarded in prediction.

Figure \ref{Evidence_MSPE} explores the relationship between Bayesian evidence and prediction 
accuracy. Since we set the number of tries $T=10$ for each of the 10 datasets, 
there are a total of 100 
pairs of (Bayesian evidence, prediction error) 
shown in the plot. 
The plot shows a strong linear pattern that 
the prediction error of the sparse neural network decreases
as Bayesian evidence increases.
This justifies the rationale of 
Algorithm \ref{evidence}, where Bayesian evidence is employed for eliciting sparse neural 
network models learned by an optimization method in multiple runs with different initializations.  


\begin{figure}
\centering
\includegraphics[scale=0.525]{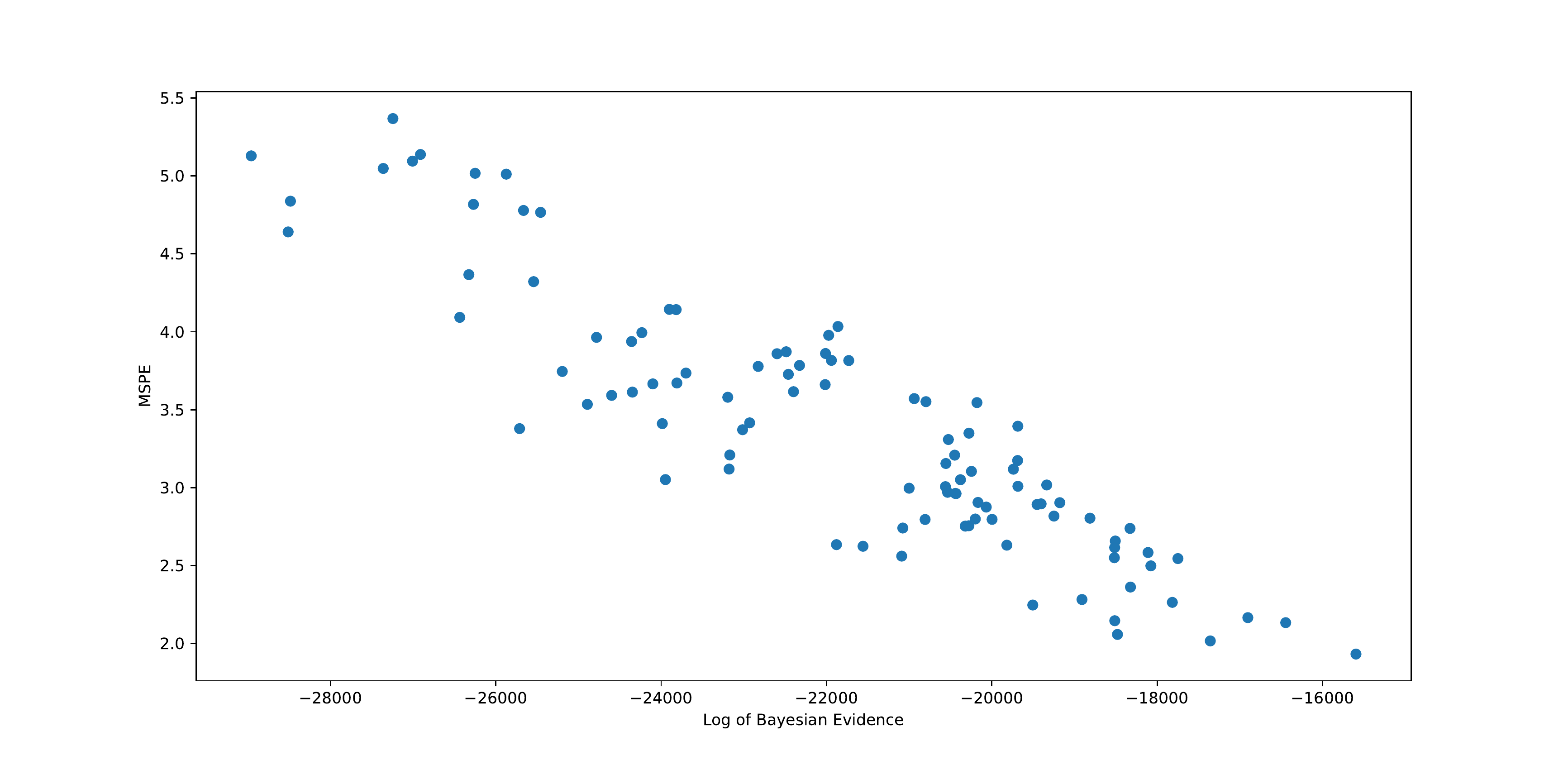}
\caption{Scatter plot of MSPE versus the logarithm of Bayesian Evidence, 
for which the fitted 
OLS regression line  is $y=-2.459\times10^{-4}x-1.993$ with 
 $R^2=0.8037$ and $p$-value = $2.2\times10^{-16}$.}
\label{Evidence_MSPE}
\end{figure}

For this example, we have also compared different methods with the ReLU activation. The same network structure and the same hyperparameter setting were used as in the experiments with the tanh activation.The results are also summarized in Table \ref{Simulation}, which indicate
that the proposed BNN method still significantly outperforms the competing ones.

\subsubsection{Nonlinear Classification}
We generated 10 datasets from the following nonlinear system:
\[
\begin{split}
    y=\begin{cases}
\begin{array}{c}
1\\
0
\end{array} & \begin{array}{c}
e^{x_{1}}+x_{2}^{2}+5\sin(x_{3}x_{4})-3+0x_{5}+\cdots+0x_{1000}>0,\\
 {\rm otherwise}.
\end{array}\end{cases}
\end{split}
\]
Each dataset consisted of half of the observations with the response $y=1$. We modeled the data by a 3-hidden layer logistic regression neural network, which had 6,4,3 hidden units on the first, second and third hidden layers, respectively. 
The tanh was used as the activation function.
The proposed method was compared with Spinn, dropout and DPF. For all the methods, 
the same hyperparameter values were used as in the nonlinear regression example of Section \ref{regsect}. 
The results are summarized in Table \ref{logistic_simulation}, which shows that 
the proposed method can identify the true variable for the nonlinear system 
and make more accurate prediction than Spinn, dropout and DPF. Compared to BNN and Spinn, DPF missed some true variables and produced a larger value of NSR.

\begin{table}
\caption{Comparison of different methods for the simulated classification example, where FA and PA were calculated by averaging over 10 datasets with the standard deviation given in the parentheses. }
\label{logistic_simulation}
\begin{center}
\begin{tabular}{ccccccc}
 \toprule
Method & $|\hat S|$ & FSR & NSR & FA & PA\\
  \midrule
BNN & {\bf 4}(0) & {\bf 0} & {\bf 0} & {\bf 0.8999}(0.0023) & {\bf 0.8958}(0.0039)\\
Spinn & 4.1(0.09) & 0.024 & 0 & 0.8628(0.0009) & 0.8606(0.0036)\\
DPF & 61.9(0.81) & 0.935 & 0.333 & 0.8920(0.0081) & 0.8697(0.0010)\\
dropout & --- & --- & --- & 0.4898(0.0076) & 0.4906(0.0071) \\
 \bottomrule
\end{tabular}
\end{center}
\end{table}

\subsection{Residual Network Compression} \label{RealSection}

This section assessed the performance of the proposed BNN method on network compression 
with CIFAR-10 \citep{krizhevsky2009learning} used as the illustrative dataset.
The CIFAR-10 dataset is a benchmark dataset for computer vision, 
which consists of 10 classes,  50,000 training images, and 10,000 testing images. 
 We modeled the data using both 
 ResNet20 and ResNet32 \citep{DNNRes2016} and 
 then pruned them to different sparsity levels.
We compared the proposed BNN method with DPF \citep{lin2020dynamic}, dynamic sparse reparameterization (DSR) \citep{mostafa2019parameter}, sparse momentum (SM) \citep{dettmers2019sparse}, and Variational Bayes(VB) \citep{Blundell2015}.
All experiments were implemented using Pytorch \citep{paszke2017automatic}. 

In all of our experiments, we followed the same training setup as used in \cite{lin2020dynamic}, i.e.  the model was trained using SGD with momentum for 300 epochs,  the data augmentation strategy \citep{zhong2017random} was employed, the mini-batch size was set to 128, the momentum parameter was set to 0.9, and the initial learning rate was set to 0.1. We divided the learning rate by 10 at epoch 150 and 225.
For the proposed method, we set the number of independent trials $T=10$, and used BIC to elicit sparse networks. In each trial, the mixture normal prior was imposed on the network weights after 150 epochs.
After pruning, the model was retrained for one epoch for refining the nonzero weights. For the mixture normal prior, we set $\sigma_{1,n}^2 = 0.02$ and tried different values for $\sigma_{0,n}$ and $\lambda_n$ to achieve different sparsity levels.
For ResNet-20, to achieve 10\% target sparsity, we set $\sigma_{0,n}^2 = 4e-5$ and $\lambda_n = 1e-6$; to achieve 20\% target sparsity, we set $\sigma_{0,n}^2 = 6e-6$ and $\lambda_n = 1e-7$. 
For ResNet-32, to achieve 5\% target sparsity,  we set $\sigma_{0,n}^2 = 6e-5$ and $\lambda_n = 1e-7$; to achieve 10\% target sparsity, we set $\sigma_{0,n}^2 = 2e-5$ and $\lambda_n = 1e-5$.

Following the experimental setup in \cite{lin2020dynamic}, all experiments were run for 3 times and the averaged test accuracy and standard deviation were reported. For the VB method, we followed \cite{Blundell2015} to impose a mixture Gaussian prior (the same prior as used in our method) on the connection weights, and employed a diagonal multivariate Gaussian distribution to approximate the posterior. 
 As in \cite{Blundell2015}, we ordered the connection weights in the signal-to-noise ratio $\frac{|\mu|}{\sigma}$, and identified a sparse structure by removing the weights with a low signal-to-noise ratio.
The results were summarized in Table \ref{CIFAR_revise}, where the results of other baseline methods were taken from \cite{lin2020dynamic}.
The comparison indicates that the proposed method is able to produce better prediction accuracy than the existing methods at about the same level of sparsity, and that the VB method is not very competitive in statistical inference although it is very attractive in computation.
Note that the proposed method provides a one-shot pruning strategy. As discussed in \cite{han2015learning}, it is expected that these results can be further improved with appropriately tuned hyperparameters and iterative pruning and retraining.


\begin{table}
\caption{Network compression for CIFAR-10 data, where the number in the parentheses denotes the standard deviation of the respective estimate.}
\label{CIFAR_revise}
\begin{center}
\begin{tabular}{ccccccccc} \toprule
     & \multicolumn{2}{c}{ResNet-20} & & 
     \multicolumn{2}{c}{ResNet-32} \\ \cline{2-3}\cline{5-6}
Method &  Pruning Ratio & Test Accuracy & & Pruning Ratio &  Test Accuracy \\
  \midrule
BNN & 19.673\%(0.054\%) & {\bf92.27(0.03)} & &  9.531\%(0.043\%) & {\bf92.74(0.07)}  \\ 
SM  & 20\% & 91.54(0.16) & &  10\% & 91.54(0.18)   \\
DSR & 20\% & 91.78(0.28) & &  10\% & 91.41(0.23) \\
DPF  & 20\% &  92.17(0.21) & &  10\% &  92.42(0.18) \\
VB  & 20\% &  90.20(0.04) & &  10\% &  90.11(0.06) \\
  \midrule
BNN&  9.546\%(0.029\%) & {\bf 91.27(0.05)} && 
4.783\%(0.013\%) & {\bf91.21(0.01)}   \\ 
SM&  10\% & 89.76(0.40) & & 5\% & 88.68(0.22)   \\
DSR&  10\% & 87.88(0.04) & & 5\% & 84.12(0.32)  \\
DPF&  10\% &  90.88(0.07) & & 5\% &  90.94(0.35)  \\

VB&  10\% & 89.33(0.16)  & & 5\% &  88.14(0.04)  \\
\bottomrule
\end{tabular}
\end{center}
\end{table}

The CIFAR-10 has been used  as a benchmark example in many DNN compression experiments. Other than the competing methods considered above, 
 the targeted dropout method \citep{gomez2018targeted} reported a 
Resnet32 model with 47K parameters (90\% sparsity) and the prediction accuracy 91.48\%. 
The Bayesian compression method with a group normal-Jeffreys prior (BC-GNJ) \citep{louizos2017bayesian} reported a VGG16 model, a very deep convolutional neural 
 network model  proposed by \cite{simonyan2014deep}, 
 with 9.2M parameters (93.3\% sparsity) and the prediction accuracy 91.4\%. A comparison with our results reported in
 Table \ref{CIFAR_revise} indicates again the 
 superiority of the proposed BNN method.
 
 Finally, we note that the proposed BNN method belongs to the class of pruning methods and it provides an effective way for learning  sparse  DNNs. Contemporary experience shows that directly training a sparse or small dense network from the start typically converges slower than training with a pruning method, see e.g. \cite{frankle2018lottery}\cite{ye2020good}. This issue can be illustrated using a network compression example.
Three experiments were conducted for a ResNe20 (with 10\% sparsity level) on the CIFAR 10 dataset:
(a) sparse BNN, i.e., running the proposed BNN method 
 with randomly initialized weights; 
(b) starting with sparse network, i.e., training the sparse network learned by the proposed BNN method but with the weights randomly reinitialized;  and 
(c) starting with small dense network, i.e., training a network whose number of parameters in each layer is about the same as that of the sparse network in experiment (b) and whose weights are randomly initialized. The experiment (a) consisted of 400 epochs,  where the last 100 epochs were used for refining the  nonzero-weights of the sparse network obtained at epoch 300 via {\it connection sparsification}. Both the experiments (b) and (c) consisted of 300 epochs. 
In each of the experiments, the learning rate was set in the standard scheme \citep{lin2020dynamic},
i.e., started with 0.1 and then decreased by a factor of 10 at epochs 150 and 225, respectively.  
For random initialization, we used the default method in PyTorch \citep{he2015delving}. For example, for a 2-D convolutional layer with $n_{in}$ input feature map channels, $n_{out}$ output feature map channels, and a convolutional kernel of size $w\times h$, the weights and bias of the layer were initialized by independent draws from the uniform distribution $Unif(-\frac{1}{\sqrt{n_{in}\times w \times h}}, \frac{1}{\sqrt{n_{in}\times w \times h}})$.

Figure \ref{dense_vs_sparse} shows the training and testing paths obtained in the three experiments.
 It indicates that the sparse neural network learned by the proposed method significantly outperforms the other two networks trained with randomly initialized weights.
  This result is consistent with the finding of 
  \cite{frankle2018lottery} that the architectures uncovered by pruning are harder to train from the start and they often reach lower accuracy than the original neural networks.

\begin{figure}
\centering
\includegraphics[scale=0.525]{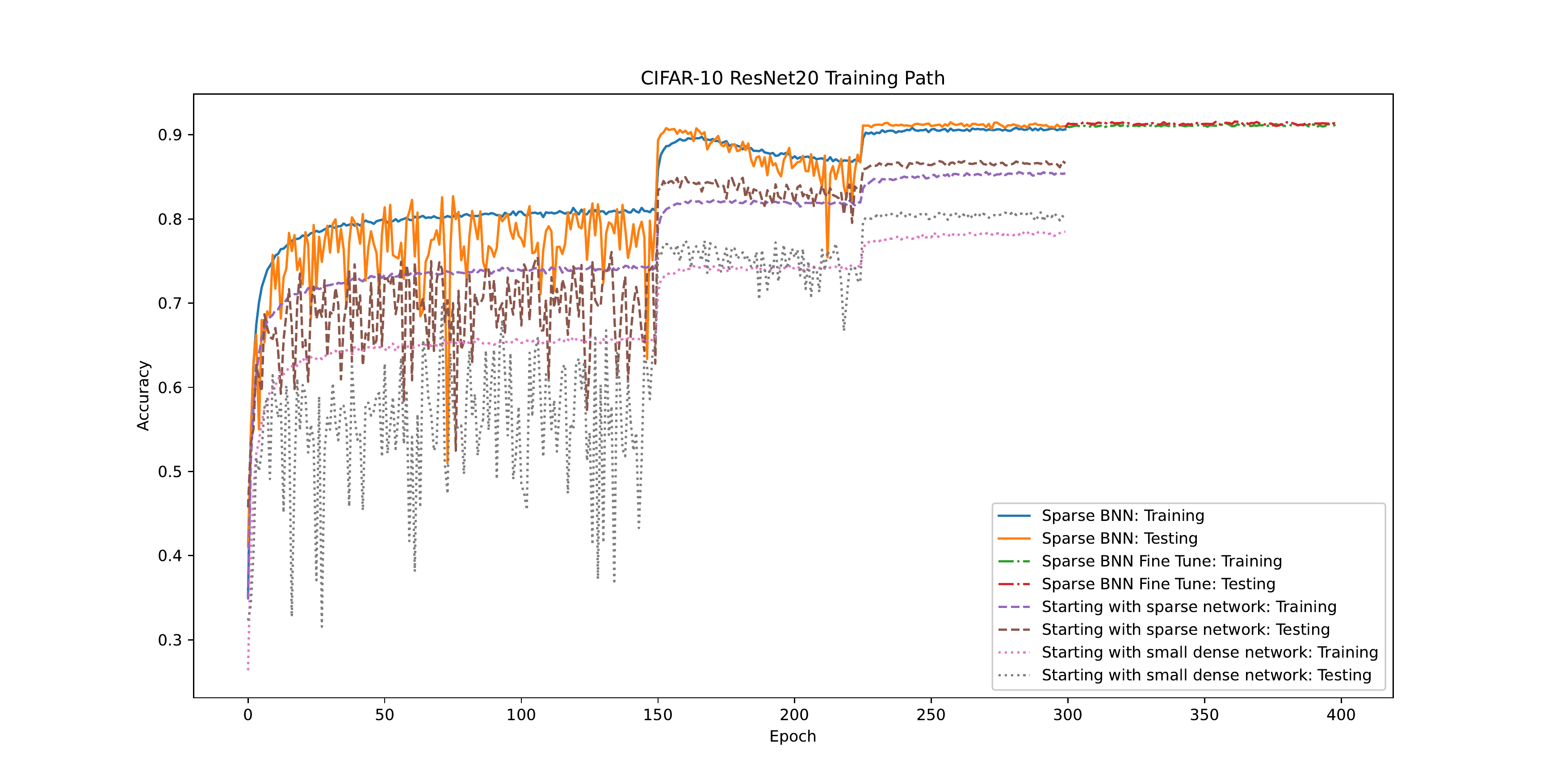}
\caption{Training and testing paths of a ResNet20 model (with 10\% sparsity level) on the CIFAR-10 dataset:
``Sparse BNN: Training'' is the path for fitting the full neural network in experiment (a) (for epochs 1-300);  
 ``Sparse BNN Fine Tune: Training'' is the path for 
refining the nonzero-weights of the sparse network obtained in experiment (a) via {\it connection sparsification} (for epochs 301-400); ``Starting with sparse network: Training'' is the path for fitting a sparse neural network in experiment (b);  and ``Starting with small dense network: Training'' is the path for fitting a small dense neural network in experiment (c). 
The curves for testing can be interpreted similarly. 
 }
\label{dense_vs_sparse}
\end{figure}

 Regarding this experiment, we have further two remarks.  First,  the nonzero weights refining step in Algorithm 1 can consist of a few epochs only.
 This step is mainly designed for a theoretical purpose, ensuring the followed evidence evaluation to be done on a local mode of the posterior. 
 In terms of sparse neural network learning, the mixture Gaussian prior plays a key role in sparsifying the neural network, while the nonzero weights refining step is not essential. As illustrated by Figure 4, this step did not significantly improve the training and testing errors of the sparse neural network. 
 Second, as mentioned previously, 
 the proposed BNN method falls into 
 the class of  pruning methods suggested by \cite{frankle2018lottery} for learning sparse DNNs. Compared to the existing pruning methods, the proposed method is more theoretically sound, which ensures the resulting sparse network to possess nice theoretical properties such as posterior consistency, variable  selection  consistency and asymptotically  optimal  generalization bounds.



\section{Discussion}

This paper provides a complete treatment for sparse DNNs in both theory and computation. The proposed method works like a frequentist method, but is justified under the Bayesian framework. With the proposed method, a sparse neural network with at most $O(n/\log(n))$ connections could be 
learned via sparsifying an over-parameterized one. 
Such a sparse neural network has nice theoretical properties, such as posterior consistency, variable selection consistency, and asymptotically optimal generalization bound.
 
 In computation, we proposed to use Bayesian evidence or BIC for eliciting 
 sparse DNN models learned by an optimization method in multiple runs  
 with different initializations. 
 Since conventional optimization methods such as SGD and Adam \citep{kingma2014adam} can be used to 
 train the DNNs, the proposed method is computationally more efficient than 
 the standard Bayesian method.  
 Our numerical results show that the proposed method  can perform very well in 
 large-scale network compression and high-dimensional nonlinear variable selection.
 The networks learned  by the proposed method tend to predict better than  the existing methods.

  Regarding the number of runs of the optimization method, i.e., the value of $T$, in Algorithm \ref{evidence}, we would note again that Algorithm \ref{evidence} is not very sensitive to it. For example, for the nonlinear regression example, Algorithm \ref{evidence} correctly identified the true variables in each of the 10 runs. 
For the CIFAR-10 example with ResNet20 and 10\% target sparsity level, Algorithm \ref{evidence} achieved the test accuracies in 10 runs: $91.09\%, 91.16\%, 91.17\%, 91.18\%, 91.18\%, 91.24\%$, $91.24\%, 91.24\%, 91.27\%, 91.31\%$ (in ascending order), where the worst one is still better than those achieved by the baseline methods. 

 
In this work, we choose the two-mixture Gaussian prior for the weights and biases of the DNN, 
mainly for the sake of computational convenience.
Other choices, such as two-mixture Laplace prior \citep{rovckova2018bayesian}, which
will lead to the same posterior contraction with an appropriate choice for the prior hyperparameters. To be more specific, Theorem S1 (in the supplementary material) establishes sufficient conditions that guarantee the posterior consistency, and any prior distribution satisfying the sufficient conditions can yield consistent posterior inferences for the DNN.

Beyond the absolutely continuous prior, the hierarchical prior used in \cite{liang2018bayesian} and \cite{PolsonR2018} can 
be adopted for DNNs. To be more precise, one can assume that
\begin{equation} \label{prioreq1}
\boldsymbol{\beta}_{\boldsymbol{\gamma}}\mid\boldsymbol{\gamma}\sim N(0,\sigma_{1,n}^{2}\boldsymbol{I}_{\left|\boldsymbol{\gamma}\right|\times\left|\boldsymbol{\gamma}\right|}),
\quad \bbeta_{\bgamma^c} = 0;
\end{equation}
\begin{equation} \label{prioreq2}
\pi(\boldsymbol{\gamma})\propto\lambda_{n}^{\left|\boldsymbol{\gamma}\right|}\left(1-\lambda_{n}\right)^{K_{n}-\left|\boldsymbol{\gamma}\right|}\boldsymbol{1}\left\{ 1\leq\left|\boldsymbol{\gamma}\right|\leq\bar{r}_{n},\boldsymbol{\gamma}\in\mathcal{G}\right\} ,
\end{equation}
 where $\bbeta_{\bgamma^c}$ is the complement of $\bbeta_{\bgamma}$, $\left|\boldsymbol{\gamma}\right|$ is the number of nonzero
elements of $\boldsymbol{\gamma}$, $\boldsymbol{I}_{\left|\boldsymbol{\gamma}\right|\times\left|\boldsymbol{\gamma}\right|}$
is a $\left|\boldsymbol{\gamma}\right|\times\left|\boldsymbol{\gamma}\right|$
identity matrix, $\bar{r}_{n}$ is the maximally allowed size of candidate networks, 
$\mathcal{G}$ is the set of valid DNNs, and 
the hyperparameter $\lambda_{n}$, as in (\ref{marprior}), 
can be read as an approximate prior probability for
each connection or bias to be included in the DNN.
 Under this prior, the product of the weight or bias and its indicator
follows a discrete spike-and-slab prior distribution, i.e.
\[
{\bw}_{ij}^{h}\bgamma_{ij}^{\boldsymbol{w}^{h}}|\bgamma_{ij}^{\boldsymbol{w}^{h}}\sim\bgamma_{ij}^{\boldsymbol{w}^{h}}N(0,\sigma_{1,n}^{2})+(1-\bgamma_{ij}^{\boldsymbol{w}^{h}})\delta_{0},\ \ 
{\bb}_{k}^{h}\bgamma_{k}^{\boldsymbol{b}^{h}}|\bgamma_{k}^{\boldsymbol{b}^{h}}\sim\bgamma_{k}^{\boldsymbol{b}^{h}}N(0,\sigma_{1,n}^{2})+(1-\bgamma_{k}^{\boldsymbol{b}^{h}})\delta_{0},
\]
where  $\delta_0$ denotes the Dirac delta function. 
Under this hierarchical prior, it is not difficult to show that the posterior consistency and structure selection 
consistency theory developed in this paper still hold. 
However, from the computational perspective, 
the hierarchical prior might be inferior to the mixture Gaussian prior adopted in the paper, 
as the posterior $\pi(\bbeta_{\bgamma}, \bgamma|D_n)$ is hard to be optimized or simulated from.
It is known that directly simulating 
from $\pi(\bbeta_{\bgamma}, \bgamma|D_n)$ using an acceptance-rejection based MCMC algorithm
can be time consuming. A feasible way is to
formulate the prior of $\bbeta_{\bgamma}$ as $\bbeta_{\bgamma}=\btheta\otimes\bgamma$, where $\btheta\sim N(0,\sigma^2_{1,n}I_{H_n\times H_n})$  can be viewed as a latent variable and 
$\otimes$ denotes entry-wise product. Then one can first simulate from the marginal posterior 
$\pi(\btheta|D_n)$ using a stochastic gradient MCMC algorithm and then make inference of 
the network structure based on the conditional posterior $\pi(\bgamma|\btheta,D_n)$. 
We note that the gradient $\nabla_{\btheta}\log\pi(\btheta|D^{n})$ can be approximated 
based on the following identity developed in  \cite{Liang2019eSGLD}, 
\[
\nabla_{\btheta}\log\pi(\btheta|D^{n})=\sum_{\bgamma}\pi(\bgamma|\btheta,D^{n})\nabla_{\btheta}\log\pi(\btheta|\bgamma,D^{n}),
\]
where $D_n$ can be replaced by a dataset duplicated with mini-batch samples 
if the subsampling strategy is used to accelerate the simulation. 
This identity greatly facilitates the simulations for the dimension jumping problems,
which requires only some samples to be drawn from the conditional posterior 
$\pi(\bgamma|\btheta,D^{n})$ for approximating the gradient $\nabla_{\btheta}\log\pi(\btheta|D^{n})$
at each iteration. A further exploration of this discrete prior for its use in deep learning is 
of great interest, although there are some difficulties needing to be addressed 
in computation.





\section*{Acknowledgement}
Liang's research was supported in part by the grants  DMS-2015498, R01-GM117597 and R01-GM126089.
Song's research was supported in part by the grant DMS-1811812. 
The authors thank the editor, associate editors and two referees for their constructive comments which have led to significant improvement of this paper.

\newpage

\section*{Supplementary Material}

\setcounter{table}{0}
\renewcommand{\thetable}{S\arabic{table}}
\setcounter{figure}{0}
\renewcommand{\thefigure}{S\arabic{figure}}
\setcounter{equation}{0}
\renewcommand{\theequation}{S\arabic{equation}}
\setcounter{algorithm}{0}
\renewcommand{\thealgorithm}{S\arabic{algorithm}}
\setcounter{lemma}{0}
\renewcommand{\thelemma}{S\arabic{lemma}}
\setcounter{theorem}{0}
\renewcommand{\thetheorem}{S\arabic{theorem}}
\setcounter{remark}{0}
\renewcommand{\theremark}{S\arabic{remark}}
\setcounter{section}{0}
\renewcommand{\thesection}{S\arabic{section}}

This material is organized as follows. Section S1 gives the proofs on posterior consistency, Section S2 gives the proofs on structure selection consistency, Section S3 gives the proofs on generalization bounds, and Section S4 gives some 
mathematical facts of the sparse DNN. 

\section{Proofs on Posterior Consistency}

\subsection{Basic Formulas of Bayesian Neural Networks}

\paragraph{Normal Regression.} 
 Let $p_{\mu}$ denote the density of $N(\mu, \sigma^2)$ 
 where $\sigma^2$ is a known constant, and let $p_{\bbeta}$ denote 
 the density of $N(\mu(\bbeta,\bx), \sigma^2)$.
 Extension to the case $\sigma^2$ is unknown is simple by following the arguments given in \cite{Jiang2007}.  
 In this case, an inverse gamma prior can be assumed for $\sigma^2$ as suggested by \cite{Jiang2007}. 
 Define the Kullback-Leibler divergence as $d_0(p, p^*)=\int p^* \log(p^*/p)$ for two densities $p$ and $p^*$.  
 Define a distance $d_t(p,p^*)=t^{-1} (\int p^* (p^*/p)^t-1)$ for any $t>0$, which 
 decreases to $d_0$ as $t$ decreases toward 0.
 A straightforward calculation shows 
\begin{equation} \label{normaleq1}
 d_1(p_{\mu_1},p_{\mu_2})= \int p_{\mu_1}(p_{\mu_1}/p_{\mu_1})-1 = \exp\left(\frac{1}{\sigma^2} (\mu_{2}-\mu_{1})^2 \right)-1 
 =\frac{1}{\sigma^2} (\mu_{2}-\mu_{1})^2+o((\mu_{2}-\mu_{1})^3),
\end{equation}
\begin{equation} \label{normaleq2}
 d_0(p_1,p_2)  = \frac{1}{2 \sigma^2} (\mu_1-\mu_2)^2.
 \end{equation} 
 
 \paragraph{Logistic Regression.}  
  Let $p_{\mu}$ denote the probability mass function with the success probability 
  given by $1/(1+e^{-\mu})$. 
  Similarly, we define $p_{\bbeta}$ as the logistic regression density for a binary classification DNN with parameter $\bbeta$. For logistic regression, we have  
 \[
 d_1(p_{\mu_1}, p_{\mu_2})=\int p_{\mu_2}(p_{\mu_2}/p_{\mu_1}) -1= \frac{e^{2 \mu_{2}-\mu_{1}} +e^{\mu_{1}}  
  -2 e^{\mu_{2}} }{(1+e^{\mu_{2}})^2},
 \]
 which, by the mean value theorem, can be written as  
 \[
 d_1(p_{\mu_1}, p_{\mu_2})  =\frac{e^{\mu'}- e^{2 \mu_2-\mu'}}{(1+e^{\mu_2})^2} (\mu_{\mu_1}-\mu_2)  
 = \frac{e^{\mu'} (1-e^{2\mu_2-2\mu'})}{(1+e^{\mu_2})^2} (\mu_{1}-\mu_{2}), 
 \]
 where $\mu'$ denotes an intermediate point between $\mu_{1}$ and $\mu_{2}$, and thus 
 $|\mu'-\mu_2| \leq |\mu_{1}-\mu_2|$.
 Further, by Taylor expansion, we have 
 \[
  e^{\mu'} =e^{\mu_2}[ 1+ (\mu'-\mu_2)+O((\mu'-\mu_2)^2) ], \quad 
  e^{2\mu_2-2\mu'} =1+2(\mu_2-\mu')+O((\mu_2-\mu')^2). 
 \]
 Therefore,
 \begin{equation} \label{logisticeq1}
 d_1(p_{\mu_1}, p_{\mu_2}) \leq  \frac{e^{\mu_2}}{(1+e^{\mu_2})^2}  
 \left[ 2 |\mu_2-\mu'| +O((\mu_2-\mu')^2) \right] |\mu_{1}-\mu_2|
 \leq \frac{1}{2} (\mu_{1}-\mu_2)^2+ O((\mu_{1}-\mu_2)^3),
 \end{equation}
 and
\[
 d_0(p_u, p_v)= \int p_v (\log p_v -\log p_u) v_y(dy) = \log(1+e^{\mu_u})-\log(1+e^{\mu_v}) +
  \frac{e^{\mu_v}}{1+e^{\mu_v}} (\mu_v-\mu_u). 
\]
 By the mean value theorem, we have 
 \begin{equation} \label{logisticeq2}
 d_0(p_u, p_v) =\frac{e^{\mu'}}{1+e^{\mu'}} (\mu_u-\mu_v)+
   \frac{e^{\mu_v}}{1+e^{\mu_v}} (\mu_v-\mu_u) = [ \frac{e^{\mu_v}}{1+e^{\mu_v}} - 
  \frac{e^{\mu'}}{1+e^{\mu'}}] (\mu_v-\mu_u),
 \end{equation}
 where $\mu'$ denotes an intermediate point between $\mu_u$ and $\mu_v$. 

\subsection{Posterior Consistency of General Statistical Models} \label{secA2}

We first introduce a lemma concerning posterior consistency of  
 general statistical models. This lemma has been proved in \cite{Jiang2007}. 
Let $\mP_n$ denote a sequence of sets of probability densities, let $\mP_n^c$ denote
the complement of $\mP_n$, and let $\epsilon_n$ denote a sequence of positive numbers. 
Let $N(\epsilon_n,\mP_n)$ be the minimum number of Hellinger balls of radius $\epsilon_n$ 
 that are needed to cover $\mP_n$, i.e., $N(\epsilon_n, \mP_n)$ is the minimum of all $k$'s
 such that there exist sets $S_j=\{p: d(p,p_j) \leq \epsilon_n\}$, $j=1,\ldots,k$, with 
 $\mP_n \subset \cup_{j=1}^k S_j$ holding, where $d(p,q)=\sqrt{ \int (\sqrt{p}-\sqrt{q})^2 }$ denotes the 
 Hellinger distance between the two densities $p$ and $q$. 

Let $D_n=(z^{(1)}, \ldots, z^{(n)})$ denote the dataset, where the observations $z^{(1)}, \ldots, 
 z^{(n)}$ are iid with the true density $p^*$. The dimension of $z^{(1)}$ and $p^*$ can depend on $n$.
 Define $\pi(\cdot)$ as the prior density, and $\pi(\cdot|D_n)$ as the posterior.
 Define $\hat{\pi}(\epsilon)=\pi[d(p,p^*)>\epsilon|D_n]$ for each $\epsilon>0$. 
 Define the KL divergence 
 as $d_0(p,p^*)=\int p^* \log(p^*/p)$. Define 
 $d_t(p,p^*)=t^{-1} (\int p^*(p^*/p)^t -1)$ for any $t>0$, which decreases to $d_0$ as $t$ 
 decreases toward 0.
 Let $P^*$ and $E^*$ denote the probability measure and expectation for the data $D_n$, respectively.
 Define the conditions:

 \begin{itemize}
 \item[(a)] $\log N(\epsilon_n, \mP_n) \leq n \epsilon_n^2$ for all sufficiently large $n$;
 \item[(b)] $\pi(\mP_n^c) \leq e^{-bn \epsilon_n^2}$ for all sufficiently large $n$;
 \item[(c)] $\pi[p: d_t(p,p^*) \leq b'\epsilon_n^2] \geq e^{- b'n\epsilon_n^2}$ for all sufficiently
            large $n$ and some $t>0$,
\end{itemize}
where $2>b>2b'>0$ are positive constants.
The following lemma is due to the same argument of \citet[][Proposition 1]{Jiang2007}.
\begin{lemma} \label{lem0}
 Under the conditions (a), (b) and (c) (for some $t>0$), given sufficiently large $n$, we have 
 \begin{itemize}
 \item[(i)] $P^*\left[\hat{\pi}(4 \epsilon_n) \geq 2 e^{-0.5n \epsilon_n^2 \min\{1,2-x,b-x,t(x-2b')\} } \right] \leq 
      2e^{-0.5n \epsilon_n^2 \min\{1,2-x,b-x,t(x-2b')\} }$,
\item[(ii)] $E^* \hat{\pi}(4 \epsilon_n) \leq 4 e^{-n \epsilon_n^2 \min\{1,2-x,b-x,t(x-2b')\}}$.
\end{itemize}
for any $2b'<x<b$.
\end{lemma}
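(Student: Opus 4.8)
The plan is to follow the classical Schwartz--Ghosal--Ghosh--van der Vaart testing scheme in the sieve form used by Jiang. Writing the posterior of the ``far'' event as a ratio of integrals against the likelihood ratio,
\[
\hat{\pi}(4\epsilon_n)=\frac{\int_{d(p,p^*)>4\epsilon_n}R_n(p)\,\pi(dp)}{\int R_n(p)\,\pi(dp)},\qquad R_n(p)=\prod_{i=1}^n\frac{p(z^{(i)})}{p^*(z^{(i)})},
\]
I would bound the denominator from below and the numerator from above, each on a high-probability event, and then assemble the two estimates. The four quantities in the exponent $\min\{1,2-x,b-x,t(x-2b')\}$ will appear one at a time from the distinct ingredients, so the bookkeeping of constants is the crux.

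First I would lower bound the denominator $U:=\int R_n\,\pi(dp)$ using condition (c). Let $A=\{p:d_t(p,p^*)\le b'\epsilon_n^2\}$ and let $\tilde{\pi}=\pi(\cdot\cap A)/\pi(A)$ be the prior restricted to $A$ and renormalized. Since $u\mapsto u^{-t}$ is convex, Jensen's inequality gives $(\int_A R_n\,\tilde{\pi})^{-t}\le\int_A R_n^{-t}\,\tilde{\pi}$, and because the very definition of $d_t$ together with the i.i.d.\ structure yields $E^*[R_n^{-t}]=(1+t\,d_t(p,p^*))^n\le e^{nt b'\epsilon_n^2}$ on $A$, taking $E^*$ and using Fubini gives $E^*[(\int_A R_n\,\tilde{\pi})^{-t}]\le e^{nt b'\epsilon_n^2}$. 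A single application of Markov's inequality then shows $\int_A R_n\,\tilde{\pi}\ge e^{-(x-b')n\epsilon_n^2}$ off an event of probability at most $e^{-t(x-2b')n\epsilon_n^2}$; multiplying by $\pi(A)\ge e^{-b'n\epsilon_n^2}$ from (c) gives $U\ge e^{-xn\epsilon_n^2}$ on that event. This is exactly where the exponent $t(x-2b')$ and the constraint $x>2b'$ enter.

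Next I would bound the numerator, splitting the alternative region into its intersection with the sieve $\mathcal{P}_n$ and with $\mathcal{P}_n^c$. Over $\mathcal{P}_n^c$ the change-of-measure identity $E^*[g\,R_n]=E_p[g]$ (with $g\equiv1$ giving $E^*R_n=1$) yields $E^*[\int_{\mathcal{P}_n^c}R_n\,\pi(dp)]=\pi(\mathcal{P}_n^c)\le e^{-bn\epsilon_n^2}$ by (b). Over $\mathcal{P}_n\cap\{d>4\epsilon_n\}$ I would invoke the entropy bound (a): covering $\mathcal{P}_n$ by $N(\epsilon_n,\mathcal{P}_n)\le e^{n\epsilon_n^2}$ Hellinger balls of radius $\epsilon_n$ and using the Le Cam--Birg\'e exponential test against each ball whose center lies in the alternative, the maximum $\phi_n$ of these tests has type-I error $E^*\phi_n\le N(\epsilon_n,\mathcal{P}_n)e^{-2n\epsilon_n^2}\le e^{-n\epsilon_n^2}$ and type-II error $\sup_{p\in\mathcal{P}_n,\,d(p,p^*)>4\epsilon_n}E_p[1-\phi_n]\le e^{-2n\epsilon_n^2}$. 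Writing $\hat{\pi}(4\epsilon_n)\le\phi_n+(1-\phi_n)\hat{\pi}(4\epsilon_n)$ and again using $E^*[(1-\phi_n)R_n]=E_p[1-\phi_n]$ turns the type-II bound into $E^*[(1-\phi_n)\int_{\mathcal{P}_n\cap\{d>4\epsilon_n\}}R_n\,\pi(dp)]\le e^{-2n\epsilon_n^2}$.

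Finally I would intersect the good denominator event with $\{\phi_n=0\}$-type control: on $U\ge e^{-xn\epsilon_n^2}$, dividing the two numerator pieces by $U$ converts $e^{-2n\epsilon_n^2}$ and $e^{-bn\epsilon_n^2}$ into $e^{-(2-x)n\epsilon_n^2}$ and $e^{-(b-x)n\epsilon_n^2}$, while the test contributes its own $e^{-n\epsilon_n^2}$ and the bad denominator event contributes $e^{-t(x-2b')n\epsilon_n^2}$; the four exponents combine into $\min\{1,2-x,b-x,t(x-2b')\}$, giving the expectation bound (ii). Markov's inequality then upgrades (ii) into the in-probability statement (i), the factors $2$ and $4$ arising from the two-term union over the bad denominator event and the test and from a final application of Markov. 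The main obstacle is the middle step: producing the exponentially powerful sieve test $\phi_n$ with precisely these type-I/type-II rates, which requires the single-ball Hellinger testing inequality and a careful match of the constant $4$ in $4\epsilon_n$ against the covering radius $\epsilon_n$ so that the union-bound factor $e^{n\epsilon_n^2}$ from (a) is fully absorbed; tracking every constant so that exactly $\min\{1,2-x,b-x,t(x-2b')\}$ emerges, rather than a smaller exponent, is the delicate part, and is the reason the statement reproduces Jiang's Proposition~1 verbatim rather than a looser bound.
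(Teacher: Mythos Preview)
Your sketch is correct and is precisely the Schwartz--Ghosal--van der Vaart sieve-and-test argument that Jiang (2007, Proposition~1) uses; the paper does not give its own proof of this lemma at all but simply attributes it to Jiang, so there is nothing further to compare against.
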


\subsection{General Shrinkage Prior Settings for Deep Neural Networks}

Let $\bbeta$ denote the vector of parameters, including the weights of connections and the biases of the hidden and output units,  of a deep neural network.
Consider a general prior setting that all entries of $\bbeta$ are subject to independent continuous prior $\pi_b$, i.e., $\pi(\bbeta)=\prod_{j=1}^{K_n}\pi_b(\beta_j)$.
Theorem \ref{thm} provides a sufficient condition for posterior consistency.

\begin{theorem}[Posterior consistency] \label{thm}
Assume the conditions A.1, A.2 and A.3 hold, if the prior $\pi(\bbeta)$ satisfies that 
 \begin{align}
  &\log(1/\underline \pi_b)=O(H_n\log n+\log\overline L),\label{c1}\\
  &\pi_b\{[-\eta_n, \eta_n]\}\geq 1-\frac{1}{K_n}\exp\{-\tau[H_n\log n+\log\overline L+\log p_n]\} \mbox{ and } \pi_b\{[-\eta_n', \eta_n']\}\geq 1-\frac{1}{K_n}, \label{c3}\\
  & -\log[K_n\pi_b(|\beta_j|>M_n)] \succ n\epsilon_n^2,\label{c2} 
 \end{align}
 for some $\tau>0$,
 where $\eta_n< 1/\{\sqrt{n}K_n(n/H_n)^{H_n}(c_0M_n)^{H_n}\}$, 
 $\eta_n'< 1/\{\sqrt{n}K_n(r_n/H_n)^{H_n}(c_0E_n)^{H_n}\}$ with some $c_0>1$, 
 $\underline\pi_b$ is the minimal density value of $\pi_b$ within interval $[-E_n-1, E_n+1]$,  and $M_n$ is some sequence satisfying $\log(M_n)=O(\log(n))$.
Then, there exists a sequence $\epsilon_n$, satisfying $n\epsilon_n^2\asymp r_nH_n\log n+r_n\log\overline L+ s_n\log p_n+n\varpi_n^2$ and $\epsilon_n\prec 1$,
such that 
\begin{equation}\label{post}
\begin{split}
 & P^*\left\{ \pi[d(p_{\bbeta},p_{\mu^*}) > 4 \epsilon_n |D_n] \geq 2 e^{-n c\epsilon_n^2} \right\} 
  \leq 2 e^{-cn \epsilon_n^2},\\
 &  E_{D_n}^* \pi[d(p_{\bbeta},p_{\mu^*}) > 4 \epsilon_n | D_n] \leq 4 e^{-2cn \epsilon_n^2}.
 \end{split}
\end{equation}
for some $c>0$.
\end{theorem}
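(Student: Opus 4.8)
The plan is to invoke Lemma~\ref{lem0} (Proposition~1 of \citet{Jiang2007}) with observations $z^{(i)}=(\bx^{(i)},y^{(i)})$ and true density $p^*=p_{\mu^*}$; the conclusion \eqref{post} is then immediate once a sieve $\mathcal{P}_n$ of neural‑network densities and a rate $\epsilon_n$ with $n\epsilon_n^2\asymp r_nH_n\log n+r_n\log\overline L+s_n\log p_n+n\varpi_n^2$ are produced for which conditions (a)--(c) of the lemma hold. Two deterministic facts will be used throughout. First, by the computations of Section~S1.1, for both the logistic and normal models the quantities $d_0(p_{\mu_1},p_{\mu_2})$, $d_1(p_{\mu_1},p_{\mu_2})$ and the squared Hellinger distance $d^2(p_{\mu_1},p_{\mu_2})$ are all $O(|\mu_1-\mu_2|^2)$ once $|\mu_1-\mu_2|$ is bounded (in particular at the resolutions used below). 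Second, by A.3 and A.1, two networks whose weights and biases are all bounded by $B$ satisfy $|\mu(\bbeta,\bx)-\mu(\bbeta',\bx)|\le (c_0 B)^{H_n}\|\bbeta-\bbeta'\|_1$ up to factors polynomial in $\overline L$, for a universal $c_0>1$; hence whenever $\log B=O(\log n)$ the logarithm of this propagated Lipschitz constant is $O(H_n\log n+\log\overline L)$. So Hellinger closeness of two networks reduces to $\ell_1$‑closeness of their parameters, up to a log‑Lipschitz correction of the correct order.

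For the sieve I would take $\mathcal{P}_n=\{p_\bbeta:\ \bbeta=\bbeta^{(1)}+\bbeta^{(2)},\ \|\bbeta^{(1)}\|_0\le k_n,\ \|\bbeta^{(1)}\|_\infty\le M_n,\ \|\bbeta^{(2)}\|_\infty\le\eta_n\}$, i.e.\ networks that are $k_n$‑sparse up to a uniformly negligible perturbation, with $k_n$ of order $(r_nH_n\log n+r_n\log\overline L+s_n\log p_n)/(H_n\log n+\log\overline L+\log p_n)$ so that $k_n(H_n\log n+\log\overline L+\log p_n)\asymp n\epsilon_n^2$, and $\eta_n,M_n$ as in the statement. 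Condition~(b) is then a binomial‑tail/union‑bound estimate: the event that more than $k_n$ coordinates of $\bbeta$ fall outside $[-\eta_n,\eta_n]$ has prior probability at most $e^{-bn\epsilon_n^2}$ by the first half of \eqref{c3} together with the choice of $k_n$, while the event that some coordinate exceeds $M_n$ has prior probability at most $e^{-bn\epsilon_n^2}$ by \eqref{c2}; combining gives $\pi(\mathcal{P}_n^c)\le e^{-bn\epsilon_n^2}$.

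For condition~(a) I would build an $\epsilon_n$‑Hellinger net of $\mathcal{P}_n$ by: (i) choosing the support of $\bbeta^{(1)}$, organised so that selecting the active first‑layer inputs costs a $\binom{p_n}{s_n}$‑type term and selecting the remaining hidden/output connections costs a $\binom{O(\overline L^2H_n)}{k_n}$‑type term; (ii) covering each box $[-M_n,M_n]^{k_n}$ in $\ell_1$ at resolution $\delta_n\asymp\epsilon_n/(\text{Lipschitz constant})$; and (iii) absorbing the uniformly small part $\bbeta^{(2)}$ into the ball radius, since it moves $\mu$ by at most $K_n\eta_n\cdot(\text{Lip})=o(\epsilon_n)$ by the choice of $\eta_n$. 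Taking logarithms and using the two deterministic facts, $\log N(\epsilon_n,\mathcal{P}_n)\lesssim s_n\log p_n+k_n\log(\overline L^2H_n)+k_n\log(M_n/\delta_n)\lesssim s_n\log p_n+k_n(H_n\log n+\log\overline L+\log p_n)\lesssim n\epsilon_n^2$. I expect this to be the main obstacle: it is precisely the layer‑by‑layer propagation of the Lipschitz constant (yielding the $(c_0M_n)^{H_n}$ factor with $\log$ of the right order) together with the support bookkeeping that isolates the $s_n\log p_n$ contribution of the active inputs — rather than a crude $r_n\log p_n$ that would be too large — which forces the advertised rate.

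For condition~(c) I would start from the true DNN $(\bbeta^*,\bgamma^*)$ of \eqref{trueDNNeq}, which has $r_n$ nonzero coordinates, all bounded by $E_n$, with $\|\mu(\bbeta^*,\bgamma^*,\cdot)-\mu^*\|_{L^2(\Omega)}\le\varpi_n$. By the deterministic facts, $d_1(p_\bbeta,p_{\mu^*})\lesssim\|\mu(\bbeta,\cdot)-\mu(\bbeta^*,\cdot)\|^2+\varpi_n^2\lesssim\epsilon_n^2$ as soon as $\bbeta$ agrees with $\bbeta^*$ to within $\delta_n'\asymp\epsilon_n/(\text{Lip on }[-E_n-1,E_n+1])$ on the $r_n$ true coordinates and is within $\eta_n'$ of zero on all others; so it suffices to lower bound the prior probability of that event. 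The true‑coordinate part contributes at least $(2\underline\pi_b\,\delta_n')^{r_n}$, whose logarithm is $\gtrsim -r_n[\log(1/\underline\pi_b)+\log(1/\delta_n')]\gtrsim -(r_nH_n\log n+r_n\log\overline L)$ by \eqref{c1} (with $\log(1/\epsilon_n)$ of lower order), while the remaining coordinates contribute at least $(1-1/K_n)^{K_n}\gtrsim 1$ by the second half of \eqref{c3}; hence the event has prior mass $\ge e^{-b'n\epsilon_n^2}$. Finally, one picks $\epsilon_n$ with $n\epsilon_n^2\asymp r_nH_n\log n+r_n\log\overline L+s_n\log p_n+n\varpi_n^2$; Assumption A.2.1 makes this $o(n)$ so $\epsilon_n\prec1$, while $n\epsilon_n^2\to\infty$ and $\varpi_n\to0$, and Lemma~\ref{lem0} then yields \eqref{post} with the stated $c>0$.
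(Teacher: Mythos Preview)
Your plan is correct and follows the same route as the paper: verify conditions (a)--(c) of Lemma~\ref{lem0}, using a sieve of near-sparse networks, a binomial tail bound for (b), the true DNN $(\bbeta^*,\bgamma^*)$ for (c), and a layer-by-layer Lipschitz propagation throughout. Two points, however, need tightening.

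First, your ``deterministic fact'' that $|\mu(\bbeta,\bx)-\mu(\bbeta',\bx)|\le(c_0B)^{H_n}\|\bbeta-\bbeta'\|_1$ up to a factor \emph{polynomial} in $\overline L$ is false for dense networks: the width factor is $\overline L^{H_n}$, not polynomial, so the log of the Lipschitz constant would pick up $H_n\log\overline L$ rather than $\log\overline L$. What saves the argument (and what the paper does via its Lemma~\ref{dnn2}) is that in every place you invoke this bound at least one of the two networks is sparse; the width factor then becomes $\prod_i r_i\le(r/H_n)^{H_n}$ with $r$ the total sparsity of that network, and since $r\le n$ throughout (both for $r_n$ in condition~(c) and for the sieve sparsity $k_n$ in condition~(a)) one gets $\log[(r/H_n)^{H_n}]\le H_n\log n$. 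You need to state and use the sparse version of the bound, not the dense one.

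Second, your sieve as written constrains only total sparsity, so the support-selection cost in (a) is $\log\binom{K_n}{k_n}\asymp k_n\log K_n$, not a $\binom{p_n}{s_n}$-type term; your claim that the entropy contains ``$s_n\log p_n$'' is inconsistent with your sieve. The argument still goes through, since $k_n\log K_n\lesssim k_n(H_n\log n+\log\overline L+\log p_n)\asymp n\epsilon_n^2$ by your choice of $k_n$, but the bookkeeping you wrote is off. The paper instead adds a separate input-dimension constraint $|\bgamma_\bbeta|_{in}\le k_n's_n$ to the sieve with its own multiplier $k_n'$, and handles the corresponding complement event via a second binomial tail (Lemma~\ref{lemmad}); that is what isolates the $s_n\log p_n$ contribution explicitly.
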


To prove Theorem \ref{thm}, we first introduce a useful Lemma:
\begin{lemma}[Theorem 1 of \cite{ZubkovS2013}]\label{lemmad}
        Let $X\sim \mbox{B}(n,v)$ be a Binomial random variable. For any $1<k<n-1$,
	\[ 
	Pr(X\geq k+1)\leq 1- \Phi(\mbox{sign}(k-nv)\{2nH(v, k/n)\}^{1/2}),
	\]
	where $\Phi$ is the cumulative distribution function (CDF) of the standard Gaussian distribution and
	$H(v, k/n)= (k/n)\log(k/nv)+(1-k/n)\log[(1-k/n)/(1-v)]$.
\end{lemma}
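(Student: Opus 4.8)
The plan is to prove this binomial tail bound (Theorem 1 of \cite{ZubkovS2013}) by a variation-diminishing comparison of the exact binomial tail with the Gaussian tail, viewed as functions of the success probability $v$. Fix $n$ and $k$ with $1<k<n-1$, write $p=k/n$, and set $\phi(v)=Pr(X\ge k+1)$ and $\psi(v)=1-\Phi(z(v))$ with $z(v)=\mathrm{sign}(p-v)\{2nH(v,p)\}^{1/2}$ (recall $k-nv=n(p-v)$). Because $\sqrt{H(v,p)}\sim|v-p|/\sqrt{2p(1-p)}$ near $v=p$, the factor $\mathrm{sign}(p-v)$ makes $z$, and hence $\psi$, continuously differentiable across the mean; both $\phi$ and $\psi$ are then smooth and strictly increasing on $(0,1)$ with $\phi(0)=\psi(0)=0$ and $\phi(1)=\psi(1)=1$. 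It therefore suffices to prove $\phi(v)\le\psi(v)$ for all $v$.

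First I would differentiate. The incomplete-beta identity $\phi(v)=\frac{n!}{k!(n-k-1)!}\int_0^v t^{k}(1-t)^{n-k-1}\,dt$ gives $\phi'(v)=\frac{n!}{k!(n-k-1)!}v^{k}(1-v)^{n-k-1}$, while $\partial_v H(v,p)=(v-p)/\{v(1-v)\}$ yields $\psi'(v)=\tfrac{1}{\sqrt{2\pi}}e^{-nH}\,\frac{n|v-p|}{v(1-v)\{2nH\}^{1/2}}>0$. Forming $\rho(v)=\phi'(v)/\psi'(v)$ and using $-nH=k\log v+(n-k)\log(1-v)-A$ (with $A=k\log p+(n-k)\log(1-p)$ constant in $v$), the binomial coefficient and the exponential factor collapse into a single $v$-independent constant $K_0$, leaving
\[
\log\rho(v)=K_0+\log v-\log|v-p|+\tfrac12\log\bigl(2nH(v,p)\bigr).
\]
This is the decisive simplification: the coefficient $\frac{n!}{k!(n-k-1)!}$ never enters $\tfrac{d}{dv}\log\rho$, so no Stirling estimate is required.

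Next I would establish that $\rho$ is strictly increasing. Differentiating gives $\tfrac{d}{dv}\log\rho=\frac1v-\frac{1}{v-p}+\frac{v-p}{2v(1-v)H}$, and a short manipulation shows this is positive if and only if $(v-p)\,g(v)<0$, where $g(v)=2H(v,p)-\frac{(v-p)^2}{p(1-v)}$. Since $g(p)=0$, it suffices to show $g$ is strictly decreasing; computing $g'$ and clearing denominators reduces the numerator to the quadratic $v^2-(p+2)v+2p=(v-p)(v-2)$, so that $g'(v)=\frac{(v-p)^2(v-2)}{p\,v(1-v)^2}\le 0$ on $(0,1)$, with equality only at $v=p$. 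Hence $g>0$ on $(0,p)$ and $g<0$ on $(p,1)$, i.e.\ $\rho$ is strictly increasing.

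Finally I would conclude by a single-crossing argument. Both densities integrate to one, $\int_0^1\phi'=\int_0^1\psi'=1$, so $\int_0^1\psi'(\rho-1)\,dv=0$; as $\psi'>0$ and $\rho$ is strictly increasing, $\rho-1$ changes sign exactly once, from negative to positive, at some $v^\ast$. Consequently $R:=\phi-\psi$ satisfies $R(0)=0$, $R'<0$ on $(0,v^\ast)$, $R'>0$ on $(v^\ast,1)$, and $R(1)=0$, so $R$ is U-shaped and attains its maximum $0$ at the endpoints; thus $\phi\le\psi$ on $(0,1)$, which is the claimed inequality. The step I expect to fight hardest with is the monotonicity of $\rho$: a priori one fears it needs sharp two-sided control of $\frac{n!}{k!(n-k-1)!}$, and the real work is recognizing (via the identity above) that this constant drops out, after which the whole argument rests on the elementary fact that $g$ is decreasing. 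Care is also needed to justify the $C^1$ behaviour of $\psi$ at $v=p$ and the bookkeeping of the single crossing.
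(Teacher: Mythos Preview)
Your argument is correct. The variation-diminishing comparison is carried through cleanly: the incomplete-beta derivative for $\phi'$, the chain-rule computation of $\psi'$, and the crucial observation that the combinatorial constant $\frac{n!}{k!(n-k-1)!}$ and the factor $e^{-nH}$ merge into a constant independent of $v$ all check out. The monotonicity of $\rho$ via $g'(v)=\frac{(v-p)^2(v-2)}{pv(1-v)^2}\le 0$ is algebraically right, and the single-crossing argument from $\int_0^1\psi'(\rho-1)\,dv=0$ legitimately yields $\phi\le\psi$. The one delicacy you flag---continuity of $\log\rho$ at $v=p$---is handled by the second-order expansion $H(v,p)\sim\frac{(v-p)^2}{2p(1-p)}$, which makes $-\log|v-p|+\tfrac12\log(2nH)$ finite there; strict monotonicity then follows from $\frac{d}{dv}\log\rho>0$ on $(0,p)\cup(p,1)$ together with continuity.

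As for comparison with the paper: the paper does not prove this lemma at all. It is stated as Lemma~\ref{lemmad} and attributed directly to Theorem~1 of \cite{ZubkovS2013}, then invoked as a black box in verifying condition~(b) of Theorem~\ref{thm} (to bound $\Pr(\mathrm{Binomial}(K_n,v_n)>k_nr_n)$). So you have supplied a self-contained proof where the authors simply cite the external source; your argument is in fact close in spirit to the original Zubkov--Serov proof, which also proceeds by comparing the two tails as functions of $v$ and exploiting a single sign change of the density ratio.
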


{\noindent \bf Proof of Theorem \ref{thm}}

Theorem \ref{thm} can be proved using Lemma \ref{lem0}, so it suffices to verify conditions (a)-(c) given in Section \ref{secA2}.

\noindent {\it Checking condition (c) for $t=1$:}

Consider the set $A=\{\bbeta:\max_{j\in\bgamma^*}\|\beta_j-\beta_j^*\|_\infty\leq \omega_n, \max_{j\notin\bgamma^*}\|\beta_j-\beta_j^*\|_\infty\leq \omega_n'\}$, where  $\omega_n={c_1\epsilon_n}/[H_n(r_n/H_n)^{H_n}(c_0E_n)^{H_n}]$ and $\omega_n'={c_1\epsilon_n}/[K_n(r_n/H_n)^{H_n}(c_0E_n)^{H_n}]$ for some constant $c_1>0$ and $c_0>1$. If $\bbeta\in A$, then by Lemma \ref{dnn2}, we have
$|\mu(\bbeta,\bx)-\mu(\bbeta^*,\bx)| \leq 3{c_1\epsilon_n}$. 
By condition A.2.1, $|\mu(\bbeta,\bx)-\mu^*(\bx)| \leq 3{c_1\epsilon_n}+\varpi_n$.
Combining it with
(\ref{normaleq1})--(\ref{logisticeq2}), for both normal and logistic models, we have
\[
d_1(p_{\bbeta},p_{\mu^*})\leq C(1+o(1)) E_x(\mu(\bbeta,\bx)-\mu^*(\bx))^2\leq C(1+o(1))(3{c_1\epsilon_n}+\varpi_n)^2, \quad\mbox{ if } \bbeta\in A,
\]
for some constant $C$.
Thus for any small $b'>0$,  condition (c) holds as long as that $c_1$ is sufficiently small, $n\epsilon_n^2\geq M_0 n\varpi_n^2$ for large $M_0$, and the prior satisfies
$-\log\pi(A)\leq b'{n\epsilon_n^2}$.

Since
$ \pi(A)\geq (2\underline \pi_b \omega_n)^{r_n} \times \pi(\{\max_{j\notin\bgamma^*}\|\beta_j\|\leq \omega_n'\})$,
$\pi_b([-\omega_n',\omega_n'])\geq 1-1/K_n$ (due to the fact $\omega_n'\gg\eta_n'$),
and $\log (1/\omega_n)\asymp\log(1/\epsilon_n)+H_n\log E_n+
H_n\log(r_n/H_n)+\mbox{constant}=O(H_n\log n)$ (note that $\log(1/\epsilon_n)=O(\log n)$),
the above requirement holds when $n\epsilon_n^2\geq M_0r_nH_n\log n$ for some sufficiently large constant $M_0$.

\noindent {\it Checking condition (a):}

Let $\mP_n$ denote the set of all DNN models whose weight parameter $\bbeta$ satisfies that 
\begin{equation}\label{bset}
\bbeta\in B_n=\{|\beta_j|\leq M_n, \bgamma_\bbeta=\{i: |\beta_i|\geq \delta_n'\} \mbox{ satisfies }|\bgamma_\bbeta|\leq k_nr_n \mbox{ and }
|\bgamma_\bbeta|_{in}\leq k_n's_n\},
\end{equation}
where $|\bgamma|_{in}$ denotes the input dimension of sparse network $\bgamma$, $k_n(\leq n/r_n)$ and $k_n'(\leq n/s_n)$ will be specified later, and $\delta_n=c_1\epsilon_n/[H_n(k_nr_n/H_n)^{H_n}(c_0M_n)^{H_n}]$ and $\delta_n'=c_1\epsilon_n/[K_n(k_nr_n/H_n)^{H_n}(c_0M_n)^{H_n}]$ for some constant $c_1>0$ and $c_0>1$.
Consider two parameter vectors $\bbeta^u$ and $\bbeta^v$ in set $B_n$,
such that there exists a model $\bgamma$ with $|\bgamma|\leq k_nr_n$ and $|\bgamma|_{in}\leq k_n's_n$, and 
$|\beta^u_j-\beta^v_j|\leq \delta_n $ for all $j\in \bgamma$, $\max(|\beta_j^u|,|\beta_j^v|)\leq \delta_n'$ for all $j \notin\bgamma$. Hence, by Lemma \ref{dnn2},
we have that $|\mu(\bbeta^u,\bx)-\mu(\bbeta^v,\bx)|^2 \leq 9c_1^2\epsilon_n^2$, and furthermore, due to (\ref{normaleq1})-(\ref{logisticeq2}),  
we can easily derive that
\[
d(p_{\bbeta^{u}},p_{\bbeta^{v}})\leq \sqrt{d_0(p_{\bbeta^{u}},p_{\bbeta^{v}})}\leq \sqrt{(9+o(1))c_1^2C\epsilon_n^2}\leq \epsilon_n,
\]
for some $C$, given a sufficiently small $c_1$. On the other hand, if some $\bbeta^u\in B_n$ and its connections whose magnitudes are larger than $\delta_n'$ don't form a valid network, then by Lemma \ref{dnn2} and (\ref{normaleq1})-(\ref{logisticeq2}), we also have that
$d(p_{\bbeta^{u}},p_{\bbeta^{o}})\leq\epsilon_n$,
where $\bbeta^o=0$ denotes a empty output network.

Given the above results, one can bound the packing number $N(\mP_n,\epsilon_n)$ by $
\sum_{j=1}^{k_nr_n}\mX^{j}_{H_n}\left(\frac{2M_n}{\delta_n}\right)^{j}$, where 
$\mX^{j}_{H_n}$ denotes the number of all valid networks who has exact $j$ connection and has no more than $k_n's_n$ inputs. Since
$\log\mX^{j}_{H_n}\leq k_n's_n\log p_n+ j\log(k_n's_nL_1+H_n\overline L^2)$, 
\[\begin{split}
\log N(\mP_n,\epsilon_n)&\leq \log k_nr_n+ k_nr_n\log H_n+2k_nr_n\log(\overline L+k_n's_n) 
+k_n's_n\log p_n\\
&+k_n r_n\log\frac{2M_nH_n(k_nr_n/H_n)^{H_n}M_n^{H_n}}{c_1 \epsilon_n}\\
& = k_nr_n*O\{H_n\log n+\log\overline L+\mbox{constant}
\}+k_n's_n\log p_n
\end{split}
\]
where the second inequality is due to $\log M_n=O(\log n)$, $k_nr_n\leq n$ and $k_n's_n\leq n$.
We can choose $k_n$ and $k_n'$ such that 
$k_nr_n\{H_n\log n+\log\overline L \}\asymp k_n's_n\asymp n\epsilon_n^2$ and $\log N(\mP_n,\epsilon_n)\leq n\epsilon_n^2$.

\noindent {\it Checking condition (b):}

$\pi(\mP_n^c) \leq Pr(\mbox{Binomial}(K_n,v_n)>k_nr_n)+K_n\pi_b(|\beta_j|>M_n)+Pr(|\bgamma_\bbeta|_{in}\geq k_n's_n)$,
where $v_n = 1-\pi_b([-\delta_n',\delta_n'])$.
By the condition of $\pi_b$ and the fact that
$\delta_n'\gg\eta_n$, $v_n \leq \exp\{-\tau[H_n\log n+\log\overline L+\log p_n)]-\log K_n\}$ for some positive constant $\tau$. Hence, by Lemma \ref{lemmad},
$
-\log Pr(\mbox{Binomial}(K_n,v_n)>k_nr_n)\approx \tau k_n r_n[H_n\log n+\log \overline L+ \log p_n]\gtrsim n\epsilon_n^2$ due to the choice of $k_n$,
and $
-\log Pr(|\bgamma_\bbeta|_{in}\geq k_n's_n)\approx  k_n' s_n[\tau (H_n\log n+\log\overline L+\log p_n)+\log (K_n/L_1p_n)]\gtrsim n\epsilon_n^2$ due to the choice of $k_n'$.
Thus, condition (b) holds as well.

\subsection{Proof of Theorem 2.1}

\begin{proof}
It suffices to verify the conditions listed in Theorem \ref{thm}.
Let $M_n=\max(\sqrt{2n}\sigma_{1,n}, E_n)$.
Condition (\ref{c1}) is due to $E_n^2/2\sigma_{1,n}^2+\log \sigma_{1,n}^2=O[H_n\log n+\log\overline L]$;
Condition (\ref{c3}) can be verified by 
$\lambda_n=1/\{K_n [n^{H_n}(\overline Lp_n)]^{\tau'}\}$ and
$\sigma_{0,n}\prec 1/\{\sqrt{n}K_n(n/H_n)^{H_n}(c_0M_n)^{H_n}\}$;
Condition (\ref{c2}) can be verified by $M_n\geq 2n\sigma_{0,n}^2$ and 
$\tau[H_n\log n+\log \overline L+\log p_n]+M_n^2/2\sigma_{1,n}^2\geq n$.

\end{proof}

\section{Proofs on Structure Selection Consistency}

\subsection{Proof of Theorem 2.2}
\begin{proof}
 \begin{equation}
  \begin{split}
   &\max|q_i-e_{i|\nu(\bgamma^*,\bbeta^*)}|
   \leq \max \int \sum_{\bgamma} |e_{i|\nu(\bgamma,\bbeta)}-e_{i|\nu(\bgamma^*,\bbeta^*)}|\pi(\bgamma|\bbeta,D_n)\pi(\bbeta|D_n) d\bbeta\\
   =& \max\int_{A(4\epsilon_n)} \sum_{\bgamma} |e_{i|\nu(\bgamma,\bbeta)}-e_{i|\nu(\bgamma^*,\bbeta^*)}|\pi(\bgamma|\bbeta,D_n)\pi(\bbeta|D_n) d\bbeta+\rho(4\epsilon_n)\\
   \leq& \hat\pi(4\epsilon_n)+\rho(4\epsilon_n) \stackrel{p}{\rightarrow} 0,
  \end{split}
 \end{equation}
 where $\stackrel{p}{\to}$ denotes convergence in probability, and 
$\hat\pi(c)$ denotes the posterior probability of the set $A(c)=\{\bbeta: d(p_\bbeta,p_{\mu^*})\geq c\}$. The last convergence is due to the identifiability condition B.1 and the posterior consistency result.  This completes the proof of part (i).

Part (ii) \& (iii):  They are directly implied by part (i).

\end{proof}

\subsection{Proof of Theorem 2.3}

\begin{proof}
Let $\bu=\sqrt{n}(\bbeta-\hat{\bbeta})=(u_1,\ldots,u_{K_n})^T$, and let $g(\bbeta)=nh(\bbeta)-nh(\hat{\bbeta})-\frac{n}{2}\sum h_{i,j}(\hat{\bbeta})(\beta_{i}-\hat{\beta}_{i})(\beta_{j}-\hat{\beta}_{j})$.
It is easy to see that for all $1\leq i_{1},\dots,i_{d}\leq K_{n}$, $g_{i_{1},\dots,i_{d}}(\hat{\bbeta})=0$ 
if $1\leq d\leq2$, and $g_{i_{1},\dots,i_{d}}(\bbeta)=nh_{i_{1},\dots,i_{d}}(\bbeta)$
if $d\geq3$.
 
Consider Taylor's expansions of $b(\bbeta)$  and  $\exp(g(\bbeta))$ at $\hat{\bbeta}$, we have
\[
\begin{split}
b(\bbeta)&=b(\hat{\bbeta})+\sum b_{i}(\hat{\bbeta})(\beta_{i}-\hat{\beta}_{i})+\frac{1}{2}\sum b_{i,j}(\tilde{\bbeta})(\beta_{i}-\hat{\beta}_{i})(\beta_{j}-\hat{\beta}_{j}) \\
=&b(\hat{\bbeta})+\frac{1}{\sqrt{n}}\sum b_{i}(\hat{\bbeta})u_{i}+\frac{1}{2n}\sum b_{i,j}(\tilde{\bbeta})u_{i}u_{j}, 
\end{split}
\]
\[
\begin{split}
e^{g(\bbeta)}&=1+\frac{n}{3!}\sum_{i,j,k} h_{i,j,k}(\hat{\bbeta})(\beta_{i}-\hat{\beta}_{i})(\beta_{j}-\hat{\beta}_{j})(\beta_{k}-\hat{\beta}_{k}) \\
&+\frac{n}{4!}e^{g({\check{\bbeta}})}\sum_{i,j,k,l} h_{i,j,k,l}({\check{\bbeta}})(\beta_{i}-\hat{\beta}_{i})(\beta_{j}-\hat{\beta}_{j})(\beta_{k}-\hat{\beta}_{k})(\beta_{l}-\hat{\beta}_{l}) \\
&=1+\frac{1}{6 \sqrt{n}}\sum h_{i,j,k}(\hat{\bbeta})u_{i}u_{j}u_{k}+\frac{1}{24n}e^{g({\check{\bbeta}})}\sum h_{i,j,k,l}(\check{\bbeta})u_{i}u_{j}u_{k}u_{l}, 
\end{split}
\]
where $\tilde{\bbeta}$ and $\check{\bbeta}$ are two points between $\bbeta$ and $\hat{\bbeta}$.
In what follows, we also treat $\tilde{\bbeta}$ and $\check{\bbeta}$ as functions of $\bu$, while treating $\hat{\bbeta}$ as a constant. 
Let $\phi(\bu) = \det(-\frac{1}{2\pi}H_n(\hat{\bbeta}))e^{\frac{1}{2}\sum h_{i,j}(\hat{\bbeta})u_{i}u_{j}}$ be the centered normal density with covariance matrix $-H_n^{-1}$. Then
\[
\begin{split}
 & \int_{B_{\delta(\bbeta)}}b(\bbeta)e^{nh(\bbeta)}d\bbeta 
 =e^{nh(\hat{\bbeta})}\int_{B_{\delta(\bbeta)}}e^{\frac{n}{2}\sum h_{i,j}(\hat{\bbeta})(\beta_{i}-\hat{\beta}_{i})(\beta_{j}-\hat{\beta}_{j})}b(\bbeta)e^{g(\bbeta)}d\bbeta \\
&=e^{nh(\hat{\bbeta})}\det(-\frac{n}{2\pi}H_{n}(\hat{\bbeta}))^{-\frac{1}{2}}\int_{B_{\sqrt{n}\delta}(0)}\phi(\bu)\left(b(\hat{\bbeta})+\sum\frac{1}{\sqrt{n}}b_{i}(\hat{\bbeta})u_{i}+\frac{1}{2}\frac{1}{n}\sum b_{i,j}(\tilde{\bbeta}(\bu))u_{i}u_{j}\right) \\
& \times \left(1+\frac{1}{6}\frac{1}{\sqrt{n}}\sum h_{i,j,k}(\hat{\bbeta})u_{i}u_{j}u_{k}+\frac{1}{24}\frac{1}{n}e^{g(\tilde{\bbeta}(\bu))}\sum h_{i,j,k,l}(\check{\bbeta}(\bu))u_{i}u_{j}u_{k}u_{l}\right)d \bu \\
& = e^{nh(\hat{\bbeta})}\det(-\frac{n}{2\pi}H_{n}(\hat{\bbeta}))^{-\frac{1}{2}}\int_{B_{\sqrt{n}\delta}(0)}\phi(\bu)(I_1+I_2)d \bu,
\end{split}
\]
where
\[
\begin{split}
    I_1 &= b(\hat{\bbeta}) + \frac{1}{n^{\frac{1}{2}}}\left(\sum b_{i}(\hat{\bbeta})u_{i}+\frac{ b(\hat{\bbeta})}{6}\sum h_{i,j,k}(\hat{\bbeta})u_{i}u_{j}u_{k}\right),\\
    I_2 &= \frac{1}{n}\left(b_{i}(\hat{\bbeta})u_{i}\frac{ 1}{6}\sum h_{i,j,k}(\hat{\bbeta})u_{i}u_{j}u_{k}\right)\\
    &+\frac{1}{2}\frac{1}{n}\sum b_{i,j}(\tilde{\bbeta}(\bu))u_{i}u_{j}\left(1+\frac{1}{6}\frac{1}{\sqrt{n}}\sum h_{i,j,k}(\hat{\bbeta})u_{i}u_{j}u_{k}+\frac{1}{24}\frac{1}{n}e^{g(\tilde{\bbeta}(u))}\sum h_{i,j,k,l}(\check{\bbeta}(\bu))u_{i}u_{j}u_{k}u_{l}\right)\\
    &+\frac{1}{24}\frac{1}{n}e^{g(\tilde{\bbeta}(\bu))}\sum h_{i,j,k,l}(\check{\bbeta}(\bu))u_{i}u_{j}u_{k}u_{l}\left(b(\hat{\bbeta})+\sum\frac{1}{\sqrt{n}}b_{i}(\hat{\bbeta})u_{i}\right),
\end{split}
\]
and we will study the two terms $\int_{ B_{\sqrt{n}\delta}(0)}\phi(\bu)I_1d\bu$ and $\int_{ B_{\sqrt{n}\delta}(0)}\phi(\bu)I_2d\bu$ separately.

To quantify the term $\int_{ B_{\sqrt{n}\delta}(0)}\phi(\bu)I_1d\bu$, we first bound $\int_{\mR^{K_{n}}\setminus B_{\sqrt{n}\delta}(0)}\phi(\bu)I_1d\bu$.
By assumption C.2 and the Markov inequality,
\begin{equation}\label{eql1}
\begin{split}
\int_{\mR^{K_{n}}\setminus B_{\sqrt{n}\delta}(0)}\phi(\bu)d\bu 
= P(\sum_{i=1}^{K_n}U_i^2 > n\delta^2)
\leq \frac{\sum_{i=1}^{K_n}E(U_i^2)}{n\delta^2} 
\leq \frac{r_nM+\frac{C(K_n-r_n)}{K_n^2}}{n\delta^2}
= O(\frac{r_n}{n}),
\end{split}
\end{equation}
where $(U_1,\dots,U_{K_n})^T$ denotes a multivariate normal random vector following  density $\phi(\bu)$.
Now we consider the term $\int_{\mR^{K_{n}}\setminus B_{\sqrt{n}\delta}(0)}\frac{1}{\sqrt{n}}\frac{ b(\hat{\bbeta})}{6}h_{i,j,k}(\hat{\bbeta})u_{i}u_{j}u_{k}\phi(\bu) d\bu$, 
by Cauchy-Schwarz inequality and assumption C.1, we have
\[
\begin{split}
    \left|\int_{\mR^{K_{n}}\setminus B_{\sqrt{n}\delta}(0)}\frac{1}{\sqrt{n}}\frac{ b(\hat{\bbeta})}{6}h_{i,j,k}(\hat{\bbeta})u_{i}u_{j}u_{k} \phi(\bu)d\bu \right|
    &= \left|E \left(\frac{1}{\sqrt{n}}\frac{ b(\hat{\bbeta})}{6} h_{i,j,k}(\hat{\bbeta})U_{i}U_{j}U_{k} \mathbbm{1}\left(\sum_{t=1}^{K_n}U_t^2 > n\delta^2\right)\right)\right|\\
    &\leq\sqrt{\frac{M_1}{n}E(U_i^2U_j^2U_k^2)P(\sum_{t=1}^{K_n}U_t^2 > n\delta^2)}\\
    &=O(\frac{\sqrt{r_n}}{n})\sqrt{E(U_i^2U_j^2U_k^2)}
\end{split}
\]
where $M_1$ is some constant. To bound $E(U_i^2U_j^2U_k^2)$,we refer to Theorem 1 of \cite{li2012gaussian}, which proved that for $1\leq i_{1},\dots,i_{6}\leq K_{n}$,
\[
\begin{split}
E(|U_{i_1}\dots U_{i_6}|)\leq\sqrt{\sum_{\pi}\prod_{j=1}^{6}h^{i_j,i_{\pi(j)}}},
\end{split}
\]
where the sum is taken over all permutations $\pi=(\pi(1),\dots , \pi(6))$ of set $\{1,\dots,6\}$ and $h^{i,j}$ is the $(i,j)$-th element of the covariance matrix $H^{-1}$. Let $m:=m(i_1,\dots,i_d)=|\{j:i_j\in \bgamma^*,j\in\{1,2,\dots,d\}\}|$ count the number of indexes belonging to the true connection set. Then, by condition C.2, we have
\[
\begin{split}
E(|U_{i_1}\dots U_{i_6}|)\leq\sqrt{C_0M^m(\frac{1}{K_n^2})^{6-m}}=O(\frac{1}{K_n^{6-m}}).
\end{split}
\]
The above inequality implies that $E(U_i^2U_j^2U_k^2) = O(\frac{1}{K_n^{6-2m_0}})$, where $m_0=|\{i,j,k\}\cap \bgamma^*|$. Thus, we have 
\begin{equation}\label{eql2}
\begin{split}
    &\left|\int_{\mR^{K_{n}}\setminus B_{\sqrt{n}\delta}(0)}\frac{1}{n^{\frac{1}{2}}}\sum\frac{ b(\hat{\beta})}{6}h_{i,j,k}(\hat{\bbeta})u_{i}u_{j}u_{k}\phi(\bu) d\bu \right|\\
    \leq& \sum_{m_0=0}^3 \left(\begin{array}{c}3\\m_0\end{array}\right)r_n^{m_0}(K_n-r_n)^{3-m_0}O(\frac{1}{K_n^{3-m_0}}\frac{\sqrt{r_n}}{n})
    = O(\frac{r_n^{3.5}}{n}).
\end{split}
\end{equation}
By similar arguments, we can get the upper bound of the term $\left|\int_{\mR^{K_{n}}\setminus B_{\sqrt{n}\delta}(0)}\sum_i(b_i(\hat\bbeta)u_i)/\sqrt{n}\phi(\bu)d\bu\right|$. Thus, we obtain that $\left|\int_{\mR^{K_{n}}\setminus B_{\sqrt{n}\delta}(0)}\phi(\bu)I_1d\bu\right| \leq O(\frac{r_n^{3.5}}{n})$. Due to the fact that $\int_{\mR^{K_{n}}}\phi(\bu)I_1d\bu = b(\hat{\bbeta})$, we have  $\int_{B_{\sqrt{n}\delta}(0)}\phi(\bu)I_1d\bu = b(\hat{\bbeta})+O(\frac{r_n^{3.5}}{n})$.

Due to assumption C.1 and the fact that $b_{i,j}\leq M$, within $B_{\sqrt{n}\delta}(0)$, each term in $I_2$ is trivially bounded by a polynomial of $|\bu|$, such as,
\[
\left|\frac{1}{2}\frac{1}{n}\sum b_{i,j}(\tilde{\bbeta}(\bu))u_{i}u_{j}\frac{1}{24}\frac{1}{n}e^{g(\check{\bbeta}(\bu))}\sum h_{i,j,k,l}(\check{\bbeta}(\bu))u_{i}u_{j}u_{k}u_{l}\right| \leq \frac{1}{48}M^{2}e^{M}\frac{1}{n^{2}}\sum |u_{i}u_{j}|\sum |u_{i}u_{j}u_{k}u_{l}|.
\]
Therefore, there exists a constant $M_0$ such that within $B_{\sqrt{n}\delta}(0)$,
\[
|I_2| \leq M_0\left(\frac{1}{n}\sum|u_i u_j| +\frac{1}{n}\sum|u_i u_j u_k u_l|+\frac{1}{n^{\frac{3}{2}}}\sum|u_i u_j u_k u_l u_s|+\frac{1}{n^2}\sum|u_i u_j u_k u_l u_s u_t|\right) := I_3,
\]
Then we have 
\[
\left|\int_{B_{\sqrt{n}\delta}(0)}\phi(\bu)I_2d\bu\right| \leq
\int_{B_{\sqrt{n}\delta}(0)}\phi(\bu)I_3d \bu \leq \int_{\mR^{K_{n}}}\phi(\bu)I_3d\bu,
\]

By the same arguments as used to bound $E(U_i^2U_j^2U_k^2)$, we can show that $$\int_{\mR^{K_{n}}}\phi(\bu)\frac{1}{n^2}\sum|u_i u_j u_k u_l u_s u_t|d\bu = O(\frac{r_n^6}{n^2})$$ holds. The rest terms in $\int_{\mR^{K_{n}}}\phi(\bu)I_3d\bu$ can be bounded by the same manner, and in the end we have  $\left|\int_{B_{\sqrt{n}\delta}(0)}\phi(\bu)I_2d\bu\right| \leq \int_{\mR^{K_{n}}}\phi(\bu)I_3d\bu = O(\frac{r_n^4}{n})$. 


Then
$\int_{B_{\delta(\bbeta)}}b(\bbeta)e^{nh(\bbeta)}d\bbeta=e^{nh(\hat{\bbeta})}\det(-\frac{n}{2\pi}H_{n}(\hat{\bbeta}))^{-\frac{1}{2}}(b(\hat{\bbeta})+O(\frac{r_n^{4}}{n}))$ holds.
Combining it with condition C.3 and the boundedness of $b$,  we get $\int b(\bbeta)e^{nh_{n}(\bbeta)}d\bbeta=e^{nh(\hat{\bbeta})}\det(-\frac{n}{2\pi}H_{n}(\hat{\bbeta}))^{-\frac{1}{2}}(b(\hat{\bbeta})+O(\frac{r_n^{4}}{n}))$.
With similar calculations, we can get
$\int e^{nh_{n}(\bbeta)}d\bbeta=e^{nh(\hat{\bbeta})}\det(-\frac{n}{2\pi}H_{n}(\hat{\bbeta}))^{-\frac{1}{2
}}(1+O(\frac{r_{n}^{4}}{n}))$.
Therefore,
\[
\frac{\int b(\bbeta)e^{nh_{n}(\bbeta)}d\bbeta}{\int e^{nh_{n}(\bbeta)}d\bbeta}=\frac{e^{nh(\hat{\bbeta})}\det(-\frac{n}{2\pi}H_{n}(\hat{\bbeta}))^{-\frac{1}{2}}(b(\hat{\bbeta})+O(\frac{r_{n}^{4}}{n}))}{e^{nh(\hat{\bbeta})}
\det(-\frac{n}{2\pi}H_{n}(\hat{\bbeta}))^{-\frac{1}{2
}}(1+O(\frac{r_{n}^{4}}{n}))}=b(\hat{\bbeta})+O(\frac{r_n^{4}}{n}).
\]
\end{proof}

\subsection{Proof of Lemma 2.1}

\begin{proof}
Consider the following prior setting: (i) $\lambda_n = K_n^{-(1+\tau')H_n}$,
(ii) $\log \frac{1}{\sigma_{0,n}} = H_n \log(K_n)$,
(iii) $\sigma_{1,n}=1$, and (iv) $\epsilon_n \geq \sqrt{\frac{(\frac{16}{\delta}+16)r_nH_n\log{K_n}}{n}}$.
Note that this setting satisfies all conditions of previous theorems. 
Recall that the marginal posterior inclusion probability is given by 
\[
q_i =\int \sum_{\bgamma} e_{i|\nu(\bgamma,\bbeta)} \pi(\bgamma|\bbeta,D_n)\pi(\bbeta|D_n) d\bbeta
:=\int \pi(\nu(\gamma_i)=1|\beta_i)\pi(\bbeta|D_n)d \bbeta.
\]
For any false connection $c_i\notin \bgamma_{*}$, we have $e_{i|\nu(\bgamma^*,\bbeta^*)}=0$ and 
\[
\nonumber
|q_i|=\int \sum_{\bgamma} |e_{i|\nu(\bgamma,\bbeta)}-e_{i|\nu(\bgamma^*,\bbeta^*)}|\pi(\bgamma|\bbeta,D_n)\pi(\bbeta|D_n)
d\bbeta \leq \hat\pi(4\epsilon_n)+\rho(4\epsilon_n).
\]
A straightforward calculation shows 
\begin{eqnarray}
\nonumber
\pi(\nu(\gamma_i)=1|\beta_i)&=&\frac{1}{1+\frac{1-\lambda_n}{\lambda_n}\frac{\sigma_{1,n}}{\sigma_{0,n}}\exp\left\{-\frac{1}{2} (\frac{1}{\sigma_{0,n}^{2}} - \frac{1}{\sigma_{1,n}^2})\beta_i^2\right\}}\\
\nonumber
&=&\frac{1}{1+\exp\left\{-\frac{1}{2}(K_n^{2H_n}-1)(\beta_i^2-\frac{(4+2\gamma^{'})H_n\log(K_n)+2\log(1-\lambda_n)}{K_n^{2H_n}-1})\right\} }.
\end{eqnarray}
Let $M_n=\frac{(4+2\tau')H_n\log(K_n)+2\log(1-\lambda_n)}{K_n^{2H_n}-1}$. Then, by Markov inequality, 
\begin{eqnarray}
\nonumber
P(\beta_i^2>M_n|D_n)=P\left(\pi(\nu(\gamma_i)=1|\beta_i) >1/2|D_n \right)\leq 2|q_i| \leq 2(\hat\pi(4\epsilon_n)+\rho(4\epsilon_n)).
\end{eqnarray}
Therefore,
\begin{eqnarray}
\nonumber
E(\beta_i^2|D_n) &\leq& M_n + \int_{\beta_i^2>M_n}\beta_i^2 \pi(\beta|D_n)d\beta \\
\nonumber
&\leq& M_n + \int_{M_n<\beta_i^2 < {M_n^{-\frac{2}{\delta}}}}\beta_i^2 \pi(\beta|D_n)d\beta + \int_{\beta_i^2 > {M_n^{-\frac{2}{\delta}}}} M_n{|\beta_i|}^{2+\delta}\pi(\beta|D_n)d\beta \\ 
\nonumber
&\leq& M_n + {M_n^{-\frac{2}{\delta}}}P(\beta_i^2 > M_n) + CM_n.
\end{eqnarray}
Since $\frac{1}{K_n^{2H_n}}\prec M_n \prec \frac{1}{K_n^{2H_n-1}}$,  $\epsilon_n \geq \sqrt{\frac{(\frac{16}{\delta}+16)r_nH_n\log{K_n}}{n}}$,  we have ${M_n^{-\frac{2}{\delta}}} e^{-n\epsilon_n^2/4} \prec \frac{1}{K_n^{2H_n-1}}$.
Thus
\begin{eqnarray}
\nonumber
P^{*}\left\{E(\beta_i^2|D_n) \prec  \frac{1}{K_n^{2H_n-1}}\right\} \geq P^{*}\left(\hat{\pi}(4\epsilon_n) < 2e^{-n\epsilon_n^2/4}\right) \geq 1-2e^{-n\epsilon_n^2/4}.
\end{eqnarray}

\end{proof}

\subsection{Verification of the Bounded Gradient Condition in Theorem 2.3} 

This section shows that with an appropriate choice of prior hyperparameters, the first and second order derivatives of $\pi(\nu(\gamma_i)=1|\beta_i)$ (i.e. the function $b(\bbeta)$ in Theorem 2.3) and the third and fourth order derivatives of $\log \pi(\bbeta)$ 
are all bounded with a high probability. Therefore, 
 the assumption C.1 in Theorem 2.3 is reasonable.

Under the same setting of the prior as that used in the proof of Lemma 2.1, we can  show that the derivative of $\pi(\nu(\gamma_i)=1|\beta_i)$ is bounded with a high probability. For notational simplicity, we suppress the subscript $i$ in what follows and let
\[
f(\beta)=\pi(\nu(\gamma)=1|\beta) = \frac{1}{1+\frac{1-\lambda_n}{\lambda_n}\frac{\sigma_{1,n}}{\sigma_{0,n}}\exp\left\{-\frac{1}{2} (\frac{1}{\sigma_{0,n}^{2}} - \frac{1}{\sigma_{1,n}^2})\beta^2\right\}} := \frac{1}{1+C_2\exp\{-C_1\beta^2\}},
\]
where $C_1 = \frac{1}{2} (\frac{1}{\sigma_{0,n}^{2}} - \frac{1}{\sigma_{1,n}^2})=\frac{1}{2}(K_n^{2H_n}-1)$ and $C_2 = \frac{1-\lambda_n}{\lambda_n}\frac{\sigma_{1,n}}{\sigma_{0,n}}=(1-\lambda_n)K_n^{(2+\tau')H_n}$. Then we have $C_1\beta^2=\log(C_2)+\log(f(\beta))-\log(1-f(\beta))$. With some algebra, we can show that
\[
\left|\frac{d f(\beta)}{d\beta}\right|=2\sqrt{C_1}f(\beta)(1-f(\beta))\sqrt{\log(C_2)+\log(f(\beta))-\log(1-f(\beta))},
\]
\[
\begin{split}
\frac{d^2 f(\beta)}{d\beta^2} = f(\beta)(1-f(\beta))( 2C_1+4C_1(\log(C_2)+\log(f(\beta))-\log(1-f(\beta)) )(1-2f(\beta))).
\end{split}
\]
By Markov inequality and Theorem 2.2, for the false connections, 
\[
P\left\{f(\beta)(1-f(\beta)) > \frac{1}{C_1^2}|D_n\right\} \leq P(f(\beta) > \frac{1}{C_1^2}|D_n) \leq C_1^2E(f(\beta)|D_n)\leq C_1^2(\hat\pi(4\epsilon_n)+\rho(4\epsilon_n))
\]
holds, and for the true connections,
\[
P\left\{f(\beta)(1-f(\beta)) > \frac{1}{C_1^2} |D_n\right\} \leq P(1-f(\beta) > \frac{1}{C_1^2}|D_n) \leq C_1^2E(1-f(\beta)|D_n)\leq C_1^2(\hat\pi(4\epsilon_n)+\rho(4\epsilon_n))
\]
holds. Under the setting of Lemma 2.1, by Theorem 2.1, it is easy to see that $C_1^2(\hat\pi(4\epsilon_n)+\rho(4\epsilon_n)) \rightarrow 0$ as $n\to \infty$, and thus $(f(\beta)(1-f(\beta)) < \frac{1}{C_1^2}$ with high probability. Note that $\frac{\log(C_2)}{C_1} \rightarrow 0$ and $|f(\beta)\log(f(\beta))| < \frac{1}{e}$. Thus, when $(f(\beta)(1-f(\beta)) < \frac{1}{C_1^2}$ holds, 
\[
\left|\frac{d f(\beta)}{d\beta}\right|\leq \sqrt{\frac{\log(C_2)}{C_1}+(f(\beta)(1-f(\beta))\log(f(\beta))-(f(\beta)(1-f(\beta))\log(1-f(\beta))},
\]
is bounded. Similarly we can show that $\frac{d^2 f(\beta)}{d\beta^2}$ is also bounded. In conclusion, $\frac{d f(\beta)}{d\beta}$ and $\frac{d^2 f(\beta)}{d\beta^2}$ is bounded with  probability $P\left\{f(\beta)(1-f(\beta)) \leq \frac{1}{C_1^2} |D_n\right\}$ which tends to 1 as $n\to \infty$.

Recall that $\pi(\beta)=\frac{1-\lambda_n}{\sqrt{2\pi}\sigma_{0,n}}\exp\{-\frac{\beta^2}{2\sigma_{0,n}^2}\} + \frac{\lambda_n}{\sqrt{2\pi}\sigma_{1,n}}\exp\{-\frac{\beta^2}{2\sigma_{1,n}^2}\}$. With some algebra, we can show
\[
\begin{split}
\frac{d^3\log(\pi(\beta))}{d\beta^3}
&=2(\frac{1}{\sigma_{0,n}^2}-\frac{1}{\sigma_{1,n}^2})\frac{df(\beta)}{d\beta} + (\frac{\beta}{\sigma_{0,n}^2}-\frac{\beta}{\sigma_{1,n}^2})\frac{d^2f(\beta)}{d\beta^2} \\
&= 4C_1\frac{df(\beta)}{d\beta} + 2\sqrt{C_1} \frac{d^2f(\beta)}{d\beta^2}\sqrt{\log(C_2)+\log(f(\beta))-\log(1-f(\beta))}.
\end{split}
\]
With similar arguments to that used for $\frac{d^2 f(\beta)}{d\beta^2}$ and $\frac{d^2 f(\beta)}{d\beta^2}$, 
we can make the term $f(\beta)(1-f(\beta))$ very small with a probability tending to 1. 
Therefore, $\frac{d^3\log(\pi(\beta))}{d\beta^3}$ is bounded with a probability tending to 1. Similarly we can bound $\frac{d^4\log(\pi(\beta))}{d\beta^4}$ with a high
probability. 

\subsection{Approximation of Bayesian Evidence} 
\label{sectB.5}

 In Algorithm 1, each sparse model is evaluated by its Bayesian evidence:
\[
{Evidence} = \det(-\frac{n}{2\pi}H_{n}({\bbeta}_{\bgamma}))^{-\frac{1}{2}}e^{nh_{n}(
{\bbeta}_{\bgamma})},
\]
where $H_n({\bbeta}_{\bgamma})=\frac{\partial^2 h_n({\bbeta}_{\bgamma})}{\partial{\bbeta}_{\bgamma}\partial^T{\bbeta}_{\bgamma}}$ is the Hessian matrix, ${\bbeta}_{\bgamma}$ denotes the vector of  connection weights selected by the model $\bgamma$, i.e. $H_n(\bbeta_{\bgamma})$ is a $|\bgamma|\times |\bgamma|$ matrix, and the prior $\pi(\bbeta_{\bgamma})$ in $h_n(\bbeta)$ is only for the connection weights selected by the model $\bgamma$. 
Therefore, 
\begin{equation} \label{BICapprox}
\log(Evidence) = nh_{n}({\bbeta}_{\bgamma}) - \frac{1}{2} |\bgamma|\log(n) + \frac{1}{2} |\bgamma| \log(2\pi) -\frac{1}{2} \log(\det (-H_{n}({\bbeta}_{\bgamma}))),
\end{equation}
and $-H_{n}({\bbeta}_{\bgamma}) = -\frac{1}{n} \sum_{i=1}^{n}\frac{\partial^2 \log(p(y_{i},\bx_{i}|\bbeta_{\bgamma})) }{\partial\bbeta_{\bgamma}\partial^T\bbeta_{\bgamma}}-\frac{1}{n}\sum_{i=1}^{n}\frac{\partial^2 \log(\pi(\bbeta_{\bgamma})) }{\partial\bbeta_{\bgamma}\partial^T\bbeta_{\bgamma}}$. 
For the selected connection weights, the prior $\pi(\bbeta_{\bgamma})$ behaves like $N(0,\sigma_{1,n}^2)$, and then $-\frac{\partial^2 \log(\pi(\bbeta_{\bgamma})) }{\partial\bbeta_{\bgamma}\partial^T\bbeta_{\bgamma}}$ is a diagonal matrix with the diagonal elements approximately equal to $\frac{1}{\sigma_{1,n}^2}$ and $\frac{1}{n\sigma_{1,n}^2} \rightarrow 0$.

If $(y_i,\bx_i)$'s are viewed as i.i.d samples drawn from $p(y,\bx|\bbeta_{\bgamma}))$, then $-\frac{1}{n} \sum_{i=1}^{n}\frac{\partial^2 \log(p(y_{i},\bx_{i}|\bbeta_{\bgamma})) }{\partial\bbeta_{\bgamma}\partial^T\bbeta_{\bgamma}}$ will converge to the Fisher information matrix $ I(\bbeta_{\bgamma}) =  E(-\frac{\partial^2 \log(p(y,\bx|\bbeta_{\bgamma})) }{\partial\bbeta_{\bgamma}\partial^T\bbeta_{\bgamma}})$. If we further assume that $I(\bbeta_{\bgamma})$ has bounded eigenvalues, i.e. $C_{min}\leq \lambda_{min}(I(\bbeta_{\bgamma})) \leq  \lambda_{max}(I(\bbeta_{\bgamma}))\leq C_{max}$ for  some constants $C_{min}$ and $C_{max}$, then $\log(\det (-H_{n}({\bbeta}_{\bgamma}))) \asymp |\bgamma|$.
 This further implies 
 \[
 \frac{1}{2} |\bgamma| \log(2\pi) -\frac{1}{2} \log(\det (-H_{n}({\bbeta}_{\bgamma}))) \prec |\bgamma| \log(n).
 \]
 By keeping only the dominating terms in (\ref{BICapprox}), we have 
\[
\log(Evidence) \approx nh_n(\bbeta_{\bgamma}) - \frac{1}{2} |\bgamma|\log(n) = -\frac{1}{2}BIC + \log(\pi(\bbeta_{\bgamma})).
\]
Since we are comparing a few low-dimensional models (with the model size $|\bgamma|\prec n$), it is intuitive to  further ignore the prior term $\log(\pi(\bbeta_{\bgamma}))$. As a result, we can elicit the low-dimensional sparse neural networks by BIC.

\section{Proofs on Generalization Bounds}

\subsection{Proof of Theorem 2.4} 

\begin{proof}
Consider the set $B_n$ defined in (\ref{bset}). By the argument used in the proof of Theorem \ref{thm}, there exists a class of $\tilde B_n=\{\bbeta^{(l)}: 1\leq l<L\}$ for some $L<\exp\{cn\epsilon_n^2\}$ with a constant $c$ such that for any $\bbeta\in B_n$, there exists some $\bbeta^{(l)}$ satisfying $|\mu(\bbeta,\bx)-\mu(\bbeta^{(l)},\bx)|\leq c'\epsilon_n$. 

Let $\tilde\pi$ be the truncated distribution of $\pi(\bbeta|D_n)$ on $B_n$, and let $\check\pi$ be a discrete distribution on $\tilde B_n$ defined as 
$\check\pi(\bbeta^{(l)}) = \tilde\pi(B_l)$, where $B_l=\{\bbeta\in B_n: \|\mu(\bbeta,\bx)-\mu(\bbeta^{(l)},\bx)\|_{\infty}<\min_{j\neq l} \|\mu(\bbeta,\bx)-\mu(\bbeta^{(j)},\bx)\|_{\infty}\}$ by defining the norm $\|f\|_\infty=\max_{\{\bx\in\Omega\}}f(\bx)$.
Note that for any $\bbeta\in B_l$,  $\|\mu(\bbeta,\bx)-\mu(\bbeta^{(l)},\bx)\|\leq c'\epsilon_n$, thus,
\begin{equation}\label{ge0}
    l_0(\bbeta,\bx,y)\leq l_{c'\epsilon_n/2}(\bbeta^{(l)},\bx,y)\leq l_{c'\epsilon_n}(\bbeta,\bx,y).
\end{equation}

The above inequality implies that 
\begin{equation}\label{ge1}
\begin{split}  
   & \int E_{\bx,y}l_0(\bbeta,\bx,y)d\tilde\pi\leq  \int E_{\bx,y}l_{c'\epsilon_n/2}(\bbeta^{(l)},\bx,y)d\check\pi,\\
     & \int\frac{1}{n}\sum_{i=1}^nl_{c'\epsilon_n/2}(\bbeta^{(l)},\bx^{(i)},y^{(i)})d\check\pi\leq  \int \frac{1}{n}\sum_{i=1}^nl_{c'\epsilon_n}(\bbeta,\bx^{(i)},y^{(i)})d\tilde\pi.
\end{split}
\end{equation}

Let $P$ be a uniform prior on $\tilde B_n$, by Lemma 2.2, with probability $1-\delta$,
\begin{equation}\label{ge2}
\begin{split}
\int E_{\bx,y}l_\nu(\bbeta^{(l)},\bx,y)d\check\pi\leq& \int \frac{1}{n}\sum_{i=1}^nl_\nu(\bbeta^{(l)},\bx^{(i)},y^{(i)})d\check\pi+\sqrt{\frac{d_0(\check\pi,P)+\log\frac{2\sqrt n}{\delta}}{2n}}\\
\leq& \int \frac{1}{n}\sum_{i=1}^nl_\nu(\bbeta^{(l)},\bx^{(i)},y^{(i)})d\check\pi+\sqrt{\frac{cn\epsilon_n^2+\log\frac{2\sqrt n}{\delta}}{2n}},
\end{split}
\end{equation}
for any $\nu\geq0$ and $\delta>0$, \textcolor{black}{where the second inequality is due to the fact that
$d_0(\mathcal L,P)\leq \log L$ for any discrete distribution $\mathcal L$ over $\{\bbeta^{(l)}\}_{l=1}^L$.}

Combining inequalities (\ref{ge1}) and (\ref{ge2}), we have that, with probability $1-\delta$,
\begin{equation}\label{ge3}
     \int E_{\bx,y}l_0(\bbeta,\bx,y)d\tilde\pi\leq  \sqrt{\frac{cn\epsilon_n^2+\log\frac{2\sqrt n}{\delta}}{2n}}+\int \frac{1}{n}\sum_{i=1}^nl_{c'\epsilon_n}(\bbeta,\bx^{(i)},y^{(i)})d\tilde\pi.
\end{equation}
Due to the boundedness of $l_\nu$, we have 
\begin{equation}
    \begin{split}
         & \int E_{\bx,y}l_0(\bbeta,\bx,y)d\pi(\bbeta|D_n)\leq \int E_{\bx,y}l_0(\bbeta,\bx,y)d\tilde\pi+\pi(B_n^c|D_n),\\
        & \int \frac{1}{n}\sum_{i=1}^nl_{c'\epsilon_n}(\bbeta,\bx^{(i)},y^{(i)})d\tilde\pi\leq \frac{1}{1-\pi(B_n^c|D_n)}\int \frac{1}{n}\sum_{i=1}^nl_{c'\epsilon_n}(\bbeta,\bx^{(i)},y^{(i)})d\pi(\bbeta|D_n).
    \end{split}
\end{equation}
Note that the result of Theorem A.1 implies that, with probability at least $1-\exp\{-c''n\epsilon_n^2\}$, $\pi(B_n^c|D_n)\leq 2\exp\{-c''n\epsilon_n^2\}$.
Therefore, with probability greater than $1-\delta-\exp\{-c''n\epsilon_n^2\}$,
\[\begin{split}
    \int E_{\bx,y}l_0(\bbeta,\bx,y)d\pi(\bbeta|D_n)\leq&\frac{1}{1-2\exp\{-c''n\epsilon_n^2\}}\int \frac{1}{n}\sum_{i=1}^nl_{c'\epsilon_n}(\bbeta,\bx^{(i)},y^{(i)})d\pi(\bbeta|D_n)\\
    &+\sqrt{\frac{cn\epsilon_n^2+\log\frac{2\sqrt n}{\delta}}{2n}}+2\exp\{-c''n\epsilon_n^2\}.
\end{split}
\]
Thus, the result holds if we choose $\delta=\exp\{-c'''n\epsilon_n^2\}$ for some $c'''$.
\end{proof}

\subsection{Proof of Theorem 2.5}

\begin{proof} To prove the theorem, we first introduce a lemma on generalization error of finite classifiers, which can be easily derived based on Hoeffding's inequality:

\begin{lemma}[Generalization error for finite classifier]\label{pac-2}
Given a set $B$ which contains $H$ elements, if the estimator $\hat\bbeta$ belongs to $B$ and the loss function $l\in[0,1]$, then with probability $1-\delta$,
\[
E_{\bx,y}l(\hat\bbeta,\bx,y)\leq  \frac{1}{n}\sum_{i=1}^nl(\hat\bbeta,\bx^{(i)},y^{(i)})+\sqrt{\frac{\log H+\log(1/\delta)}{2n}}.
\]
\end{lemma}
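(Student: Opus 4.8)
\emph{Plan.} The target statement, Lemma~\ref{pac-2}, is a standard uniform concentration fact, so the plan is to obtain it from Hoeffding's inequality combined with a union bound over the finite set $B$. First I would fix an arbitrary, \emph{data-independent} element $\bbeta\in B$. Since the observations $(\bx^{(i)},y^{(i)})$, $i=1,\dots,n$, are i.i.d.\ and $l(\bbeta,\bx,y)\in[0,1]$, the values $l(\bbeta,\bx^{(i)},y^{(i)})$ are independent $[0,1]$-valued random variables with common mean $E_{\bx,y}l(\bbeta,\bx,y)$, so Hoeffding's inequality gives, for every $t>0$,
\[
P\!\left(E_{\bx,y}l(\bbeta,\bx,y)-\frac1n\sum_{i=1}^n l(\bbeta,\bx^{(i)},y^{(i)})>t\right)\le e^{-2nt^2}.
\]

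The one subtlety is that $\hat\bbeta$ is data-dependent, so Hoeffding cannot be invoked at $\hat\bbeta$ directly; instead I would make the bound uniform over all of $B$. Taking a union bound over the $H$ elements of $B$ yields
\[
P\!\left(\exists\,\bbeta\in B:\ E_{\bx,y}l(\bbeta,\bx,y)-\frac1n\sum_{i=1}^n l(\bbeta,\bx^{(i)},y^{(i)})>t\right)\le H\,e^{-2nt^2}.
\]
Choosing $t=\sqrt{(\log H+\log(1/\delta))/(2n)}$ makes the right-hand side equal to $\delta$. On the complementary event, of probability at least $1-\delta$, the bound $E_{\bx,y}l(\bbeta,\bx,y)\le \frac1n\sum_{i=1}^n l(\bbeta,\bx^{(i)},y^{(i)})+t$ holds simultaneously for every $\bbeta\in B$, hence in particular for $\hat\bbeta$ because $\hat\bbeta\in B$ by hypothesis; this is precisely the asserted inequality.

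\emph{Main obstacle.} There is no genuine obstacle: the only point requiring care is the data-dependence of $\hat\bbeta$ just discussed, which is exactly what forces the cardinality term $\log H$ into the bound. For completeness, this lemma is what closes the proof of Theorem~\ref{generalization1C}: one applies it with $B=\tilde B_n$, the data-independent $\epsilon_n$-net of cardinality $L<\exp\{cn\epsilon_n^2\}$ constructed in the proof of Theorem~\ref{classbge}, with loss $l_{c'\epsilon_n/2}$ and $\delta=\exp\{-c_0n\epsilon_n^2\}$, so that $\sqrt{(\log L+\log(1/\delta))/(2n)}=O(\epsilon_n)$; since $\hat\bbeta$ lies in the dominating mode $B_n$, it is sup-norm $c'\epsilon_n$-close to some net point $\bbeta^{(l)}$, and the margin sandwich (\ref{ge0}) converts the bound at $\bbeta^{(l)}$ into $L_0(\hat\bbeta)\le L_{emp,c'\epsilon_n}(\hat\bbeta)+O(\epsilon_n)\le L_{emp,\nu}(\hat\bbeta)+O(\epsilon_n)$, the final step using $c'\epsilon_n<\nu$ for all large $n$.
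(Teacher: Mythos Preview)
Your proposal is correct and matches the paper's approach: the paper simply states that the lemma ``can be easily derived based on Hoeffding's inequality,'' and your Hoeffding-plus-union-bound argument is exactly the standard derivation being alluded to. Your additional remarks on how the lemma feeds into Theorem~\ref{generalization1C} also accurately track the paper's own proof of that theorem.
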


Next, let's consider the same sets $B_n$ and $\tilde B_n$ as defined in the proof of Theorem 2.4. Due to the posterior contraction result, 
with probability at least $1-\exp\{-c''n\epsilon_n^2\}$, the estimator $\hat\bbeta\in B_n$. Therefore, there must exist some $\bbeta^{(l)}\in\tilde B_n$ such that 
(\ref{ge0}) holds, which implies that with probability at least $1-\exp\{-c''n\epsilon_n^2\}-\delta$,
\[\begin{split}
L_0(\hat\bbeta)&\leq L_{c'\epsilon_n/2}(\bbeta^{(l)})\leq L_{emp,c'\epsilon_n/2}(\bbeta^{(l)})+\sqrt{\frac{\log H+\log(1/\delta)}{2n}}\\
&\leq L_{emp,c'\epsilon_n}(\hat\bbeta)+\sqrt{\frac{\log H+\log(1/\delta)}{2n}},
\end{split}\]
where the second inequality is due to Lemma \ref{pac-2}, and $H\leq \exp\{cn\epsilon_n^2\}$. 
The result then holds if we set $\delta=\exp\{-cn\epsilon_n^2\}$.
\end{proof}

\subsection{Proof of Theorems 2.6 and 2.7}

The proofs are straightforward and thus omitted. 



\section{Mathematical facts of sparse DNN}
Consider a sparse DNN model with $H_n-1$ hidden layer. Let $L_1,\dots, L_{H_n-1}$ denote the number of node in each hidden layer and $r_i$ be the number of active connections that connect {\it to} the $i$th hidden layer (including the bias for the $i$th hidden layer and weight connections between $i-1$th and $i$th layer). Besides, we let $O_{i,j}(\bbeta, \bx)$ denote the output value of the $j$th node in the $i$th hidden layer

\begin{lemma}\label{dnn1}
Under assumption A.1, if a sparse DNN has at most $r_n$ connectivity (i.e., $\sum r_i=r_n$), and all the weight and bias parameters are bounded by $E_n$ (i.e., $\|\bbeta\|_\infty\leq E_n$), then
the summation of the outputs of the $i$th hidden layer for $1\leq i\leq H_n$ is bounded by 
\[\sum_{j=1}^{L_i}O_{i,j}(\bbeta,\bx)\leq E_n^i\prod_{k=1}^i r_k,\]
where the $H_n$-th hidden layer means the output layer.
\end{lemma}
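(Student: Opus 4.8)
The plan is to prove the (slightly stronger) statement $\sum_{j=1}^{L_i}|O_{i,j}(\bbeta,\bx)|\leq E_n^{\,i}\prod_{k=1}^{i}r_k$ by induction on the layer index $i$, peeling off one layer at a time and tracking how the $r_i$ active connections entering layer $i$ redistribute the $\ell_1$-mass of the previous layer's outputs; the stated inequality without absolute values then follows since $\sum_j O_{i,j}\leq\sum_j|O_{i,j}|$.

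First I would fold the biases into the weights: regard layer $i-1$ as having an extra constant node $O_{i-1,0}\equiv 1$, so the pre-activation of node $j$ in layer $i$ is $\sum_{k\geq0}\tilde w^{i}_{jk}O_{i-1,k}$ with $\tilde w^{i}_{j0}=b^{i}_{j}$; by the definition of $r_i$ the number of active entries $\tilde w^{i}_{jk}$ is exactly $r_i$, and each satisfies $|\tilde w^{i}_{jk}|\leq E_n$. Since $\psi$ is $1$-Lipschitz with $\psi(0)=0$ (true for ReLU and $\tanh$; for bounded activations such as sigmoid the lemma holds trivially, and the general Lipschitz case only changes the constant), we get $|O_{i,j}|\leq\sum_{k\geq0}|\tilde w^{i}_{jk}|\,|O_{i-1,k}|$, hence
\[
\sum_{j=1}^{L_i}|O_{i,j}|\;\leq\;E_n\sum_{(j,k)\text{ active}}|O_{i-1,k}|\;=\;E_n\sum_{k\geq0}a_k\,|O_{i-1,k}|,
\]
where $a_k=\#\{j:\tilde w^{i}_{jk}\text{ active}\}$, so $\sum_{k\geq0}a_k=r_i$. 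For the base case $i=1$ the "previous layer'' is the input, and Assumption A.1 gives $|O_{0,k}|\leq1$ for every $k$ (including the constant node), so $\sum_{j}|O_{1,j}|\leq E_n\sum_k a_k=E_n r_1$. For the inductive step, assume $\sum_{k=1}^{L_{i-1}}|O_{i-1,k}|\leq E_n^{\,i-1}\prod_{\ell=1}^{i-1}r_\ell=:B_{i-1}$; because $E_n=n^{C_1}\geq1$ and each $r_\ell\geq1$ we have $B_{i-1}\geq1$, so $\max_{k\geq0}|O_{i-1,k}|=\max\bigl(1,\max_{k\geq1}|O_{i-1,k}|\bigr)\leq B_{i-1}$. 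Therefore
\[
\sum_{k\geq0}a_k\,|O_{i-1,k}|\;\leq\;\Bigl(\max_{k\geq0}|O_{i-1,k}|\Bigr)\sum_{k\geq0}a_k\;\leq\;B_{i-1}\,r_i,
\]
and combining with the previous display gives $\sum_j|O_{i,j}|\leq E_n r_i B_{i-1}=E_n^{\,i}\prod_{\ell=1}^{i}r_\ell$, which closes the induction; the output layer $i=H_n$ is covered by the same computation with $\psi=\mathrm{id}$, as in \eqref{appeq}.

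The single-node Lipschitz/triangle-inequality bound and the induction mechanics are routine. The one step that needs care—and the main obstacle—is the combinatorial estimate $\sum_{(j,k)\text{ active}}|O_{i-1,k}|\leq r_i\max_k|O_{i-1,k}|$: one must resist the naive bound that would introduce a spurious factor of $L_i$ or $L_{i-1}$, and instead use that the total number of active connections into layer $i$ is capped by $r_i$ irrespective of how they spread over source and target nodes, then pass to the maximum over previous-layer outputs and absorb it into the inductive $\ell_1$-bound via $B_{i-1}\geq1$.
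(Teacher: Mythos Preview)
Your proof is correct and follows essentially the same induction-over-layers approach as the paper: the paper's proof simply records the base case $\sum_j|O_{1,j}|\leq E_nr_1$ and the recursion $\sum_j|O_{i,j}|\leq E_n r_i\sum_j|O_{i-1,j}|$, which are exactly the two facts you derive. Your treatment is more explicit in folding the bias in as a constant node and in passing through $\max_k|O_{i-1,k}|$ (bounded by the inductive $\ell_1$ sum $B_{i-1}\geq 1$) rather than through the sum directly, but these are cosmetic differences leading to the identical bound.
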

\begin{proof}
For the simplicity of representation, we rewrite $O_{i,j}(\bbeta,\bx)$ as $O_{i,j}$ when causing no confusion.
The lemma is the result from the facts that 
\[
\begin{split}
    &\sum_{i=1}^{L_1}|O_{i,j}|\leq r_nE_n,\mbox{ and }
    \sum_{j=1}^{L_i}|O_{i,j}|\leq \sum_{j=1}^{L_{i-1}}|O_{i-1,j}|E_n r_i.
\end{split}
\]
\end{proof}

Consider two neural networks, $\mu(\bbeta,\bx)$ and $\mu(\widetilde{\bbeta},\bx)$, where the formal one is a sparse network satisfying $\|\bbeta\|_0=r_n$ and $\|\bbeta\|_\infty=E_n$, and its model vector is $\bgamma$. If $|\bbeta_i-\tilde\bbeta_i|<\delta_1$ for all $i\in\bgamma$ and $|\bbeta_i-\tilde\bbeta_i|<\delta_2$ for all $i\notin\bgamma$, then
\begin{lemma}\label{dnn2}
\[
\max_{\|\bx\|_\infty\leq 1}|\mu(\bbeta,\bx)-\mu(\widetilde\bbeta,\bx)|\leq 
\delta_1H_n(E_n+\delta_1)^{H_n-1}\prod_{i=1}^{H_n} r_i+\delta_2(p_nL_1+\sum_{i=1}^{H_n}L_i)\prod_{i=1}^{H_n}[(E_n+\delta_1)r_i+\delta_2L_i].\]
\end{lemma}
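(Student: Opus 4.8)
The plan is to split the total weight perturbation into two stages — first the perturbation of the active weights, then the switching-on of the inactive ones — and to treat each stage by a layer-by-layer hybrid argument controlled by Lemma~\ref{dnn1} (for node magnitudes) and by Assumption A.3 (for how an injected perturbation propagates forward, the activation being $1$-Lipschitz). Concretely, introduce the intermediate network $\bbeta'$ that equals $\widetilde\bbeta$ on the support $\bgamma$ and is zero off $\bgamma$; then $\bbeta'$ has the same architecture $\bgamma$ as $\bbeta$ with $\|\bbeta'\|_\infty\le E_n+\delta_1$, and
\[
\mu(\bbeta,\bx)-\mu(\widetilde\bbeta,\bx)=\big(\mu(\bbeta,\bx)-\mu(\bbeta',\bx)\big)+\big(\mu(\bbeta',\bx)-\mu(\widetilde\bbeta,\bx)\big).
\]
I would bound the first parenthesis by the $\delta_1$-term of the claimed inequality and the second by the $\delta_2$-term, and finish by noting that every estimate below is uniform in $\bx$ with $\|\bx\|_\infty\le1$, so the supremum is harmless.

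For the first term, both $\bbeta$ and $\bbeta'$ are supported on $\bgamma$ and their coordinates differ by at most $\delta_1$. Let $\bbeta^{(h)}$ agree with $\bbeta'$ on the weights feeding the first $h$ layers and with $\bbeta$ elsewhere, so that $\mu(\bbeta,\bx)-\mu(\bbeta',\bx)=\sum_{h=1}^{H_n}\big(\mu(\bbeta^{(h-1)},\bx)-\mu(\bbeta^{(h)},\bx)\big)$. At step $h$ the two networks differ only among the weights feeding layer $h$; each of the $\le r_h$ such weights changes by at most $\delta_1$, and the layer-$(h-1)$ outputs are controlled by Lemma~\ref{dnn1} applied with weight bound $E_n+\delta_1$, so the change injected at layer $h$ is at most $\delta_1 r_h\prod_{k<h}(E_n+\delta_1)r_k$. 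By Assumption A.3 this injected change reaches the output with gain at most $\prod_{k>h}(E_n+\delta_1)r_k$. Summing over $h$ and using $r_h\le r_n$ yields $\delta_1 H_n(E_n+\delta_1)^{H_n-1}\prod_{i=1}^{H_n}r_i$.

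For the second term I would use the analogous hybrid over the inactive connections: let $\widetilde\bbeta^{(h)}$ be $\bbeta'$ with all inactive connections feeding layers $1,\dots,h$ switched on (each of magnitude $\le\delta_2$), so consecutive differences involve only inactive connections into one layer $h$. For a fixed target node in layer $h$, the newly activated incoming weights inject a change bounded, by the triangle inequality, by $\delta_2$ times the sum of the layer-$(h-1)$ outputs of $\widetilde\bbeta$; the latter sum I bound by a Lemma~\ref{dnn1}-type estimate for $\widetilde\bbeta$, whose weights feeding layer $k$ are $\le E_n+\delta_1$ on the $\le r_k$ active connections and $\le\delta_2$ on the $\le L_k$ inactive ones, giving $\prod_{k<h}[(E_n+\delta_1)r_k+\delta_2 L_k]$ (the input layer contributing a factor $p_n$ since $\|\bx\|_\infty\le1$); each new bias contributes $\delta_2$ directly. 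This injected change then propagates forward through the still-sparse upper part of $\widetilde\bbeta^{(h)}$ with gain $\le\prod_{k>h}[(E_n+\delta_1)r_k+\delta_2 L_k]$ (again by A.3). Since layer $h$ has $\le L_h$ target nodes and $\le L_h$ new biases, and the input layer contributes the factor $p_nL_1$, bounding every backward-times-forward product by the single product $\prod_{i=1}^{H_n}[(E_n+\delta_1)r_i+\delta_2 L_i]$ gives $\delta_2(p_nL_1+\sum_{i=1}^{H_n}L_i)\prod_{i=1}^{H_n}[(E_n+\delta_1)r_i+\delta_2 L_i]$. Adding the two bounds yields the claim.

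The main obstacle is the bookkeeping in the second stage: getting the combinatorial factor to come out as exactly $p_nL_1+\sum_i L_i$ rather than the cruder $p_nL_1+\sum_i L_{i-1}L_i+\sum_iL_i$. The key is never to expand the inactive weights feeding a layer one at a time, but to bound their aggregate effect on each target node at once by $\delta_2$ times the already-controlled sum of the previous layer's outputs — so the multiplicity is the number of target nodes $L_h$, not the number of weights $L_{h-1}L_h$ — and to check that every product $\big(\prod_{k<h}\big)\big(\prod_{k>h}\big)$ is dominated by the full product $\prod_{k=1}^{H_n}$, which uses that each factor $(E_n+\delta_1)r_k+\delta_2L_k\ge1$. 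The rest is routine telescoping.
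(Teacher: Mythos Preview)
Your proposal is correct and follows essentially the same route as the paper: the same intermediate network $\check\bbeta=\bbeta'$ (equal to $\widetilde\bbeta$ on $\bgamma$ and zero off it), the same split into a $\delta_1$-piece and a $\delta_2$-piece, and the same layer-wise Lipschitz propagation controlled by Lemma~\ref{dnn1}. The only cosmetic difference is that the paper derives and unrolls a direct recursion for $\sum_j|\check O_{i,j}-O_{i,j}|$ and $\sum_j|\widetilde O_{i,j}-\check O_{i,j}|$, whereas you telescope over a sequence of layer-by-layer hybrid networks; these two bookkeeping schemes are equivalent and produce the same final bound.
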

\begin{proof}
Define $\check\bbeta$ such that 
 $\check\bbeta_i=\tilde\bbeta_i$ for all $i\in\bgamma$ and $\check\bbeta_i=0$ for all $i\notin\bgamma$.
Let $\check O_{i,j}$ denote $O_{i,j}(\check\bbeta,\bx)$.
Then,
\[
\begin{split}
|\check O_{i,j}-O_{i,j}|&\leq \delta_1\sum_{j=1}^{L_{i-1}}|O_{i-1,j}|+E_n\sum_{j=1}^{L_{i-1}}|\check O_{i-1,j}-O_{i-1,j}|+\delta_1\sum_{j=1}^{L_{i-1}}|\check O_{i-1,j}-O_{i-1,j}|\\
&\leq \delta_1\sum_{j=1}^{L_{i-1}}|O_{i-1,j}|+(E_n+\delta_1)\sum_{j=1}^{L_{i-1}}|\check O_{i-1,j}-O_{i-1,j}|.
\end{split}
\]
This implies a recursive result
\[
\sum_{j=1}^{L_i}|\check O_{i,j}-O_{i,j}|
\leq r_i(E_n+\delta_1)\sum_{j=1}^{L_{i-1}}|\check O_{i-1,j}-O_{i-1,j}|+r_i\delta_1\sum_{j=1}^{L_{i-1}}|O_{i-1,j}|.
\]
Due to Lemma \ref{dnn1}, $\sum_{j=1}^{L_{i-1}}|O_{i-1,j}|\leq E_n^{i-1}r_1\cdots r_{i-1}$. Combined with the fact that 
$\sum_{j=1}^{L_1}|\check O_{1,j}-O_{1,j}|\leq \delta_1r_1$,
one have that 
\[
|\mu(\bbeta,\bx)-\mu(\check\bbeta,\bx)|=\sum_j|\check O_{H_n,j}-O_{H_n,j}|\leq \delta_1H_n(E_n+\delta_1)^{H_n-1}\prod_{i=1}^{H_n} r_i.
\]
Now we compare $\widetilde O_{i,j}:=\mu(\widetilde\bbeta,\bx)$ and $\check O_{i,j}$.
We have that
\[
\sum_{j=1}^{L_i} |\widetilde O_{i,j}-\check O_{i,j}|\leq 
\delta_2L_i\sum_{j=1}^{L_{i-1}} |\widetilde O_{i-1,j}-\check O_{i-1,j}|+\delta_2L_i\sum_{j=1}^{L_{i-1}} |\check O_{i-1,j}|+r_i(E_n+\delta_1)\sum_{j=1}^{L_{i-1}} |\widetilde O_{i-1,j}-\check O_{i-1,j}|,
\]
and 
\[
\sum_{j=1}^{L_1} |\widetilde O_{1,j}-\check O_{1,j}|\leq \delta_2p_nL_1.
\]
Due to Lemma \ref{dnn1}, we also have that $\sum_{j=1}^{L_{i-1}}|\check O_{i-1,j}|\leq (E_n+\delta_1)^{i-1}r_1\cdots r_{i-1}$. 
Together, we have that 
\[\begin{split}
    &|\mu(\widetilde\bbeta,\bx)-\mu(\check\bbeta,\bx)|=\sum_j|\widetilde O_{H_n,j}-\check O_{H_n,j}|\\
    \leq&\delta_2(p_nL_1+\sum_{i=1}^{H_n}L_i)\prod_{i=1}^{H_n}[(E_n+\delta_1)r_i+\delta_2L_i].
\end{split}
\]
The proof is concluded by summation of the bound for $|\mu(\bbeta,\bx)-\mu(\check\bbeta,\bx)|$ and $|\mu(\widetilde\bbeta,\bx)-\mu(\check\bbeta,\bx)|$.
\end{proof}


\bibliographystyle{asa}
\bibliography{Reference}

\end{document}